\setlist[enumerate]{itemsep=0mm}
\begin{document}

\title{Investigating the Histogram Loss in Regression}

\author{\name Ehsan Imani \email imani@ualberta.ca \\
       \name Kai Luedemann\thanks{These authors contributed equally.} \thanks{The author is currently affiliated with the University of Tübingen and Zuse School ELIZA.} \email kluedema@ualberta.ca \\
       \name Sam Scholnick-Hughes\footnotemark[1] \email scholnic@ualberta.ca \\
       \name Esraa Elelimy\footnotemark[1] \email elelimy@ualberta.ca \\
       \name Martha White \email whitem@ualberta.ca \\
       \addr 
       Alberta Machine Intelligence Institute (Amii) and\\
       Reinforcement Learning and Artificial Intelligence Laboratory\\
       Department of Computing Science, University of Alberta\\
       Edmonton, Alberta, Canada T6G 2E8}

\editor{Kilian Weinberger}

\maketitle

\begin{abstract}It is becoming increasingly common in regression to train neural networks that model the entire distribution even if only the mean is required for prediction. This additional modeling often comes with performance gains, and the reasons behind the improvement are not fully known. This paper investigates a recent approach to regression, the \emph{histogram loss}, which involves learning the conditional distribution of the target variable by minimizing the cross-entropy between a target distribution and a flexible histogram prediction. The resulting loss corresponds to a classification loss: a cross-entropy  between the outputs and a smoothed label vector. We design theoretical and empirical analyses to determine why and when this performance gain appears and how different components of the loss contribute to it. Our results suggest that the benefits of learning distributions in this setup come from improvements in optimization rather than modeling extra information. We then demonstrate the viability of the histogram loss in common deep learning applications without the need for costly hyperparameter tuning.\footnote{An implementation is available at \url{https://github.com/marthawhite/Histogram_loss}.}
\end{abstract}

\begin{keywords}
  Neural Networks, Regression, Conditional Density Estimation
\end{keywords}

\section{Introduction}

In machine learning, the loss function is a crucial design choice that can influence the optimization landscape and the quality of solutions found by gradient descent. For regression tasks, where the goal is to predict a continuous target $Y$ from inputs $\xvec$, a common choice is the squared-error loss ($\ell_2$). This loss is theoretically motivated, as minimizing $\ell_2$ corresponds to maximum likelihood estimation when the conditional distribution of $Y \mid \xvec$ is Gaussian with fixed variance.

However, $\ell_2$ is not the only choice for regression. There are also statistically motivated alternatives. For example, loss functions such as $\ell_1$ loss, Huber loss, and Charbonnier loss have shown improvements in the presence of outliers \citep{huber2011robust,charbonnier1994two,barron2017amore}. In deep learning, the loss also plays a role in training by influencing the geometry of the optimization landscape. For nonconvex problems, a loss that is more easily optimized may allow gradient descent to converge to a better solution.

A more radical alternative is to reparameterize the model to capture a fuller picture of the output distribution. Examples are predicting the means, variances, and coefficients of a multi-modal mixture of Gaussian distributions \citep{bishop1994mixture}, weights of a histogram \citep{rothe2015dex}, quantiles of a distribution \citep{dabney2018distributional}, and a generative model for sampling from the distribution \citep{dabney2018implicit}. In the deep learning literature, such reparameterizations have been beneficial in applications such as age estimation, depth estimation, and value prediction for decision-making \citep{gao2017deep,reading2021categorical,bellemare2017distributional,hessel2017rainbow,bellemare2023distributional}. Understanding why these approaches can be superior to the squared loss, even when only the mean of the predicted distribution is needed, is an ongoing endeavor, and a common explanation is that the extra difficulty helps the network learn a better representation \citep{bellemare2017distributional,lyle2019comparative,lyle2021effect,lan2022generalization}.

\begin{wrapfigure}[15]{r}{0.49\textwidth}
\centering
 \begin{subfigure}[b]{0.5\textwidth}
         \includegraphics[width=\textwidth]{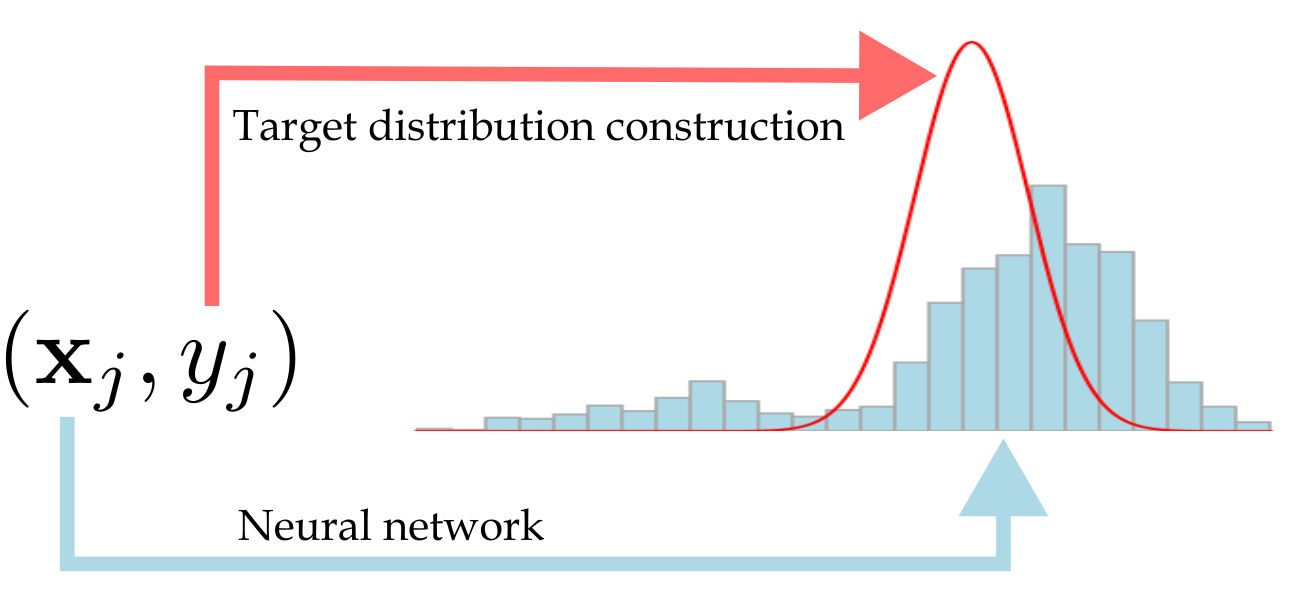}
\end{subfigure}
\caption{Training with HL. The red curve is the target distribution, and the blue histogram is the prediction distribution. The neural network is trained to minimize the cross-entropy between the two.}
\label{fig:hl_vis}
\end{wrapfigure}

A shorter version of this paper by \cite{imani2018improving} proposed a family of loss functions called the \emph{histogram loss} (HL). In short, a model trained with HL outputs a \textit{prediction distribution}, a histogram parameterized by a softmax function. Each scalar target in the training data is converted to a \textit{target distribution}, and the model aims to match the histogram to the target distribution by reducing the cross-entropy between the two. The process is shown in Figure \ref{fig:hl_vis}. This loss was motivated by benefits found in distributional reinforcement learning (RL) but investigated in supervised learning to better understand the phenomenon. 
\cite{imani2018improving} motivated that HL might provide performance improvements because of better-behaved gradient norms and provided a small-scale empirical study towards understanding why HL can outperform the squared error loss.
Recent work by~\cite{pmlr-v235-farebrother24a} has shown the utility of HL beyond supervised learning tasks. They showed that using HL for training the value function in RL resulted in a consistent performance improvement across many tasks and improved the scalability of deep RL methods with large networks.

In this paper, we expand the theoretical understanding of this loss, delve deeper empirically into the reasons behind the benefits of HL, explore the viability of HL in larger-scale deep learning tasks, and provide rules of thumb for its hyperparameters. We will also include the results by \cite{imani2018improving} for completeness. The following three paragraphs summarize the novel contributions.

For theoretical motivation, \cite{imani2018improving} proved an upper bound on the local Lipschitz constant for HL, showing it is smaller than for $\ell_2$; it is known that a smaller Lipschitz constant improves generalization performance with stochastic gradient descent. This previous work also highlighted a connection to maximum entropy reinforcement learning, which can promote efficient search. In this work, we focus on characterizing the \emph{discretization bias} introduced by moving from the $\ell_2$ loss to HL. For a larger number of output bins, the bias is negligible, and it increases with a small number of bins. We bound the bias based on bin width and further show that, even with a small number of bins, bias can be made small by appropriately selecting a variance parameter. We also motivate HL as a surrogate objective by showing that minimizing HL reduces an upper bound on the squared difference between the predictions and the targets.

We conduct a larger empirical study compared to \cite{imani2018improving} to test hypotheses regarding the performance of HL. On four regression data sets, we test
\begin{enumerate}
\item if the choice of target distribution has a major impact on the bias of HL that can explain the difference in error rates, 
\item if properties of the histogram prediction and target distribution give rise to a bias-variance trade-off,
\item if the challenge of predicting a full distribution forces the model to learn a better representation, 
\item if switching from the $\ell_2$ loss to HL 
makes the landscape smoother
\item if HL is more robust to corrupted targets in the data set, and 
\item if HL finds a model whose output is less sensitive to input perturbations. 
\end{enumerate}

Finally, we show the viability of HL on large-scale time series and value prediction tasks with different architectures and provide heuristics for choosing the hyperparameters of the loss, namely the number of bins and the amount of padding in the histogram and the degree of smoothing in the target distribution. We show that a fixed set of recommended hyperparameter values works well in a diverse set of experiments. Overall, this paper provides more insights into HL and clarifies some of the choices within this loss to make it easy to use more broadly for regression.

\section{Distributional Losses for Regression} \label{sec:dist_loss}

In this section we introduce the histogram loss (HL). At a high level, defining and using HL is simple: 1) the user selects the number of outputs $k$ (bins) to discretize the output space, 2) the regression labels $y$ in the data set are converted to classification labels (probability vectors of length $k$ that sum to one), and 3) the standard cross-entropy loss is used between the $k$ outputs and these new targets. The learned network is used for prediction by using the weighted average of outputs with their corresponding bin centers. We first introduce how this loss is constructed, explaining how to define these new classification labels. The construction is defined by the choice of target distribution. We advocate for a Gaussian target distribution as a suitable default in this work but include other examples (e.g., uniform) both to better understand the loss construction and because later we empirically compare these alternatives. This section only explains the construction of this loss; later sections motivate why this loss is useful.

\subsection{Defining the Histogram Loss}

Consider predicting a continuous target $Y$ with event space $\mathcal{Y}$, given inputs $\xvec$. Recall that directly predicting $Y$ is possible by minimizing the squared error on samples $(\xvec_j, y_j)$ where input $\xvec_j \in \RR^{\xdim}$ is associated with target $y_j \in \RR$. Instead of directly predicting $Y$, we can learn a distribution on $Y | \xvec$ whose expected value will be used as the prediction. Assume we have samples $(\xvec_j, \td_j)$, where each input $\xvec_j$ is associated with a target distribution with the probability density function (pdf) $\td_j: \mathcal{Y} \rightarrow [0,\infty)$. We would like to learn a parameterized prediction distribution $\pd: \mathcal{Y} \rightarrow [0,\infty)$, conditioned on $\xvec$, by minimizing a measure of difference between $\pd$ and $\td$.

\begin{wrapfigure}[20]{r}{0.35\textwidth}
\centering
 \begin{subfigure}[b]{0.35\textwidth}
         \includegraphics[width=\textwidth]{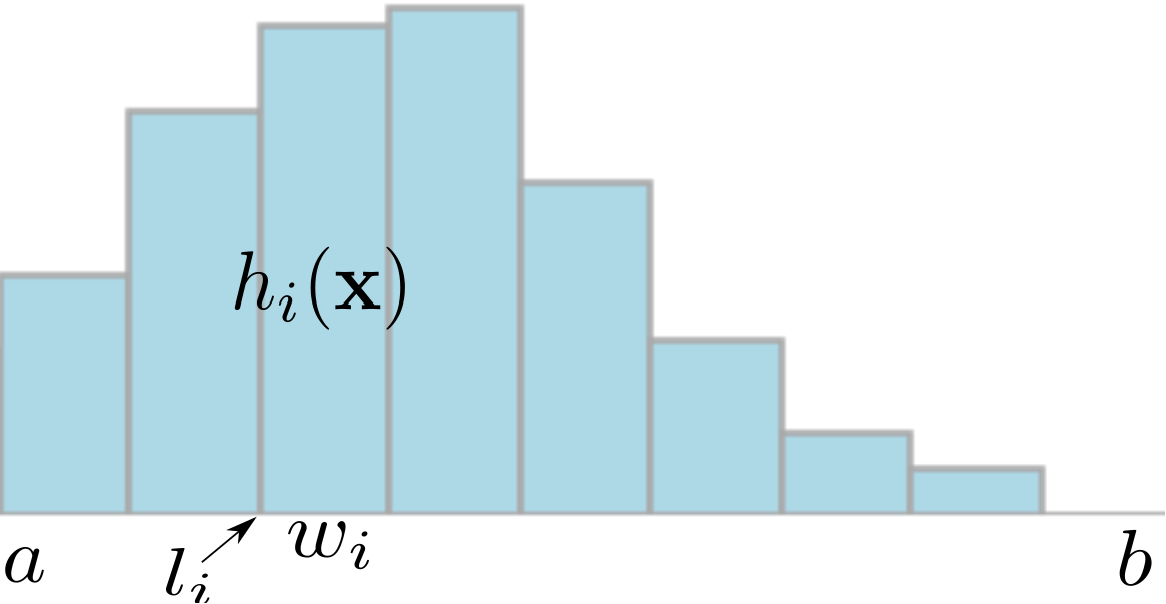}
\end{subfigure}
 \begin{subfigure}[b]{0.35\textwidth}
         \includegraphics[width=\textwidth]{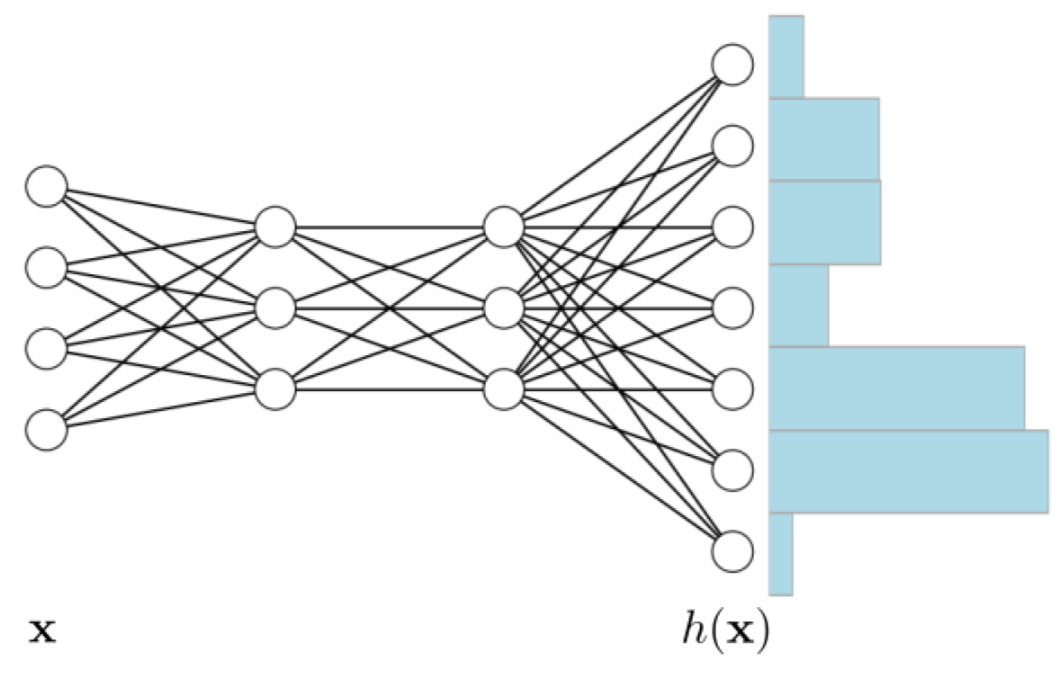}
\end{subfigure} 
\caption{(Top) A sample histogram, and (bottom) a neural network with a softmax output layer that represents a histogram.}
\label{fig:sample_histogram_nn}
\end{wrapfigure}

Three choices have to be made to learn a distribution on $Y | \xvec$ in this setting. First, we need a parameterization for $\pd$, which is a continuous pdf. Second, we require a function that measures how the prediction distribution is different from the target distribution. Third, we need a method to convert a regression data set with samples $(\xvec_j, y_j)$ with real-valued outputs to one with samples $(\xvec_j, \td_j)$ whose outputs are target distributions. Below we detail the first two choices, and in the next section we describe several methods for constructing target distributions.

We propose to restrict the prediction distribution $\pd$ to be a histogram density.
Assume an interval $[a,b]$ on the real line has been uniformly partitioned into $\nbins$ bins of width $w_i$, and let function $\hist: \mathcal{X} \rightarrow [0,1]^\nbins$ provide the $\nbins$-dimensional vector $\hist(\xvec)$ of coefficients indicating the probability the target is in each bin, given $\xvec$. The distribution $\pd$ corresponds to a (normalized) histogram and has density values $\hist_i(\xvec)/w_i$ per bin. A sample histogram is depicted in Figure \ref{fig:sample_histogram_nn} (top). The histogram prediction distribution $\pd$ can be parameterized with a softmax output layer in a neural network. The softmax layer has $\nbins$ units, and the value of unit $i$ represents $\hist_i(\xvec)$.  Figure \ref{fig:sample_histogram_nn} (bottom) shows a sample neural network whose output represents a histogram.

For the measure of difference, we use the KL-divergence. Suppose the target distribution has CDF $\tc$, and the support of this distribution is within the range $[a,b]$. The KL-divergence between $\td$ and $\pd$ is
\begin{align*}
D_{KL}(\td||\pd) &= - \int_{a}^{b} \td(y) \log \frac{\pd(y)}{\td(y)} dy \\
&= - \int_{a}^{b} \td(y) \log \pd(y) dy - \biggl(-  \int_{a}^{b} \td(y) \log \td(y) dy \biggr).
\end{align*}
Because the second term only depends on $\td$, the aim is to minimize the first term: the cross-entropy between $\td$ and $\pd$. 
This loss simplifies due to the histogram form on $\pd$:
\begin{align*}
- \int_{a}^{b} \td(y) \log \pd(y) dy &= - \sum_{i=1}^\nbins \int_{l_i}^{l_i + w_i} \td(y) \log \frac{\hist_i(\xvec)}{w_i} dy \\
&= - \sum_{i=1}^\nbins \log \frac{\hist_i(\xvec)}{w_i} \underbrace{(\tc(l_i + w_i) - \tc(l_i))}_{\distweight_i}.
\end{align*}
In the minimization, the width itself can be ignored because $\log \tfrac{\hist_i(\xvec)}{w_i} = \log \hist_i(\xvec) - \log w_i$, giving the histogram loss 
\begin{equation}
\text{HL}(\td,\pd) = - \sum_{i=1}^\nbins \distweight_i \log \hist_i(\xvec) \label{eq_hl}
.
\end{equation}
This loss has several useful properties. One important property is that it is convex in $\hist_i(\xvec)$; even if the loss is not convex in all network parameters, it is at least convex on the last layer. The other two benefits are due to restricting the form of the predicted distribution $\pd$ to be a histogram density. 
First, the divergence to the full distribution $\td$ can be efficiently computed. Second, the choice of $\td$ is flexible, as long as its CDF can be evaluated for each bin. The weighting $\distweight_i = \tc(l_i + w_i) - \tc(l_i)$ can be computed offline once for each sample, making it inexpensive to query repeatedly for each sample during training.

\subsection{Target Distributions}

The histogram loss requires a target distribution corresponding to each input. The method for constructing a target distribution from the target $y_j$ in the data set can be chosen upfront. Different choices simply result in different weightings in HL. Below, we consider some special cases that are of interest.

\myparagraph{Truncated Gaussian on $Y | \xvec$ (HL-Gauss).}
Consider a truncated Gaussian distribution, on support $[a,b]$, as the target distribution. The mean $\mu$ for this Gaussian is the data point $y_j$ itself, with fixed variance $\sigma^2$. The pdf $\td$ is
\begin{equation*}
\td(y) = \frac{1}{Z\sigma \sqrt{2 \pi}} e^{-\frac{(y-y_j)^2}{2\sigma^2}}
\end{equation*}
where $Z = \frac{1}{2} (\text{erf}\left(\frac{b-y_j}{\sqrt{2}\sigma}\right) - \text{erf}\left(\frac{a-y_j}{\sqrt{2}\sigma}\right))$,
and HL has
\begin{equation*}
\distweight_i = \tfrac{1}{2Z} \left(\text{erf}\left(\frac{l_i + w_i - y_j}{\sqrt{2}\sigma}\right) - \text{erf}\left(\frac{l_i - y_j}{\sqrt{2}\sigma}\right) \right)
.
\end{equation*}
This distribution enables smoothing over $Y$ through the variance parameter $\sigma^2$. We call this loss HL-Gauss, which is defined by the number of bins $\nbins$ and variance $\sigma^2$. Based on positive empirical performance, it is the main HL variant that we advocate for and analyze. Combined with Equation \eqref{eq_hl}, the final form for this loss is:
\begin{equation*}
    \text{HL-Gauss}(y_j, \pd) = - \sum_{i=1}^\nbins \tfrac{1}{2Z} \left(\text{erf}\left(\frac{l_i + w_i - y_j}{\sqrt{2}\sigma}\right) - \text{erf}\left(\frac{l_i - y_j}{\sqrt{2}\sigma}\right) \right) \log \hist_i(\xvec).
\end{equation*}

\myparagraph{Dirac delta on $Y | \xvec$ (HL-OneBin).} Consider Gaussians centered around data points $y_j$, with arbitrarily small variances $\tfrac{1}{2}a^2$: 
\begin{equation*}
\delta_{a,j}(y) = \frac{1}{a \sqrt{\pi}}\exp\left(-\tfrac{(y - y_j)^2}{a^2}\right). 
\end{equation*}
Let the target distribution have $\td(y) = \delta_{a,j}(y)$ for each sample. Define function $\distweight_{i,j}: [0,\infty) \rightarrow [0,1]$ as 
$\distweight_{i,j}(a) = \int_{l_i}^{l_i + w_i} \delta_{a,j}(y) dy$
.
For each $y_j$, as $a \rightarrow 0$, $\distweight_{i,j}(a) \rightarrow 1$ if $y_j \in [l_i, l_i + w_i]$ and $\distweight_{i,j}(a) \rightarrow 0$ otherwise. 
So, for $\text{bin}_j$ s.t. $y_j \!\in [l_{\text{bin}_j}, l_{\text{bin}_j} \!+\! w_{\text{bin}_j}]$, \begin{align*}
\!\!\lim_{a \rightarrow 0} \! HL(\delta_{a,j},\pdnox_{\xvec_j}) &= - \log \hist_{\text{bin}_j}(\xvec_j).
\end{align*}
The sum over samples for HL to the Dirac delta on $Y | \xvec$, then, corresponds to the negative log-likelihood for $\pd$
\begin{align*}
\argmin_{\hist_1, \ldots, h_\nbins} - \sum_{j=1}^\nsamples \log \hist_{\text{bin}_j}(\xvec_j)
&= \argmin_{\hist_1, \ldots, h_\nbins} - \sum_{j=1}^\nsamples \log \pdnox_{\xvec_j}(y_j)
.
\end{align*}
Such a delta distribution on $Y | \xvec$ results in one coefficient $\distweight_i$ being 1, reflecting the distributional assumption that $Y$ is certainly in a bin.
In the experiments, we compare to this loss, which we call HL-OneBin. This makes the final form of the loss:
\begin{equation*}
    \text{HL-OneBin}(y_j, \pd) = - \log \hist_{\text{bin}_j}(\xvec).
\end{equation*}

\myparagraph{Mixture of Dirac delta and uniform on $Y | \xvec$ (HL-Uniform).} Using a similar analysis to above, $\td(y)$ can be considered as a mixture between $\delta_{a,j}(y)$ and a uniform distribution. For a weighting of $\epsilon$ on the uniform distribution, the resulting loss HL-Uniform has $\distweight_i = \epsilon$ for $i \neq i_j$ and $\distweight_{i_j} = 1- (\nbins-1) \epsilon$. The loss can be expressed as:
\begin{equation*}
    \text{HL-Uniform}(y_j, \pd) = - (1-(k-1)\epsilon) \log \hist_{\text{bin}_j}(\xvec) - \sum_{i\neq \text{bin}_j} \epsilon \log \hist_{\text{bin}_j}(\xvec).
\end{equation*}

\section{Theoretical Analysis}  \label{sec:theoretical_analysis}

This section assembles our theoretical results on the histogram loss (HL). The first part considers the behavior of gradient descent optimization on HL and the $\ell_2$ loss. Gradient descent is a local search method, and problems like the existence of highly varying regions, unfavorable local minima and saddle points, and poorly conditioned coordinates can hinder the search. A complete characterization of the loss surface is a hard problem. Instead, we provide an upper bound on the norm of the gradient of HL, which suggests that the gradient varies less through training and provides more reliable optimization steps. We also point out how this norm relates to the generalization performance of the model. 
The second part shows a similarity between minimizing HL and entropy-regularized policy improvement based on work by \citet{norouzi2016reward}.

The third part discusses the bias of the minimizer of HL. The mean of the target generating distribution can be different from a coarse histogram prediction distribution that approximates it. We show that when HL-Gauss is minimized, this difference is bounded by half of the bin width as long as the target distribution has negligible probability beyond the support of the histogram. The last part shows that minimizing HL on the data reduces an upper bound on the difference between the targets and the mean of prediction distributions.

\subsection{Stable Gradients for the Histogram Loss} \label{opt_theory}

\future{Can this be a future to do? We should at least thing about if it makes the result meaningless, and see what Hardt says about it.}
\citet{hardt2015train} have shown that the generalization performance for stochastic gradient descent is bounded by the number of steps
that stochastic gradient descent takes during training, even for non-convex losses. 
The bound is also dependent on the properties of the loss.
In particular, it is beneficial to have a loss function with small Lipschitz constant $L$, which bounds the norm of the gradient.
Below, we discuss how HL with a Gaussian distribution (HL-Gauss) in fact promotes an improved bound on this norm, over both the $\ell_2$ loss and HL with all weight in one bin (HL-OneBin). 

\newcommand{\lip}{l}

In the proposition bounding the HL-Gauss gradient, we assume 
\begin{equation}
\hist_i(\xvec) = \tfrac{\exp(\phivec_\thetavec(\xvec)^\top \wvec_i)}{\sum_{j=1}^\nbins \exp(\phivec_\thetavec(\xvec)^\top \wvec_j)} \label{eq_softmax}
\end{equation}
for some function $\phivec_\thetavec: \mathcal{X} \rightarrow \mathcal{R}^k$ parameterized by a vector of parameters $\thetavec$. For example, $\phivec_\thetavec(\xvec)$ could be the last hidden layer in a neural network, with parameters $\thetavec$ for the entire network up to that layer.  
The proposition provides a bound on the gradient norm in terms of the \emph{current network parameters}. Our goal is to understand how the gradients might vary \emph{locally for the parameters}, as opposed to globally bounding the norm and characterizing the Lipschitz constant only in terms of the properties of the function class and loss.

\begin{restatable}[Local Lipschitz constant for HL-Gauss]{proposition}{proplipschitz}
Assume $\xvec, y$ are fixed, giving fixed coefficients $\distweight_i$ in HL-Gauss.
Let $\hist_i(\xvec)$ be as in \eqref{eq_softmax}, defined by the parameters $\wvec = \{\wvec_1, \ldots, \wvec_\nbins\}$ and $\thetavec$, providing the predicted distribution $\pd$. 
Assume for all $i$ that $\wvec_i^\top \phivec_\thetavec(\xvec)$ is locally $\lip$-Lipschitz continuous with respect to $\thetavec$: \begin{equation}
\|  \nabla_\thetavec (\wvec_i^\top \phivec_\thetavec(\xvec)) \| \le \lip \label{eq_lipschitz}.
\end{equation}
Then the norm of the gradient for HL-Gauss, with respect to all the parameters in the network $\{\thetavec, \wvec\}$, is bounded by
\begin{equation}
\|  \nabla_{\thetavec,\wvec} HL(\td, \pd)  \|
 \le \left(\lip +  \| \phivec_\thetavec(\xvec) \| \right)  \sum_{i=1}^\nbins | \distweight_i - \hist_i(\xvec)|  \label{eq_hl_lipschitz}.
\end{equation}
\end{restatable}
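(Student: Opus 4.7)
The plan is to exploit the standard softmax plus cross-entropy identity and then apply the chain rule together with the local Lipschitz assumption on each pre-activation $\wvec_i^\top \phivec_\thetavec(\xvec)$. Writing $z_k := \wvec_k^\top \phivec_\thetavec(\xvec)$, I would first differentiate $HL = -\sum_i \distweight_i \log \hist_i(\xvec)$ with respect to $z_k$. Since $\hist_i$ is the softmax of $(z_1,\ldots,z_\nbins)$ and since the coefficients of HL-Gaussian satisfy $\sum_i \distweight_i = 1$ (because the truncated Gaussian integrates to one on $[a,b]$), the well-known calculation gives the clean expression $\partial HL / \partial z_k = \hist_k(\xvec) - \distweight_k$. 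This is the key algebraic simplification; without $\sum_i \distweight_i = 1$ we would pick up an extra factor of this sum.

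Next I would expand the gradient with respect to all parameters via the chain rule. For the shared parameters $\thetavec$, $\nabla_\thetavec HL = \sum_k (\hist_k(\xvec) - \distweight_k)\, \nabla_\thetavec z_k$, so by the triangle inequality and the local Lipschitz hypothesis \eqref{eq_lipschitz},
\begin{equation*}
\|\nabla_\thetavec HL\| \;\le\; \sum_{k=1}^\nbins |\hist_k(\xvec) - \distweight_k|\,\|\nabla_\thetavec z_k\| \;\le\; \lip \sum_{k=1}^\nbins |\distweight_k - \hist_k(\xvec)|.
\end{equation*}
For the output weights, $\nabla_{\wvec_k} HL = (\hist_k(\xvec) - \distweight_k)\,\phivec_\thetavec(\xvec)$, and concatenating over $k$ gives
\begin{equation*}
\|\nabla_\wvec HL\| \;=\; \|\phivec_\thetavec(\xvec)\| \sqrt{\sum_{k=1}^\nbins (\hist_k(\xvec)-\distweight_k)^2} \;\le\; \|\phivec_\thetavec(\xvec)\| \sum_{k=1}^\nbins |\distweight_k - \hist_k(\xvec)|,
\end{equation*}
using $\|v\|_2 \le \|v\|_1$.

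Finally I combine the two pieces. Since $\|\nabla_{\thetavec,\wvec} HL\|^2 = \|\nabla_\thetavec HL\|^2 + \|\nabla_\wvec HL\|^2$, applying $\sqrt{a^2+b^2}\le a+b$ for nonnegative $a,b$ yields
\begin{equation*}
\|\nabla_{\thetavec,\wvec} HL\| \;\le\; \|\nabla_\thetavec HL\| + \|\nabla_\wvec HL\| \;\le\; \bigl(\lip + \|\phivec_\thetavec(\xvec)\|\bigr) \sum_{i=1}^\nbins |\distweight_i - \hist_i(\xvec)|,
\end{equation*}
which is exactly \eqref{eq_hl_lipschitz}. There is no real obstacle here; the argument is mostly routine. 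The one subtle point worth flagging in the write-up is the reliance on $\sum_i \distweight_i = 1$, which makes the softmax derivative collapse to the clean residual form $\hist_k - \distweight_k$ and is what ultimately delivers a bound in terms of the $\ell_1$ discrepancy between target and prediction, setting up the subsequent comparison with the $\ell_2$ loss and HL-OneBin.
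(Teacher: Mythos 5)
Your proposal is correct and follows essentially the same route as the paper's proof: compute the softmax--cross-entropy derivative $\partial HL/\partial z_k = \hist_k(\xvec) - \distweight_k$ (using $\sum_i \distweight_i = 1$, which the paper also invokes, albeit implicitly), apply the chain rule to get the $\wvec$- and $\thetavec$-blocks, bound each block's norm by the $\ell_1$ discrepancy times $\|\phivec_\thetavec(\xvec)\|$ and $\lip$ respectively, and combine via $\sqrt{a^2+b^2} \le a+b$. Your write-up is if anything slightly more explicit than the paper's (computing the $\wvec$-block norm exactly before relaxing $\|\cdot\|_2 \le \|\cdot\|_1$, and flagging the role of $\sum_i \distweight_i = 1$), but the argument is the same.
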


The results by \citet{hardt2015train} suggest it is beneficial for the local Lipschitz constant---or the norm of the gradient---to be small on each step.
HL-Gauss provides exactly this property. Besides the network architecture---which we are here assuming is chosen outside of our control---the HL-Gauss gradient norm is proportional to $|\distweight_i -\hist_i(\xvec)|$. This number is guaranteed to be less than 1 but is generally likely to be even smaller, especially if $\hist_i(\xvec)$ reasonably accurately predicts $\distweight_i$. Further, the gradients should push the weights to stay within a range specified by $\distweight_i$, rather than preferring to push some to be very small---close to 0---and others to be close to 1. For example, if $\hist_i(\xvec)$ starts relatively uniform, then the objective does not encourage predictions $\hist_i(\xvec)$ to get smaller than $\distweight_i$. If $\distweight_i$ are non-negligible, this keeps $\hist_i(\xvec)$ away from zero and the loss in a smaller range.  

This contrasts both the norm of the gradient for the $\ell_2$ loss and HL-OneBin. For the $\ell_2$ loss, $(f(\xvec) - y) \inlinevec{\nabla_\thetavec \wvec^\top \phivec_\thetavec(\xvec)}{\phivec_\thetavec(\xvec)}$ is the gradient, giving gradient norm bound $(\lip +  \| \phivec_\thetavec(\xvec) \|) | f(\xvec) - y|$. The constant $| f(\xvec) - y|$, as opposed to $\sum_{i=1}^\nbins |\distweight_i -\hist_i(\xvec)|$, can be much larger, even if $y$ is normalized between $[0,1]$, and can vary considerably more. 
HL-OneBin, on the other hand, shares the same constant as HL-Gauss but suffers from another problem. The Lipschitz constant $\lip$ in Equation \eqref{eq_lipschitz} will likely be larger because $\distweight_i$ is frequently zero and so pushes $\hist_i(\xvec)$ towards zero. This results in larger objective values and pushes $\wvec_i^\top \phivec_\thetavec(\xvec)$ to get larger to enable $\hist_i(\xvec)$ to get close to 1.

\subsection{Connection to Reinforcement Learning} 
HL can also be motivated through a connection to maximum entropy reinforcement learning. 
In reinforcement learning, an agent iteratively selects actions and transitions between states to maximize (long-term) reward. 
The agent's goal is to find an optimal policy in as few interactions as possible. To do so, the agent begins by exploring more to facilitate efficiently finding a better policy. Supervised learning can be expressed as a reinforcement learning problem \citep{norouzi2016reward}, where action selection conditioned on a state corresponds to making a prediction conditioned on a feature vector. An alternative view to minimizing prediction error is to search for a policy to make accurate predictions. 

One strategy to efficiently find an optimal policy is through a maximum entropy objective. The policy balances between selecting the action it believes to be optimal---making its current best prediction---and acting more randomly with high entropy. For a continuous action set $\mathcal{Y}$, the goal is to 
minimize the following objective
\begin{equation}
\int_{\mathcal{X}} p_s(\xvec) \Big[ - \tau \ent(\pd) -  \int_{\mathcal{Y}} \pd(y) r(y,y_i) dy \Big] d\xvec \label{eq_rl},
\end{equation}
where $\tau > 0$; 
$p_s$ is a distribution over states $\xvec$; $\pd$ is the policy or distribution over actions for a given $\xvec$; $\ent(\cdot)$ is the differential entropy---the extension of entropy to continuous random variables; and $r(y, y_i)$ is the reward function, such as the negative of the objective $r(y,y_i) = - \tfrac{1}{2} ( y - y_i )^2$. Minimizing \eqref{eq_rl} corresponds to minimizing the KL-divergence across $\xvec$ between $\pd$ and the exponentiated payoff distribution 
$\td(y) = \frac{1}{Z} \exp(r(y,y_i) / \tau)$
where $Z = \int \exp(r(y,y_i) / \tau)$ because 
\begin{align*}
D_{KL}(\pd || \td) 
&= -\ent(\pd) - \int \pd(y) \log \td(y) dy \\
&= - \ent(\pd) - \tau^{-1} \int \pd(y) r(y,y_i) dy + \log Z.
\end{align*}
The connection between HL and maximum-entropy reinforcement learning is that both are minimizing a divergence to this exponentiated distribution $q$.
HL, however, is minimizing $D_{KL}(\td || \pd)$ instead of $D_{KL}(\pd || \td)$. For example, Gaussian target distribution with variance $\sigma^2$ corresponds to minimizing $D_{KL}(\td || \pd)$ with $r(y,y_i) = - \tfrac{1}{2} ( y - y_i )^2$ and $\tau = \sigma^2$. 
These two KL-divergences are not the same, but a similar argument to \citet{norouzi2016reward} could be extended for continuous $y$, showing that $D_{KL}(\pd || \td)$ is upper-bounded by $D_{KL}(\td || \pd)$ plus variance terms. The intuition, then, is that minimizing HL is promoting an efficient search for an optimal (prediction) policy.

\future{Something we could add here is to actually extend their argument to continuous y. This seems nontrivial, and its not clear this explains the behaviour, so it might not be worth it. But, if doable, it would almost be of independent interest.}

\subsection{Bias of the Histogram Loss} \label{bias_theory}

Different forms of bias can be induced by components of the histogram loss, namely using histogram densities, using the KL-divergence, and the choice of target distribution. This section explores the effects of these components on the bias. First, we characterize the minimizer of HL and the effect of target distribution on it. Then, we show to what extent the mean of the minimizer differs from the Bayes optimal solution.

We make two assumptions in this section to simplify the analysis and highlight the key points. First, we assume that the model is flexible enough so that the predicted distribution for each input can be optimized independently. Therefore, we consider the case where all the data points in the data set have the same input and drop the subscripts that show dependence on $\xvec$. Second, we assume that the bins have equal width $w$ and the support of the histogram is unbounded, and therefore the target distribution is not truncated. For a fixed placement and indexing of the bins, we denote the set of distributions with this form as $\mathcal{P}({\NN})$ and the set of distributions without the histogram form restriction as $\mathcal{P}({\RR})$. Later in the experiments, we pad the support of the histogram to ensure that there is negligible truncation.

In the data, each input can be associated with one or multiple targets. More generally, we consider a \textit{target generating distribution} with pdf $\rtd: \mathcal{Y} \rightarrow [0,\infty)$ for an input. This is the underlying distribution that generates the targets given an input, and it is different from the target distribution that is used in HL. In the data set, targets $y_j$ are samples from $\rtd$ and each one is turned into a \textit{target distribution} with pdf $\td_j$ (e.g. a truncated Gaussian centered at $y_j$ if we use HL-Gauss, or a Dirac delta at $y_j$ if we use HL-OneBin) which is then used to train the model with HL. We use $\tdy_y$ to denote the pdf of the target distribution obtained from the target $y$. Finally, the model's predicted distribution for the input is denoted by $\pdnox$.

In this setup, the predictor that is Bayes optimal in squared error predicts $\EX_\rtd[y]$. We define the bias of a predicted distribution $\pdnox^*$ as $\text{Bias}(\pdnox^*, \rtd) = \EX_{\pdnox^*}[z] - \EX_\rtd[y]$. The following proposition characterizes the bias incurred from predicting a histogram. The proof is in Appendix \ref{sec:appdx_proofs}. 

\begin{restatable}[Bias Characterization]{proposition}{propbias}
The following statements hold:
\begin{enumerate}
    \item If the form of $\pdnox$ is not restricted, the minimizer of the KL-divergence is the result of smoothing $\rtd$ with $\tdy$. Formally, $\argmin_{\pdnox \in \mathcal{P}({\RR})} \EX_{y\sim \rtd} [D_{KL}(\tdy_y||\pdnox)] = s$ where $s(z) = \int \rtd(y) \tdy_y(z) \diff y$.
    \item The bias of $s$ is the expected difference between the scalar target and the mean of the target distribution constructed from it. If the target distribution is Gaussian, $s$ is unbiased. Formally, $\text{Bias}(s, \rtd) = \EX_{y\sim\rtd}[\EX_{z\sim\tdy_y}[z] - y]$.
    \item If the form of $\pdnox$ is restricted to a histogram with bins of equal widths, the minimizer of HL is $\pdnox^*$, where the probability in each bin is equal to the probability of $s$ in the range of that bin. Formally, $\argmin_{\pdnox \in \mathcal{P}({\NN})} \EX_{y\sim \rtd} [\text{HL}(\tdy_y||\pdnox)] =\pdnox^*$ where $\pdnox^*_i = \int_{l_i}^{l_i+w} s(z) \diff z$. 
    \item If the target distribution is Gaussian, the bias of $\pdnox^*$ is at most half of the bin width, and this bound is tight, i.e. $|\text{Bias}(\pdnox^*, \rtd)| \leq w/2$.
\end{enumerate}
\end{restatable}

Recall that this section does not explore the biases incurred due to truncation or the model's lack of capacity to fit different data points. The form of bias in this section is due to restricting the prediction to a histogram, and we call it the \textit{discretization bias}. This analysis characterizes the bias of the mean of the distribution that minimizes HL-Gauss and provides an upper bound on it that can be arbitrarily reduced by using smaller bins, assuming that the target distributions have negligible probability exceeding the support of the histogram. Note that the variance parameter affects both the quantified bias and the extent to which the assumption is satisfied. Section \ref{sec:bias_simulation} provides numerical simulations that show the discretization bias in HL-Gauss reduces as the variance parameter increases and the total bias becomes negligible with proper padding. We will also empirically investigate the bias in Section \ref{sec:experiments} on regression data sets.

\subsection{Bound on Prediction Error}

An often-sought property of surrogate loss functions is reducing an upper bound on the original objective. In this section we provide a result that shows, if the prediction distribution and the target distributions have a similarly bounded support, minimizing the KL-divergence between them reduces an upper bound on the difference between the means. This result motivates minimizing HL and using the mean of the prediction distribution as the final prediction.

\begin{restatable}[Bound on Prediction Error]{proposition}{proppredbound}
Assuming that for a data point, the target distribution $\td_y$ and the model's prediction distribution $\pd$ have supports bounded by the range $[a,b]$, the following bound holds:
\begin{equation} \label{eq:predbound}
(\EX_{\td_y}[z] - \EX_{\pd}[z])^2 \leq 4 \max(|a|,|b|)^2 \min\biggl(\frac{1}{2} D_{KL}(\td_y||\pd), 1 - e^{-D_{KL}(\td_y||\pd)} \biggr).
\end{equation}
\end{restatable}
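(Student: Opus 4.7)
The plan is to reduce the mean-difference bound to a total-variation bound between $\td_y$ and $\pd$, and then apply two standard KL-to-TV inequalities, taking the minimum of the two resulting bounds.

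First, set $M := \max(|a|,|b|)$. Since both $\td_y$ and $\pd$ are supported on $[a,b]$, every $z$ in the effective domain of integration satisfies $|z| \le M$. I would then write
\begin{equation*}
\EX_{\td_y}[z] - \EX_{\pd}[z] \;=\; \int_a^b z\,\bigl(\td_y(z) - \pd(z)\bigr)\,dz,
\end{equation*}
and apply the triangle inequality followed by $|z|\le M$ to obtain
\begin{equation*}
\bigl|\EX_{\td_y}[z] - \EX_{\pd}[z]\bigr| \;\le\; M \int_a^b \bigl|\td_y(z) - \pd(z)\bigr| dz \;=\; 2M \,\lVert \td_y - \pd\rVert_{TV},
\end{equation*}
using the definition $\lVert P - Q\rVert_{TV} = \tfrac{1}{2}\int |p-q|$. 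Squaring gives $(\EX_{\td_y}[z] - \EX_{\pd}[z])^2 \le 4M^2\,\lVert \td_y - \pd\rVert_{TV}^2$, which isolates the only quantity that still needs to be controlled by the KL divergence.

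Next I would invoke two standard inequalities to bound $\lVert \td_y - \pd\rVert_{TV}^2$. Pinsker's inequality yields
\begin{equation*}
\lVert \td_y - \pd\rVert_{TV}^2 \;\le\; \tfrac{1}{2} D_{KL}(\td_y\Vert \pd),
\end{equation*}
while the Bretagnolle--Huber inequality gives the complementary bound
\begin{equation*}
\lVert \td_y - \pd\rVert_{TV}^2 \;\le\; 1 - e^{-D_{KL}(\td_y\Vert \pd)}.
\end{equation*}
Since both hold simultaneously, I can take the pointwise minimum; this is useful because Pinsker is tighter for small KL values while Bretagnolle--Huber is tighter for large ones (it remains bounded by $1$ as the KL diverges). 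Substituting into the squared-mean-difference bound produces exactly the stated inequality.

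I do not expect any real obstacle in this argument; the essentially interesting step is the observation that combining Pinsker with Bretagnolle--Huber gives a uniformly useful bound. If one wished to sharpen the constant, one could replace $M$ by $(b-a)/2$ by centering $z$ around the midpoint $(a+b)/2$ before applying the triangle inequality (since $\int(\td_y - \pd)=0$), but this only improves the bound by a constant and the stated form $4\max(|a|,|b|)^2$ follows from the cruder bound used above.
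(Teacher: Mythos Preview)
Your proposal is correct and follows essentially the same argument as the paper: bound the mean difference by $\max(|a|,|b|)$ times the $L^1$ distance (i.e., twice the total variation), square, and then apply both Pinsker's and Bretagnolle--Huber's inequalities to take the minimum. The paper's proof is identical in structure, with the same two references for the TV--KL bounds.
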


The proof is in Appendix \ref{sec:appdx_proofs}, and the behavior of the two bounds is explained in Appendix \ref{sec:appdx_bounds}. A model trained with HL reduces $D_{KL}(\td||\pd)$ for each training data point. If the mean of the target distribution is close to the original label, the bound above shows that the mean of the predicted distribution is a good predictor for the original labels in the training set if the model achieves a low training loss.

\section{An Empirical Study on the Histogram Loss}  \label{sec:experiments}

In this section, we investigate the utility of HL-Gauss for regression compared to using the $\ell_2$ loss. We particularly investigate why the modification to this distributional loss improves performance by first comparing to baselines to test if it is due to each of the various ways in which the two losses differ and then designing experiments to explore other properties of HL.

\subsection{A Synthetic Experiment}\label{synth}

We start with an experiment on sine functions to understand the difference between HL-Gauss and the $\ell_2$ loss in a controlled setting. The goal is to see how the convergence rates of the two losses compare as the frequency or the offset of the target changes. The function we want to fit has an input $x \in \mathbb{R}$ and a deterministic target $\sin(\alpha x) + \beta$, where $\alpha, \beta \in \mathbb{R}$ determine the frequency and the offset, respectively. We pick 500 points on a grid from the range $[-\pi, \pi]$ and generate the corresponding targets. An example is depicted as the blue dots in Figure \ref{fig:synthetic} (Left). We use a neural network with two hidden layers of width 1024 and leaky ReLU activation, train it with the Adam optimizer, and report the MSE on the same 500 points during training. When comparing HL-Gauss and $\ell_2$, we try a wide range of step-size values in $\{10^{-5}, 10^{-4}, \cdots, 10^{-1}\}$ for each method and show the curve with the lowest area under the curve during 1000 iterations.

In the first part of the experiment, we fix $\beta$ to 0 and vary the frequency $\alpha \in \{1, 10, 20\}$. The range of the target is $[-1, 1]$, and for HL-Gauss, we discretize the range $[-1.5, 1.5]$ evenly to 100 bins and set $\sigma$ to twice the resulting bin width. As shown in Figure \ref{fig:synthetic} (Middle), for both losses training slows down with higher target frequencies and there is a clear difference between HL-Gauss and $\ell_2$ in high frequencies. We have used a sufficiently wide range of step-size values to ensure this difference cannot be eliminated with a larger step-size. The predictions obtained with HL-Gauss and $\ell_2$ for $\alpha=10$ are visualized in Figure \ref{fig:synthetic} (Left).

We now fix $\alpha$ to 10 and vary the offset $\beta \in \{0, 1, 10\}$. The range of the target is $[-1 + \beta, 1 + \beta]$ and the support of the histogram in HL-Gauss is $[-1.5, 11.5]$ regardless of the offset. The number of bins and $\sigma$ are similar to the previous part. The learning curves are shown in Figure \ref{fig:synthetic} (Right). The convergence rate of $\ell_2$ deteriorates as the offset rises while the behavior of HL-Gauss is not affected.

Overall, the results in this synthetic experiment suggest that, compared to the $\ell_2$ loss, HL-Gauss is especially helpful for convergence when the target has high frequency components or is far from zero.

\begin{figure*}[!ht]
\centering
\begin{subfigure}[b]{\figwidththree}
         \includegraphics[width=\textwidth]{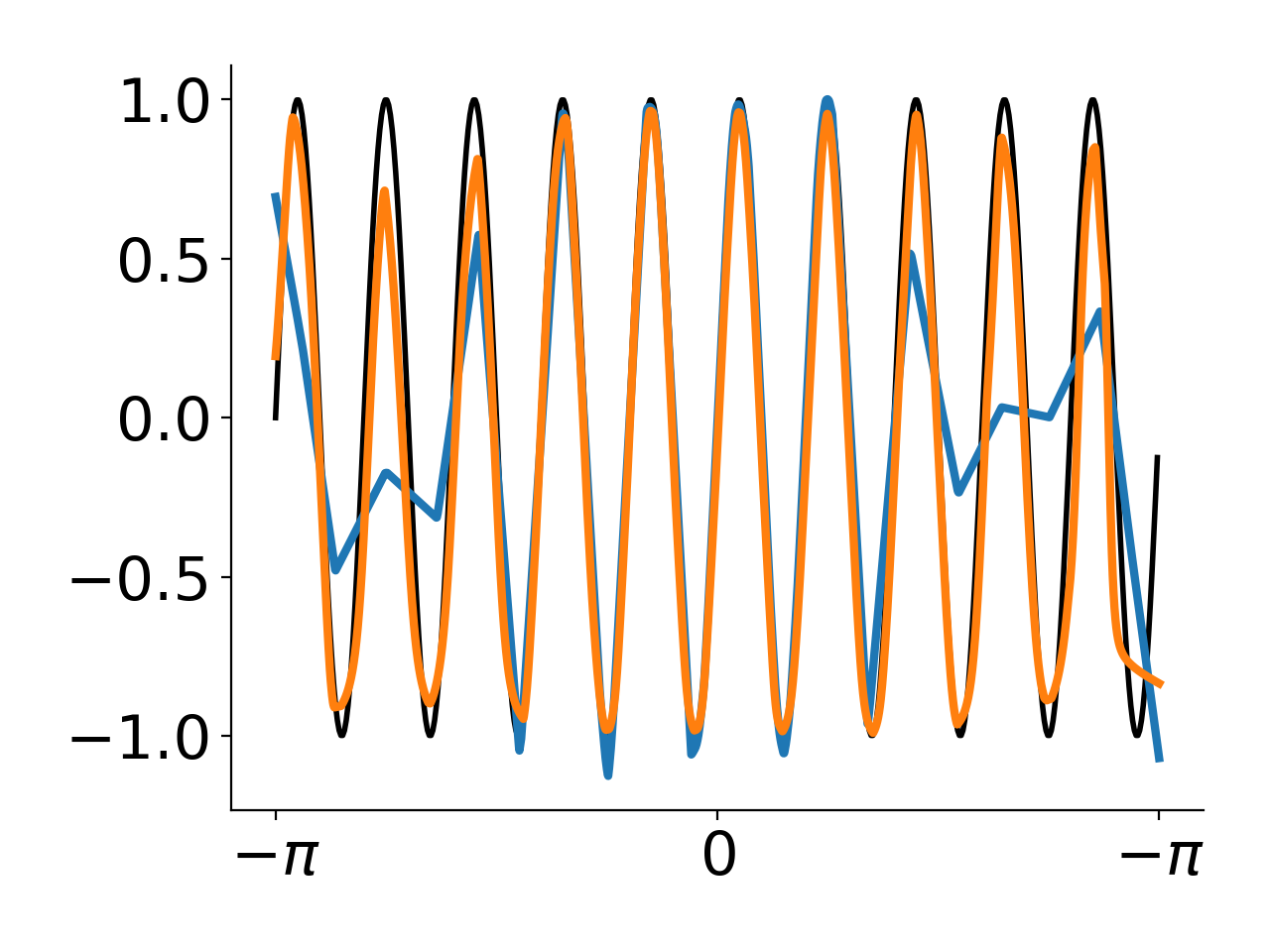}
\end{subfigure} 
 \begin{subfigure}[b]{\figwidththree}
         \includegraphics[width=\textwidth]{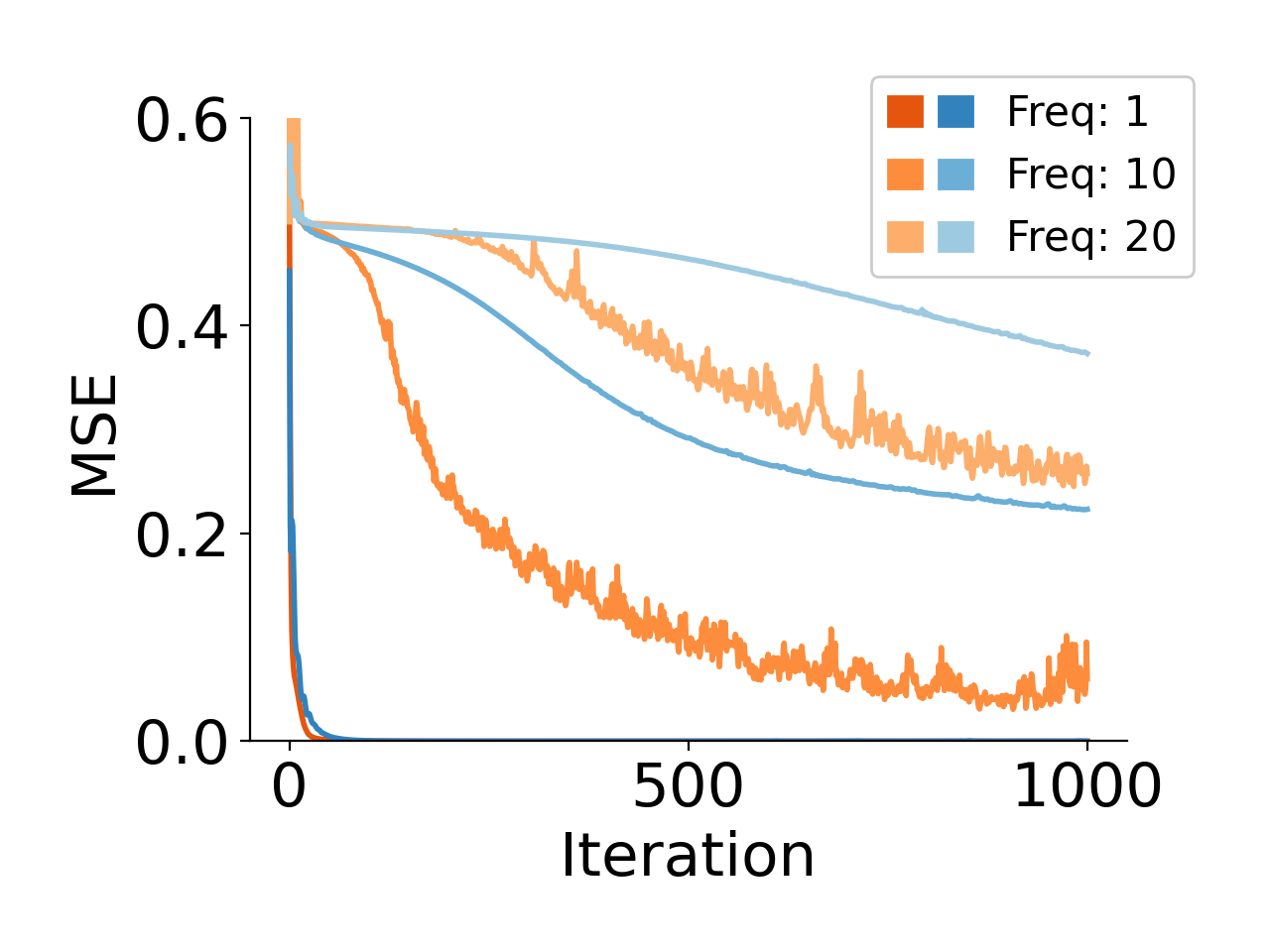}
\end{subfigure} 
  \begin{subfigure}[b]{\figwidththree}
         \includegraphics[width=\textwidth]{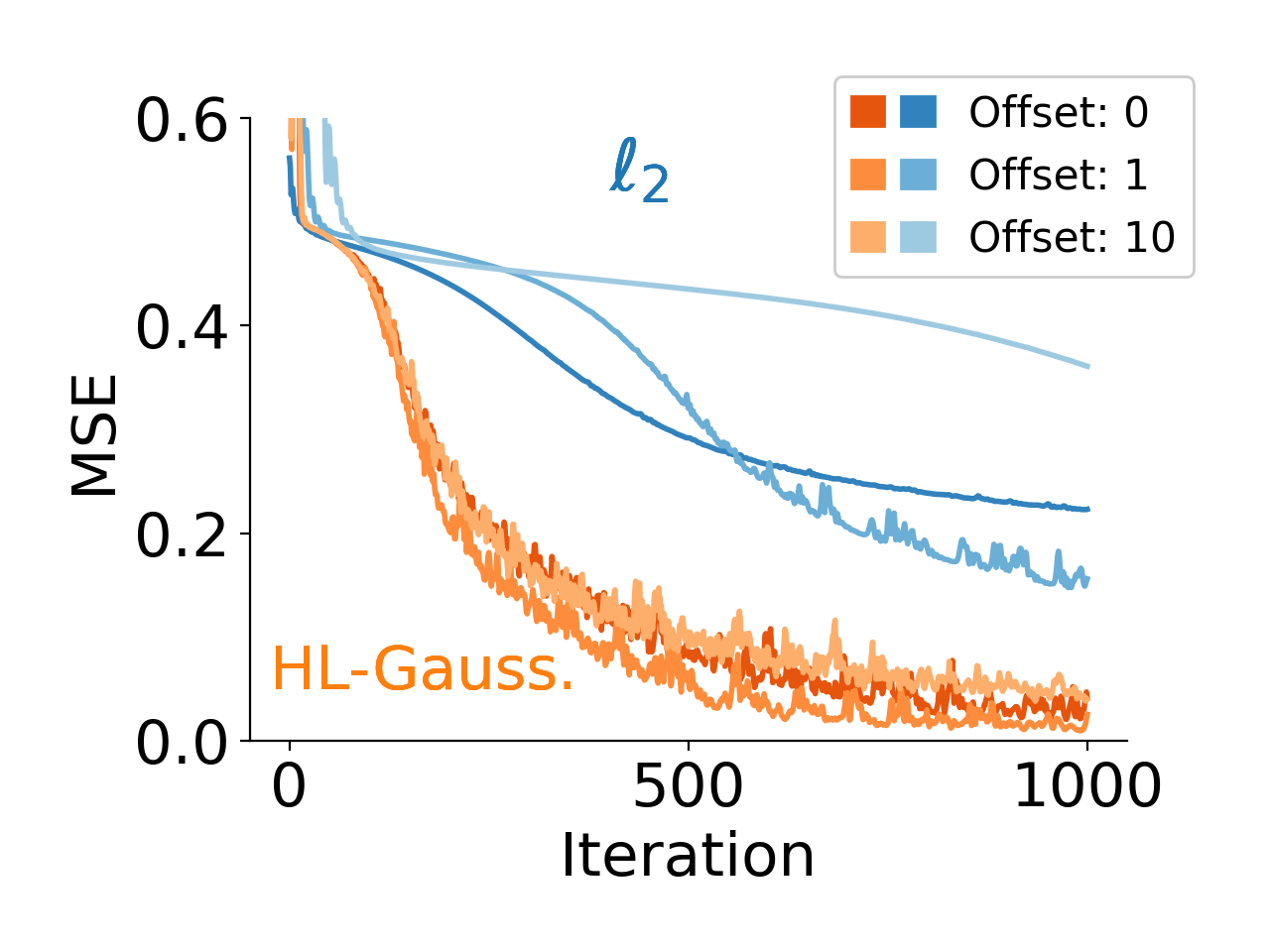}
\end{subfigure} 
\caption{(Left) A synthetic task with frequency 10 and offset 0. The black curve shows the training data, and the orange and blue curves show the predictions obtained after training with HL-Gauss and $\ell_2$, respectively. (Middle) Learning curves for different frequencies in $\{1, 10, 20\}$ and fixed offset $0$. Orange and blue denote HL-Gauss and $\ell_2$, and brighter shades denote higher frequencies. Each curve is averaged over 5 runs with different random initializations. (Right) Learning curves for different offsets in $\{0, 1, 10\}$ and fixed frequency $10$. Brighter shades show higher offsets. Altogether, HL-Gauss trains remarkably faster than $\ell_2$ when the target has high frequency or is far from zero.}
\label{fig:synthetic}
\end{figure*}

\subsection{Overview of Empirical Study}

Now, we turn to the empirical study on real-world data. The purpose of these experiments is to first show proof of concept studies and then provide insights on the role of factors that differentiate HL from the $\ell_2$ loss. The main goal is to understand when and why one would opt for HL. The data sets and the algorithms used in our experiments are described in Sections \ref{datasets} and \ref{algorithms} respectively.

Section \ref{overall} compares HL-Gauss, $\ell_2$, and several baselines, each of which is designed to differ in a particular aspect from HL-Gauss or $\ell_2$. The comparison shows the importance of each of these factors. These experiments attempt to clarify (a) if HL-Gauss is a reasonable choice for a regression task, (b) whether the difference between the performance of HL-Gauss and $\ell_2$ is merely because HL-Gauss is learning a flexible distribution, (c) if sensitivity of $\ell_2$ to outliers can explain the difference the performance of the two losses, and (d) if this difference is the result of data augmentation in the output space when using HL-Gauss.

In Section \ref{bias}, we investigate the impact of discretization bias on both HL-Gauss and HL-OneBin. Previously, we offered a theoretical discussion on the effect of the target distribution and the variance parameter on this bias in Section \ref{bias_theory}. The results in Section \ref{bias} show that this is indeed an important factor at play and can explain most of the difference between the behaviors of HL-OneBin and HL-Gauss. We further evaluate the models with different numbers of bins and variance parameter values to see if a bias-variance tradeoff is at work.

In reinforcement learning, learning the full distribution has been likened to auxiliary tasks that improve the performance by providing a better representation \citep{bellemare2017distributional}. Experiments in Section \ref{rep} test this hypothesis in a regression setting in three ways: (a) comparing the representations of a model that learns the distribution and one that learns the expected value, (b) evaluating a model with the main task of learning the expected value and the auxiliary task of learning the distribution, and (c) adding the auxiliary tasks of learning higher moments to a model that learns the expected value.

In Section \ref{opt} we compare the behavior of $\ell_2$, HL-OneBin, and HL-Gauss during the training to find if the difference in final performance is the result of different loss surfaces. This hypothesis is related to the gradient norms of different losses (described in Section \ref{opt_theory}) and has been posited by \citet{imani2018improving}.

Finally, Section \ref{outlier} measures the sensitivity of HL to target corruption and the sensitivity of the model to input perturbations. The first experiment tests to what extent the performance of a loss deteriorates in the presence of corrupted targets in the training data. The second experiment studies the sensitivity of a model's output to input perturbation, which, in classification problems, has been closely associated with poor generalization.

\subsection{Data Sets and Pre-Processing} \label{datasets}

Experiments are conducted on four regression data sets. \\
The \textbf{CT Position} data set is from CT images of patients \citep{graf20112d}, with 385 features 
and the target set to the relative location of the image. \\ 
The \textbf{Song Year} data set is a subset of The Million Song Dataset \citep{bertin2011million}, with 90 audio features for a song and a target corresponding to the release year. \\
The \textbf{Bike Sharing} data set \citep{fanaee2014event}, about hourly bike rentals for two years, has 16 features and the target set to the number of rented bikes. \\
The \textbf{Pole} data set \citep{Olson2017PMLB} describes a telecommunication problem and has 49 features. 

All data sets are complete, and there is no need for data imputation. All features are transformed to have zero mean and unit variance. We randomly split the data into train and test sets in each run. Root mean squared error (RMSE) and mean absolute error (MAE) are reported over 5 runs, with standard errors. Other ways of processing these data sets may result in a higher performance. The goal of this study, however, is comparing different methods and testing hypotheses rather than achieving state-of-the-art results on these tasks. More details about these data sets are provided in Table \ref{tab:Dataset Table} and Figure \ref{fig:histograms} in the appendix. We will show the results on CT Position data set in the main paper and defer the results on the other data sets to Appendix \ref{sec:appdx_exp_details}.

\subsection{Algorithms} \label{algorithms}
We compared several regression strategies, distribution learning approaches, and several HL variants. All the approaches---except for linear regression---use the same neural network, with differences only in the output layer. 
Unless specified otherwise, all networks using HL have 100 bins. The support of the histogram is chosen by the data set target range, and 10 bins are kept for padding on each side to minimize the effect of truncation. Hyperparameters for comparison algorithms are chosen according to the best Test MAE. Network architectures were chosen according to best Test MAE for $\ell_2$. More details about the architectures and the range of hyperparameters are provided in Appendix \ref{sec:appdx_exp_details}.\\
\textbf{Linear Regression} is included as a baseline, using ordinary least squares with the inputs.\\
\textbf{Squared-error} ($\ell_2$) is the neural network trained using the $\ell_2$ loss. The targets are normalized to range $[0,1]$, which was needed to improve stability and accuracy.\\
\textbf{$\mathbf{\ell_2}$+Noise} is the same as $\ell_2$, except Gaussian noise is added to the targets as a form of augmentation.\\
\textbf{$\mathbf{\ell_2}$+Clipping} is the same as $\ell_2$, but with gradient-norm clipping during training.\\
\textbf{Absolute-error} ($\ell_1$) is the neural network trained using the $\ell_1$ loss.\\
\textbf{MDN} is a Mixture Density Network \citep{bishop1994mixture} that models the target distribution as a mixture of Gaussian distributions.\\
\textbf{$\mathbf{\ell_2}$+Softmax} uses a softmax-layer with $\ell_2$ loss, $\sum_{i=1}^\nbins (f_i(\xvec_j) \cen_i - y_j)^2$ for bin centers $\cen_i$, with otherwise the same settings as HL-Gauss. \\
\textbf{$\mathbf{\ell_2}$+Softmax+Mean} uses a softmax-layer with $\ell_2$ loss on the mean of the distribution, $(\sum_{i=1}^\nbins f_i(\xvec_j) \cen_i - y_j)^2$ for bin centers $\cen_i$, with otherwise the same settings as HL-Gauss. \\
\textbf{HL-OneBin} is HL with a Dirac delta target distribution.\\
\textbf{HL-Uniform} is HL with a target distribution that mixes between a delta distribution and the uniform distribution, with a weighting of $\epsilon$ on the uniform and $1-\epsilon$ on the delta.\\
\textbf{HL-Gauss} is HL with a truncated Gaussian distribution as the target distribution. The parameter $\sigma$ is set to the width of the bins. Importantly, we do not tune any of HL-Gauss's hyperparameters to individual data sets.

\subsection{Overall Results} \label{overall}

We first report the relative performance of different baselines compared to HL-Gauss in Table \ref{tab:overall_ctscan}. The error rates of HL-Uniform are plotted separately in Figure \ref{fig:uniform_ctscan} to show that, regardless of the choice of the parameter in the loss, a uniform target distribution cannot help performance. We focus on Test MAE as the evaluation criterion due to considerably lower variability in the results and its wide use in practice and present Train MAE, Train RMSE, and Test RMSE to provide a complete picture.

\begin{table*}[!t]
\vspace{-0.1cm}
\centering
\begin{tabular}{lllbl}                                                                                                  \hline                                                                                                                   Method     & Train MAE                         & Train RMSE                        & Test MAE                             & Test RMSE                            \\                                                                      \hline                                                                                                                   Lin Reg    & $ 6.073${\scriptsize $(\pm 0.007)$} & $ 8.209${\scriptsize $(\pm 0.006)$} & $ 6.170${\scriptsize $(\pm 0.025)$} & $ 8.341${\scriptsize $(\pm 0.025)$} \\                                                                       $\ell_2$ & $ 0.140${\scriptsize $(\pm 0.036)$} & $ 0.183${\scriptsize $(\pm 0.048)$} & $ 0.176${\scriptsize $(\pm 0.034)$} & $ 0.267${\scriptsize $(\pm 0.039)$} \\                                                                       $\ell_2$+Noise  & $ 0.114${\scriptsize $(\pm 0.011)$} & $ 0.152${\scriptsize $(\pm 0.015)$} & $ 0.152${\scriptsize $(\pm 0.010)$} & $ 0.311${\scriptsize $(\pm 0.076)$} \\                                                                       $\ell_2$+Clip   & $ 0.111${\scriptsize $(\pm 0.004)$} & $ 0.143${\scriptsize $(\pm 0.005)$} & $ 0.148${\scriptsize $(\pm 0.004)$} & $ 0.231${\scriptsize $(\pm 0.006)$} \\                                                                       $\ell_1$         & $ 0.121${\scriptsize $(\pm 0.007)$} & $ 0.164${\scriptsize $(\pm 0.007)$} & $ 0.162${\scriptsize $(\pm 0.006)$} & $ 0.389${\scriptsize $(\pm 0.038)$} \\                                                                       MDN        & $ 0.114${\scriptsize$(\pm0.004)$} & $ 0.152${\scriptsize$(\pm0.005)$} & $ 0.153${\scriptsize$(\pm0.004)$} & $ 0.308${\scriptsize$(\pm0.035)$} \\ $\ell_2$+Softmax & $ 0.065${\scriptsize $(\pm 0.006)$} & $ 0.094${\scriptsize $(\pm 0.008)$} & $ 0.105${\scriptsize $(\pm 0.006)$} & $ 0.301${\scriptsize $(\pm 0.067)$} \\
$\ell_2$+Softmax+Mean & $0.115${\scriptsize $(\pm 0.029)$} & $0.261${\scriptsize $(\pm 0.141)$} & $0.122${\scriptsize $(\pm 0.028)$} & $0.302${\scriptsize $(\pm 0.129)$}\\
HL-OneBin  & $ 0.309${\scriptsize $(\pm 0.000)$} & $ 0.364${\scriptsize $(\pm 0.006)$} & $ 0.335${\scriptsize $(\pm 0.004)$} & $ 0.660${\scriptsize $(\pm 0.099)$} \\                                                                       HL-Gauss.  & $ 0.061${\scriptsize $(\pm 0.006)$} & $ 0.164${\scriptsize $(\pm 0.090)$} & $ 0.098${\scriptsize $(\pm 0.005)$} & $ 0.274${\scriptsize $(\pm 0.090)$} \\                                                                      \hline                                                                                                                  \end{tabular}
\caption{Overall results on CT Scan. HL-Gauss achieved the lowest Test MAE. Among the other methods, $\ell_2$+softmax yielded an error rate close to that of HL-Gauss, and the rest of the baselines performed worse. There was a large gap between the performance of HL-OneBin and HL-Gauss.}\label{tab:overall_ctscan}
\end{table*}

The overall conclusion is that HL-Gauss does not harm performance and can often considerably improve performance over alternatives.

Another observation is that simply modeling the output distribution cannot achieve HL-Gauss's performance. This is made clear by comparing HL-Gauss and MDN. MDN learns the output distribution, but as a mixture of Gaussian components rather than a histogram. In our experiments, MDN consistently underperformed HL-Gauss. Note that the error rate reported for MDN is the one observed after fixing numerical instabilities and tuning the number of components.

A related idea to learning the distribution explicitly is to use data augmentation in the target space as an implicit approach to minimizing divergences to distributions. 
We therefore also compared to directly modifying the labels and gradients, with $\ell_2+$Noise and $\ell_2+$Clipping. These approaches do perform slightly better than regression for some settings but do not achieve the same gains as HL-Gauss. The conclusion is that HL-Gauss's performance cannot be attributed to data augmentation in the targets.

A well-known weakness of the $\ell_2$ loss is its sensitivity to outliers. To find if the difference between the performance of the $\ell_2$ loss and HL-Gauss can be explained by the presence of outliers, we can compare HL-Gauss with the $\ell_1$ loss, which is robust to outliers. The model trained with $\ell_1$ still underperformed HL-Gauss on three data sets, which suggests that the gap between the performance of HL-Gauss and the $\ell_2$ loss is not merely due to the presence of outliers.

Choices of target distribution other than a Gaussian distribution are not effective. HL-OneBin and HL-Uniform appear to have fewer advantages, as shown in the results, and both can actually do worse than $\ell_2$. A uniform target distribution worsened the error on all data sets. An important artifact of label smoothing in HL-Uniform is that it biases the mean of the distribution. Since the mean of a uniform distribution is the center of the range, mixing the target distribution with a uniform distribution pulls the mean towards the center of the range. Recall from Section \ref{sec:theoretical_analysis} that HL will suffer from large bias if the mean of the target distribution is far from the target.

\begin{wrapfigure}[18]{r}{0.39\textwidth}
\centering
\includegraphics[width=0.39\textwidth]{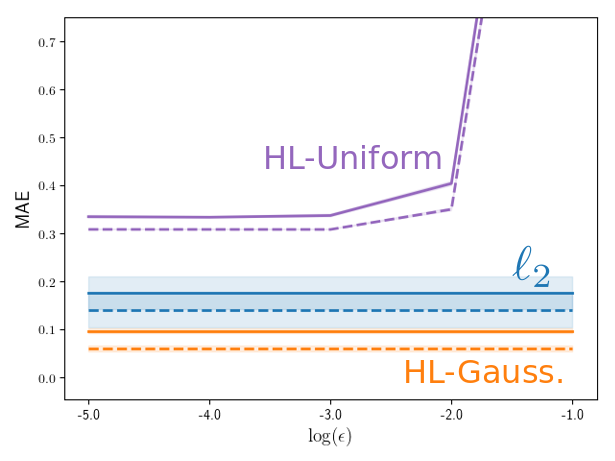}
    \caption{HL-Uniform results on CT Scan. Dotted and solid lines show train and test errors respectively. The parameter $\epsilon$ is the weighting on the uniform distribution and raising it only impaired performance.}
    \label{fig:uniform_ctscan}
\end{wrapfigure}
 
Finally, $\ell_2$+softmax performed close to HL-Gauss on the CT Position data set and slightly worse on the other tasks. While this model does not estimate the target distribution by minimizing the KL-divergence, it still benefits from the softmax nonlinearity in the output layer like HL-Gauss. The comparison between $\ell_2$+softmax and HL-Gauss shows that this softmax nonlinearity appears to be beneficial, but it cannot totally explain HL-Gauss's performance. In our experiments on the three other data sets in Appendix \ref{sec:appdx_results}, $\ell_2$+softmax consistently underperformed HL-Gauss. Of additional note is that $\ell_2$+softmax+mean is slightly worse than $\ell_2$+softmax. The $\ell_2$+softmax+mean can be viewed as just adding the softmax nonlinearity to the NN and still having a scalar final output to predict the target, whereas $\ell_2$+softmax has a squared loss per bin and is more similar to HL-Gauss.  

The comparisons in this section address several questions regarding HL-Gauss's performance. HL-Gauss can often work better than $\ell_2$, and this rise in performance cannot be solely attributed to learning a flexible distribution, using data augmentation in the labels, being robust to outliers in the data set, or employing a softmax nonlinearity in the model.

\subsection{Bias and the Choice of Target Distribution} \label{bias}

\begin{wrapfigure}[17]{r}{0.39\textwidth}
    \centering
    \includegraphics[width=0.39\textwidth]{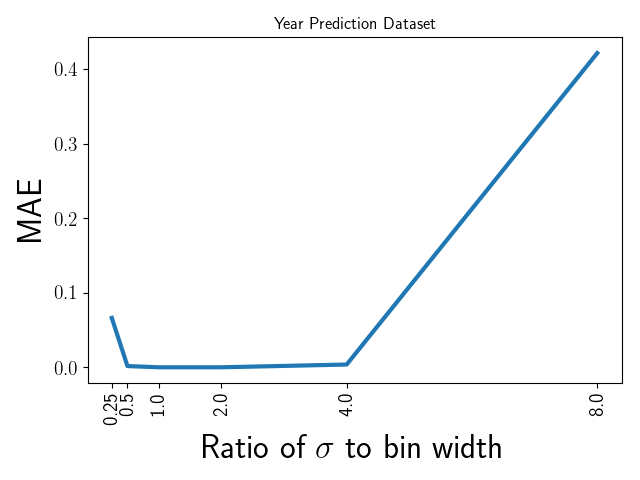}
    \caption{MAE between the means of HL targets and the original labels. It can be seen that extreme values of $\sigma$ on either side biased the mean of the target distributions.}
    \label{fig:label_bias}
\end{wrapfigure}

Characterizing the discretization bias in section \ref{sec:theoretical_analysis} showed a link between the bias of the loss and the choice of $\sigma$ in HL-Gauss. This parameter can also affect the bias from truncation as it controls the probability in the tails of the target distribution that exceed the support of the histogram. This section empirically investigates the effect of the target distribution on the bias of the loss. We first show how different values of $\sigma$ result in different levels of bias. Then, we compare the target distributions of HL-Gauss and HL-OneBin with another target distribution which reduces the bias to zero. Finally, we evaluate HL-OneBin and HL-Gauss using different parameter values to find if these parameters control a bias-variance trade-off.

Weightings in HL-Gauss (which represent the area under the target distribution's pdf in the range of each bin) are computed before training, and the histogram is trained to mimic these weightings. To obtain low bias, we want the mean of the trained histogram to be close to the original target. To analyze how close the loss is to this situation in practice, we plot the mean absolute error between the original targets in our data sets and the mean of the histograms that perfectly match weightings obtained from those targets. Figure \ref{fig:label_bias} shows this error for HL-Gauss with 100 bins and different choices of $\sigma$.

This simple analysis suggests that choosing a Gaussian target distribution with a carefully tuned variance parameter (rather than Dirac delta) can come with the extra benefit of reducing bias. To see to what extent the gap between HL-OneBin and HL-Gauss's performance is because of this bias and to find if histogram losses in general are suffering from a large bias, we introduce a target distribution that reduces the discretization bias to zero (regardless of the number of bins) and compare it to HL-OneBin and HL-Gauss in the ordinary setting of 100 bins. As we showed in Section \ref{sec:theoretical_analysis}, if the target distribution $\tdy_y$ is chosen so that, for each sample, the expected value of the closest histogram density to it is exactly $y$, discretization bias is zero. Assume $\cen_i$ is the last bin center before $y$ (so $y$ is somewhere between $\cen_i$ and $\cen_{i+1}$). The idea is to use a histogram density with probability $1 - \frac{y-\cen_i}{w}$ in bin $i$ and probability $\frac{y-\cen_i}{w}$ in bin $i+1$ as the target distribution.\footnote{This is close to the projection operator by \citet{bellemare2017distributional}, although that projection was introduced to fix the problem of disjoint support for discrete distributions. \citet{rowland2018analysis} also studied this projection operator and found its ability of preserving the expected value to be beneficial in reinforcement learning.} The expected value of this target distribution is always $y$, so this target distribution eliminates discretization bias. We call the histogram loss with this choice of target distribution \textit{HL-Projected} and compare it with HL-OneBin and HL-Gauss on the four data sets.

\begin{table}[!ht]
    \centering
    \begin{tabular}{lllll}                                                                                                  \hline                                                                                                                   Method       & CT Scan                         & Song Year                        & Bike Sharing                             & Pole                            \\                                                                    \hline                                                                                                                   HL-OneBin    & $ 0.335${\scriptsize $(\pm 0.004)$} &  $ 5.906${\scriptsize $(\pm 0.020)$} & $ 26.689${\scriptsize $(\pm 0.280)$} &  $ 1.264${\scriptsize $(\pm 0.019)$}  \\                                                                     HL-Projected & $ 0.103${\scriptsize $(\pm 0.003)$} & $ 5.917${\scriptsize $(\pm 0.019)$} & $ 26.180${\scriptsize $(\pm 0.348)$} & $ 0.741${\scriptsize $(\pm 0.018)$} \\                                                                    HL-Gauss.    & $ 0.098${\scriptsize $(\pm 0.005)$} & $ 5.903${\scriptsize $(\pm 0.010)$} & $ 25.525${\scriptsize $(\pm 0.331)$} & $ 0.714${\scriptsize $(\pm 0.024)$} \\                                                                     \hline                                                                                                                  \end{tabular}
    \caption{Discretization bias experiment. The reported numbers are Test MAE. HL-Projected achieved a Test MAE close to that of HL-Gauss and performed noticeably better than HL-OneBin. The gap between HL-Gauss and HL-Projected is small but appears on all the four data sets, and the gap is a bit bigger for Test RMSE (see Table \ref{tab:proj_ctscan_appdx}).}
    \label{tab:proj_ctscan}

\end{table}

While the difference between the performance of HL-Projected and HL-Gauss is small, it is consistent on the four data sets. The comparison shows that using this target distribution instead of HL-OneBin often results in a noticeable reduction in the errors, but the performance of HL-Gauss remains unbeaten. Although HL-Projected removes discretization bias, it still does not have HL-Gauss's desirable property of punishing faraway predictions more severely. 
Overall, the bias of HL-Gauss with proper $\sigma$ and padding is negligible, as discussed further in Appendix \ref{sec:bias_simulation}.

The parameters in the loss, namely the number of bins and $\sigma$, can affect the bias. A hypothesis is that a good choice of parameters for HL can reduce overfitting and place the method in a sweet spot in a bias-variance trade-off. Two experiments were designed to find if there is a bias-variance trade-off at work. In the first experiment, we changed the number of bins while keeping the padding and target distribution $\sigma$ fixed. The second experiment studies the effect of changing $\sigma$ on the performance of HL-Gauss. Figure \ref{fig:bias_var_sigma_ctscan} (Left) evaluates HL-Gauss and HL-OneBin with different numbers of bins, and Figure \ref{fig:bias_var_sigma_ctscan} (Right) shows the effect of changing $\sigma$ on HL-Gauss.

\begin{figure*}[!ht]
\centering
 \begin{subfigure}[b]{\figwidthtwo}
         \includegraphics[width=\textwidth]{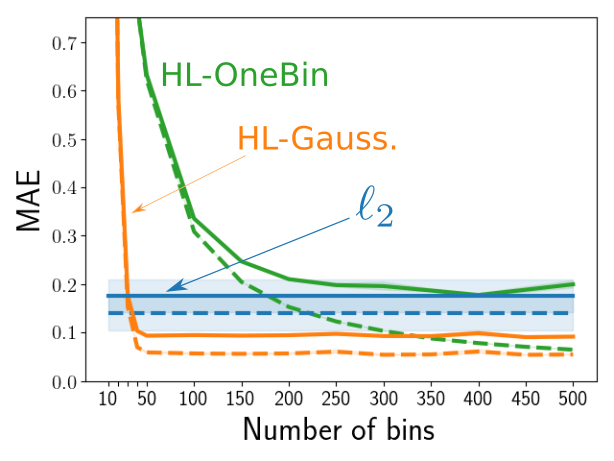}
\label{fig:bias_var_ctscan}
 \end{subfigure}
 \begin{subfigure}[b]{\figwidthtwo}
         \includegraphics[width=\textwidth]{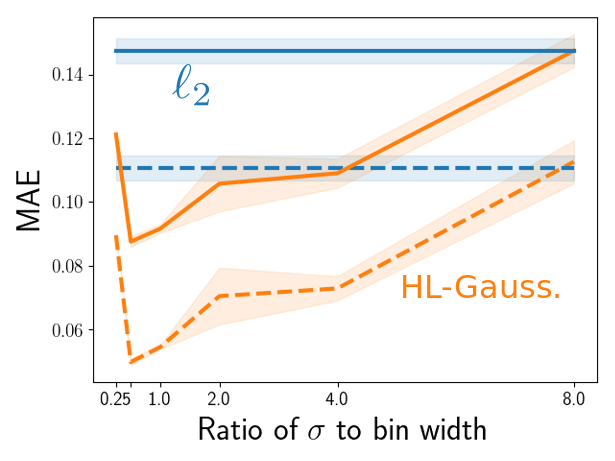}
\label{fig:sigma_ctscan}
 \end{subfigure} 
\caption{(Left) Changing the number of bins on CT Scan. Dotted and solid lines show train and test errors, respectively. The performance of $\ell_2$ is shown as a horizontal line for comparison. A small number of bins resulted in high train and test errors, indicating high bias. A higher number of bins generally did not result in a rise in test error, with the exception of HL-OneBin's test RMSE in Appendix \ref{sec:appdx_results}. (Right) Changing the parameter $\sigma$ on CT Scan. Extreme values of $\sigma$ resulted in bad performance. The error rates for train and test followed each other when changing this parameter.}
\label{fig:bias_var_sigma_ctscan}
\end{figure*}

Evaluating different numbers of bins shows that, although the quality of solution deteriorates when the bins are too few, there is no sign of overfitting in HL-Gauss with a higher number of bins. In the experiment on the $\sigma$ parameter, we observe a v-shaped pattern. Increasing $\sigma$ at first reduces both the train and test errors, and after a point, both errors begin to rise. Target distributions with higher $\sigma$ have longer tails, so these high errors may be the result of truncation. There seems to be no evidence that the choice of $\sigma$ caused overfitting, as both train and test error decreased and then increased together as we varied $\sigma$.

The analysis in this section shows the effect of the target distribution and the parameters of HL on the bias of the loss. Most of the gap between the performance of HL-Gauss and HL-OneBin can be explained by the high bias of HL-OneBin. Both the number of bins and $\sigma$ can affect the bias of HL, but reducing this bias generally does not result in overfitting.

\subsection{Quality of Learned Representations} \label{rep}

Learning a distribution, as opposed to a single statistic, provides a more difficult target---one that could require a better representation.
The hypothesis is that among the functions $f$ in the function class $\mathcal{F}$, there is a set of functions
that can predict the targets almost equally well. To distinguish among these functions, a wider range of tasks can make it more likely to select the true function, or at least one that generalizes better. We conducted three experiments to test the hypothesis that an improved representation is learned.

First, we trained with HL-Gauss and $\ell_2$ to obtain their representations. We tested (a) swapping the representations and re-learning only the last layer, (b) initializing with the other's representation, and (c) using the same fixed random representation for both. For (a) and (c), the optimizations for both are convex, since the representation is fixed. If the challenge of predicting a distribution in HL results in a better representation, one would expect the gap between $\ell_2$ and HL-Gauss to go away when each one is trained on or initialized with the other's representation.

\begin{table*}[!ht]
\centering
\begin{tabular}{llllll}\hline                                                                                                                              & Loss               & Default                              & Fixed                                & Initialized                          & Random                                \\                                                \hline                                                                                                                   Train MAE  & $\ell_2$         & $ 0.140${\scriptsize $(\pm 0.036)$} & $ 2.490${\scriptsize $(\pm 0.094)$} & $ 0.148${\scriptsize $(\pm 0.013)$} & $ 9.233${\scriptsize $(\pm 0.157)$}  \\                                                 Train MAE  & HL-Gauss. & $ 0.061${\scriptsize $(\pm 0.006)$} & $ 0.153${\scriptsize $(\pm 0.020)$} & $ 0.063${\scriptsize $(\pm 0.002)$} & $ 2.604${\scriptsize $(\pm 0.103)$}  \\                                                 Train RMSE & $\ell_2$         & $ 0.183${\scriptsize $(\pm 0.048)$} & $ 3.465${\scriptsize $(\pm 0.158)$} & $ 0.205${\scriptsize $(\pm 0.022)$} & $ 12.247${\scriptsize $(\pm 0.172)$} \\                                                 Train RMSE & HL-Gauss. & $ 0.164${\scriptsize $(\pm 0.090)$} & $ 0.219${\scriptsize $(\pm 0.027)$} & $ 0.085${\scriptsize $(\pm 0.004)$} & $ 5.756${\scriptsize $(\pm 0.232)$}  \\                                                  \rowcolor{aliceblue} Test MAE   & $\ell_2$         & $ 0.176${\scriptsize $(\pm 0.034)$} & $ 2.537${\scriptsize $(\pm 0.089)$} & $ 0.185${\scriptsize $(\pm 0.013)$} & $ 9.305${\scriptsize $(\pm 0.191)$}  \\                           \rowcolor{aliceblue}                      Test MAE   & HL-Gauss. & $ 0.098${\scriptsize $(\pm 0.005)$} & $ 0.187${\scriptsize $(\pm 0.019)$} & $ 0.101${\scriptsize $(\pm 0.002)$} & $ 2.727${\scriptsize $(\pm 0.113)$}  \\                                                 Test RMSE  & $\ell_2$         & $ 0.267${\scriptsize $(\pm 0.039)$} & $ 3.551${\scriptsize $(\pm 0.143)$} & $ 0.299${\scriptsize $(\pm 0.025)$} & $ 12.315${\scriptsize $(\pm 0.216)$} \\                                                 Test RMSE  & HL-Gauss. & $ 0.274${\scriptsize $(\pm 0.090)$} & $ 0.287${\scriptsize $(\pm 0.024)$} & $ 0.183${\scriptsize $(\pm 0.005)$} & $ 6.065${\scriptsize $(\pm 0.234)$}  \\                                                \hline                                                                                                                  \end{tabular} 
\caption{Representation results on CT Scan. We tested (a) swapping the representations and re-learning on the last layer (\textbf{Fixed}), (b) initializing with the other's representation (\textbf{Initialized}), (c) and using the same fixed random representation for both (\textbf{Random}) and only learning the last layer. Using HL-Gauss representation for $\ell_2$ (first column, Fixed) caused a sudden spike in error, even though the last layer in $\ell_2$ was re-trained. This suggests the representation is tuned to HL-Gauss. The representation did not even seem to give a boost in performance, as an initialization (second column, Initialization). Finally, even with the same random representation, where HL-Gauss cannot be said to improve the representation, HL-Gauss still obtained substantially better performance.}\label{tab:representation_ctscan}
\end{table*}

The results in Table \ref{tab:representation_ctscan} are surprisingly conclusive. They suggest that the performance gain is not due to better representations, and even under a random representation, HL-Gauss performs noticeably better than $\ell_2$.

In the second experiment, we tested a network called Multi-Task that predicted both the expected value (with the $\ell_2$ loss) and the distribution (using HL-Gauss). The predicted distribution was not used for evaluation and was only present during the training as an auxiliary task to improve the representation. If predicting the extra information in a distribution is the reason for the superior performance of HL-Gauss, a regression network with the auxiliary task of predicting the distribution is expected to achieve a similar performance. Figure \ref{fig:multitask_moments_ctscan} (Left) shows the results on the four data sets.

It can be seen from these results that raising the coefficient for the auxiliary task does not improve the performance, and the new model generally has higher errors than HL-Gauss. The conclusion is that the extra information required for predicting the distribution cannot explain the gap between the $\ell_2$ loss and HL-Gauss.

It might be argued that learning a histogram is not a suitable auxiliary task for regression, and the negative results above can be due to the incompatibility between predicting the mean with the $\ell_2$ loss and predicting the distribution with HL-Gauss. We trained the $\ell_2$ model with another auxiliary task in the third experiment. For the new task, predicting higher moments of the distribution, the target is raised to higher powers and the extra outputs try to estimate it by reducing the $\ell_2$ loss. These auxiliary tasks of predicting higher moments are only used to improve the representation during training and are discarded at inference time. The results are shown in Figure \ref{fig:multitask_moments_ctscan} (Right).

\begin{figure*}[!ht]
\centering
 \begin{subfigure}[b]{\figwidthtwo}
         \includegraphics[width=\textwidth]{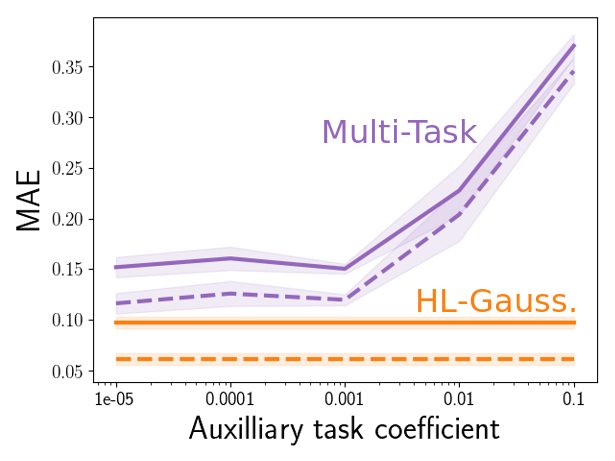}
\end{subfigure}
 \begin{subfigure}[b]{\figwidthtwo}
         \includegraphics[width=\textwidth]{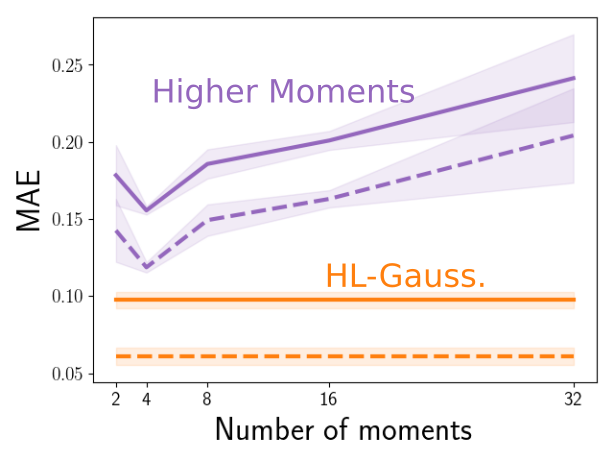}
\end{subfigure} 
\caption{(Left) Multi-Task Network results on CT Scan. The loss function is the mean prediction's squared error plus the distribution prediction's KL-divergence multiplied by a coefficient. The horizontal axis shows the coefficient for KL-divergence. Dotted and solid lines show train and test errors, respectively. There was not a substantial drop in the error rate when increasing the coefficient in the loss, and the performance only became worse. (Right) Higher Moments Network results on CT Scan. The horizontal axis shows the number of moments (including the mean). There was no substantial decrease in error when predicting higher moments. With four moments, there was a slight reduction in error, but the new model still performed worse than HL-Gauss in terms of Test MAE.}
\label{fig:multitask_moments_ctscan}
\end{figure*}

There is little gain in predicting higher moments, and the new model still underperforms HL-Gauss. Like the previous experiment, predicting this extra information about the output distribution appears to have little benefit as an auxiliary task.

The experiments in this section consistently reject the hypothesis that the challenge of predicting the distribution is the main factor behind HL-Gauss's performance. The performance cannot be achieved by exploiting HL-Gauss's representation or by predicting the histogram or higher moments of the distribution as auxiliary tasks.

\subsection{Optimization Properties} \label{opt}
In Section \ref{sec:theoretical_analysis}, we provided a bound on the gradient norm of HL that suggested ease of optimization with gradient descent. In this section, we show two experiments that study the optimization of the $\ell_2$ loss and HL empirically.

First, we recorded the training errors and gradient norms during training to see if using HL-Gauss is beneficial for optimization. Each gradient norm value was normalized by the difference between the training loss observed at that point in the training of the model and the minimum possible value for the loss.\footnote{The minimum value of the $\ell_2$ loss is zero. HL, however, is the cross-entropy to the target distribution, so its minimum value is the entropy of the histogram that best estimates the target distribution.} The goal was to find which model trains faster and has a smoother loss surface. Without the normalization of gradient norms, a simple scaling of the loss would shrink the fluctuations of gradient norms in the plot without making the optimization easier.

\begin{figure*}[!ht]
\centering
 \begin{subfigure}[b]{\figwidthtwo}
         \includegraphics[width=\textwidth]{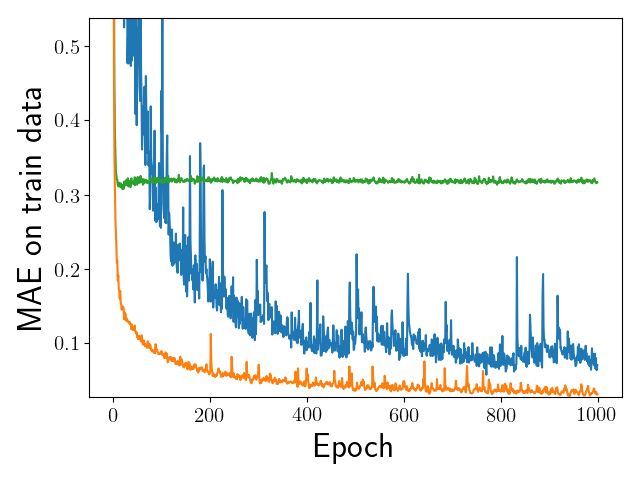}
\label{fig:mae-conv}
 \end{subfigure} 
 \begin{subfigure}[b]{\figwidthtwo}
         \includegraphics[width=\textwidth]{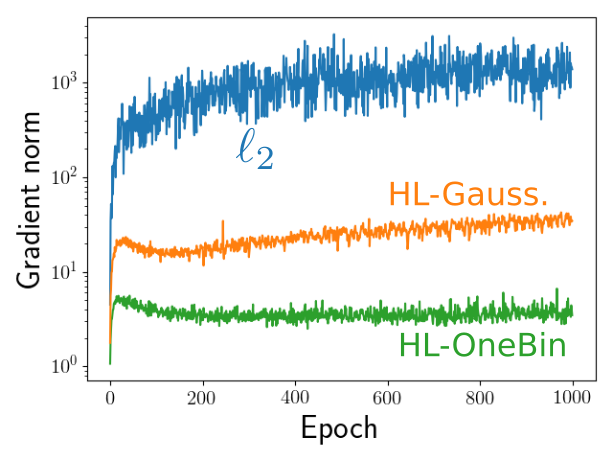}
\label{fig:grad-conv}
 \end{subfigure} 
\caption{Training process on CT Scan. (Left) HL-Gauss reduced the train errors much faster than $\ell_2$. (Right) The gradient norm of $\ell_2$ was highly varying through the training.}
\label{fig:conv_ctscan}
\end{figure*}

The results in Figure \ref{fig:conv_ctscan} suggest that HL-Gauss has more stable gradients than $\ell_2$ and generally reduces the loss by a large amount in the first few epochs. As discussed in Section \ref{sec:theoretical_analysis}, these properties can explain the performance of HL-Gauss. In comparison with HL-OneBin, however, HL-Gauss does not show a noticeable benefit in optimization.

The second experiment explores the role of $\sigma$ in HL-Gauss. The benefit of the parameter $\sigma$ in HL-Gauss can be twofold: (1) improving the final solution and (2) improving the optimization process. To compare the effect of these two properties, we design a network, called Annealing Network, that starts with a high $\sigma$ and gradually reduces it to a small $\sigma$ during training. The initial $\sigma$ is $8$ times the bin width, and during the first $20\%$ of epochs the current value of $\sigma$ is multiplied by a constant $\tau$ at each epoch. The training continues without further annealing once $20\%$ of the epochs are finished. The assumption is that the effect of $\sigma$ on optimization shows up during training. Therefore, if the effect on optimization is substantial, the model that has a high $\sigma$ earlier in training can benefit from it despite its small $\sigma$ at the end. Results are shown in Figure \ref{fig:anneal_ctscan}.

\begin{figure*}[!ht]
\centering
 \begin{subfigure}[b]{\figwidththree}
         \includegraphics[width=\textwidth]{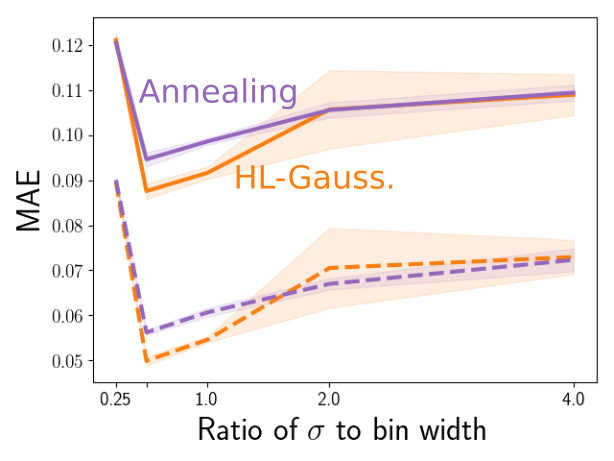}
\end{subfigure} 
 \begin{subfigure}[b]{\figwidththree}
         \includegraphics[width=\textwidth]{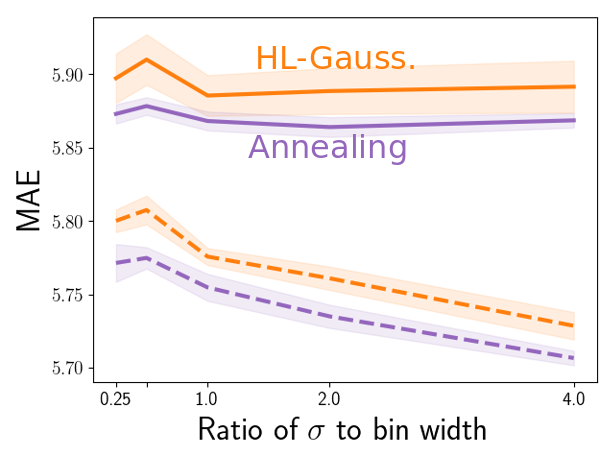}
\end{subfigure} 
  \begin{subfigure}[b]{\figwidththree}
         \includegraphics[width=\textwidth]{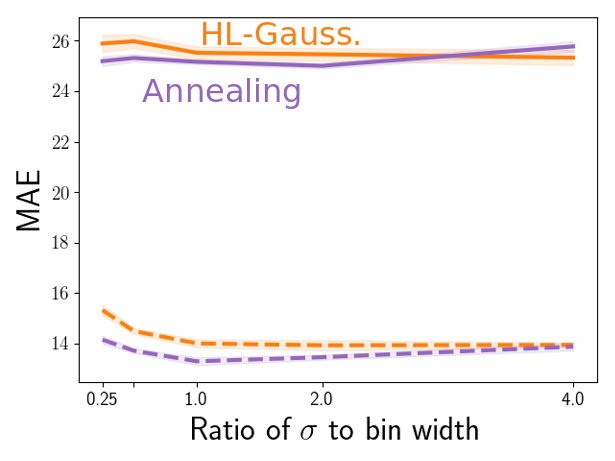}
\end{subfigure} 
\caption{Annealing results on (Left) CT Scan, (Middle) Song Year, and (Right) Bike Sharing. Dotted and solid lines show train and test errors, respectively. The X-axis for Annealing shows the final value of $\sigma$. HL-Gauss keeps $\sigma$ fixed to this value throughout training, while Annealing starts with a higher value of $\sigma$ and reduces it to this value. There is some benefit in starting with a higher $\sigma$ and reducing it on Song Year and Bike Sharing. See Appendix \ref{sec:appdx_results} for Pole and all RMSE results.}
\label{fig:anneal_ctscan}
\end{figure*}

There was some gain in performance when the model started with a large value of $\sigma$ and then reached a small value through annealing compared to training with a fixed small value of $\sigma$. So the parameter $\sigma$ has some impact on the optimization.

The takeaway is that using HL instead of the $\ell_2$ loss helps with optimization, which confirms the previous observation by \citet{imani2018improving}. Further, the choice of the target distribution and the parameter in the loss have some effect on the optimization.

\subsection{Corrupted Targets and Input Perturbations} \label{outlier}

We tested the $\ell_2$ loss, $\ell_1$ loss, HL-OneBin, and HL-Gauss on data sets with corrupted targets. At each level, we replaced a ratio of the training targets with numbers sampled uniformly from the range of targets in the data set. Results in Figure \ref{fig:outliers_sensitivity_ctscan} (Left) show that HL-Gauss and HL-OneBin are more robust to corrupted targets than $\ell_2$ but not as robust as $\ell_1$. These results show a situation where the difference between the performance of HL and the $\ell_2$ loss is more pronounced. Interestingly, HL can be worse than the $\ell_1$ loss in the presence of corrupted targets.

We then compared the sensitivity of the outputs of models trained with the $\ell_2$ loss, $\ell_1$ loss, HL-OneBin, and HL-Gauss to input perturbations. The measure we used was the Frobenius norm of the Jacobian of the model's output with respect to the input. Predictions in a regression problem are scalar values, so the Jacobian becomes a vector whose elements are the derivative of the output with respect to each input feature. A high value of this measure means that a slight perturbation in inputs will result in a drastic change in output. For a classification problem, there are theoretical and empirical results on the connection of this measure and generalization \citep{novak2018sensitivity,sokolic2017robust}. Also, a representation that is less sensitive to input perturbations with this measure has been shown to improve the performance of classifiers \citep{rifai2011contractive}. Following \cite{novak2018sensitivity}, we evaluated this measure at the test points and reported the average. Figure \ref{fig:outliers_sensitivity_ctscan} (Right) summarizes the results.

\begin{figure*}[!ht]
\centering
 \begin{subfigure}[b]{\figwidthtwo}
         \includegraphics[width=\textwidth]{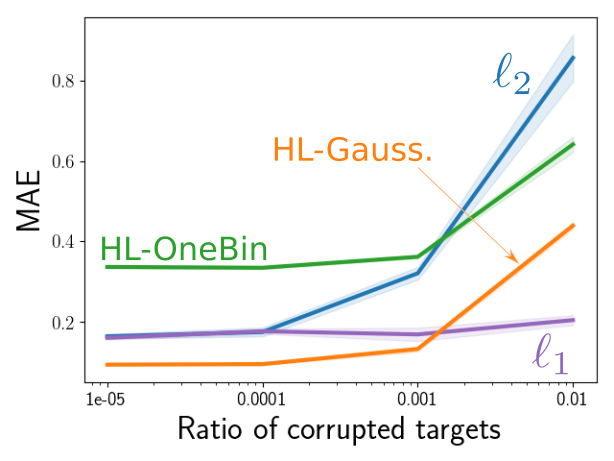}
\end{subfigure} 
  \begin{subfigure}[b]{\figwidthtwo}
         \includegraphics[width=\textwidth]{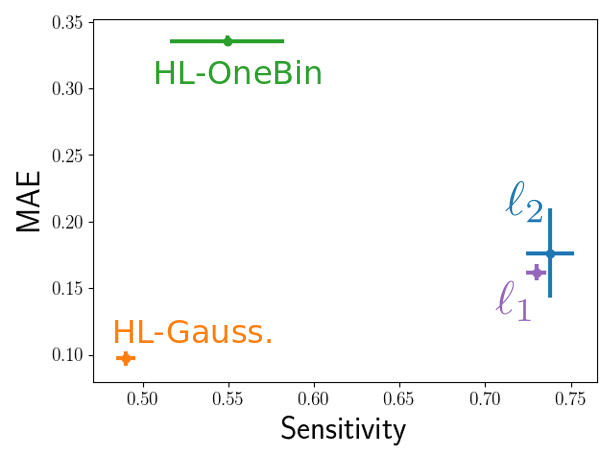}
\end{subfigure} 
\caption{(Left) Corrupted targets on CT Scan. The lines show test errors. The $\ell_1$ loss was robust to corrupted targets, and the $\ell_2$ loss was affected the most. (Right) Sensitivity results on CT Scan. The horizontal axis shows the sensitivity of the model's output to input perturbations (left means less sensitive), and the vertical axis shows the test error (lower is better). HL-Gauss showed less sensitivity and lower test errors than $\ell_1$ and $\ell_2$.}
\label{fig:outliers_sensitivity_ctscan}
\end{figure*}

The results indicate that predictions of the models that are trained with HL are less sensitive to input perturbations than the ones trained with $\ell_1$ or $\ell_2$.

\subsection{Summary of the Empirical Study}

In the empirical study, we observe that
\begin{enumerate}
    \item HL-Gauss, even without per-dataset tuning, is a viable choice in regression and can often outperform the $\ell_2$ loss across datasets (Section \ref{overall}).
    \item The softmax nonlinearity contributes to HL-Gauss's performance, as shown by the results of $\ell_2$+Softmax, though it does not fully account for the improved performance under HL-Gauss. (Section \ref{overall}).
    \item The Gaussian target distribution reduces the discretization bias compared to HL-OneBin, and the remaining bias is bounded and negligible with proper $\sigma$ and padding (Section \ref{bias}).
    \item The results are insensitive to the number of bins for $k > 30$ and to $\sigma$ within the interval $[0.5,4.0]$ times the bin width (Section \ref{bias}).
    \item A standard configuration of 100 bins with $\sigma$ set to twice the bin width and padding of three times $\sigma$ on each side performs well across all datasets without per-dataset tuning (Section \ref{bias}).
    \item Predicting the full distribution does not force better learned representations: HL-Gauss retains its advantages even with a fixed random representation (Section \ref{rep}).
    \item HL-Gauss produces more stable gradients and reduces training error faster than $\ell_2$ (Section \ref{opt}).
    \item HL-Gauss is more robust than $\ell_2$ to corrupted targets, though it is more sensitive to corrupted targets than $\ell_1$ (Section \ref{outlier}).
    \item HL-Gauss produces models whose outputs are less sensitive to input perturbations than both $\ell_1$ and $\ell_2$ (Section \ref{outlier}).

\end{enumerate}

\textbf{Limitations of the study:} These conclusions are given the empirical set-up used in this work. The architecture, stepsize, and epochs was chosen based on $\ell_2$ performance, and it is possible different conclusions could arise with per-loss hyperparmeter tuning. The study also assumed knowledge of the range of the target; though this is typically a reasonable assumption, results may be different if the range had to be estimated (incrementally). Finally, applications of regression are broad, and the data sets in our empirical study and applications sections only cover a small set. Generalizing our findings to other applications such as depth estimation \citep{diode_dataset} or survival prediction \citep{haider2020effective} requires additional investigation. Similarly, emergence of new architectures and foundation models \citep{bommasani2021opportunities} calls for more extensive studies to ask whether our observed patterns extend to the training and fine-tuning of these models.  
\section{Applications} \label{sec:applications}

This section explores the applicability of HL-Gauss beyond the previous simple regression data sets. We evaluate several network architectures on time-series prediction tasks using both HL-Gauss and $\ell_2$ loss. The network choices include linear models, long short-term memory networks (LSTMs, \citep{hochreiter1997long}), gated recurrent units (GRUs, \citep{chung2014empirical}), and vanilla transformers~\citep{vaswani2017transformer}. We then extend the experiments to state-value prediction for reinforcement learning, where we evaluate the performance of both losses on the Atari prediction task.

\subsection{Time Series Prediction}

We use the Electricity Transformer Dataset (ETD,~\citep{haoyi2021informer}) for the time series prediction experiments. The data set contains $6$ features relating to the electricity loads, in addition to the oil temperature, which is the prediction target in all of our experiments. The data is collected from two different transformers and has two variants: hourly data sets, which were collected from each transformer at an hourly rate, and 15-minute data sets, where the data collection happened $4$ times per hour. We consider the multi-step prediction task similar to previous work on this data set~\citep{wu2021autoformer,Zeng2022ltsflinear}, where given a history of length T, the goal is to predict the next k steps of the target (i.e., the oil temperature). In our experiment, we use $T=k=96$.

We investigated the performance of four different network architectures typically used for time-series predictions. These architectures are LSTM~\citep{hochreiter1997long}, GRUs ~\citep{chung2014empirical}, Vanilla Transformer~\citep{vaswani2017transformer}, and a simple linear model suggested by~\cite{Zeng2022ltsflinear}. For each architecture, we had two variations, one trained with HL-Gauss loss and one trained with $\ell_2$ loss. For HL-Gauss, we divided the output range to $100$
bins, set $\sigma$ to twice this bin width, and extended the support with additional padding as described in Appendix $\ref{sec:appdx_application_details}$. 
The same appendix section provides more details on the hyperparameters and the experiment setup.

\begin{table}
\resizebox{\textwidth}{!}{
\begin{tabular}{llcccc}
\toprule
Architecture & Loss & ETTh1 & ETTh2 & ETTm1 & ETTm2 \\
\midrule
Linear & $\ell_2$ & 0.07105 $\pm$ 0.001022 & 0.1624 $\pm$ 0.002041 & 0.03402 $\pm$ 0.001153 & 0.07743 $\pm$ 0.0009791 \\
 & HL-Gauss & 0.3826 $\pm$ 0.003224 & 0.3389 $\pm$ 0.003415 & 0.3243 $\pm$ 0.00557 & 0.2569 $\pm$ 0.00266 \\
\midrule
LSTM & $\ell_2$ & 0.775 $\pm$ 0.03489 & 3.194 $\pm$ 0.2182 & 0.3192 $\pm$ 0.1191 & 0.5605 $\pm$ 0.1366 \\
 & HL-Gauss & \textbf{0.6575 $\pm$ 0.04493} & \textbf{3.165 $\pm$ 0.4313} & \textbf{0.2437 $\pm$ 0.03809} & \textbf{0.3241 $\pm$ 0.08616} \\
\midrule
GRU & $\ell_2$ & 0.5414 $\pm$ 0.02092 & 1.208 $\pm$ 0.3515 & 0.399 $\pm$ 0.0299 & 0.3328 $\pm$ 0.05124 \\
 & HL-Gauss & 0.6733 $\pm$ 0.0863 & \textbf{0.8228 $\pm$ 0.4614} & \textbf{0.2468 $\pm$ 0.02231} & \textbf{0.3325 $\pm$ 0.1206} \\
\midrule 
Transformer & $\ell_2$ & 0.533 $\pm$ 0.02865 & 1.922 $\pm$ 0.3305 & 0.2427 $\pm$ 0.03834 & 0.2097 $\pm$ 0.03613 \\
 & HL-Gauss & 0.6389 $\pm$ 0.1288 & 4.082 $\pm$ 0.7923 & 0.2587 $\pm$ 0.06254 & 0.3529 $\pm$ 0.1232 \\
\bottomrule
\end{tabular}
}
\caption{Test MSE results on the ETD dataset. For each (architecture, loss, data set) tuple, the table shows the mean averaged over $5$ independent seeds and $95\%$ confidence interval.  {\bftab Bold} text indicates better performance for HL-Gauss. HL-Gauss outperforms $\ell_2$ for recurrent architectures (LSTM and GRU) but underperforms $\ell_2$ for linear architecture, and mostly ties on the vanilla transformer.}
\label{tab:ett_results}
\end{table}

Table~\ref{tab:ett_results} shows the test MSE for all the architectures on the 4 data sets in ETD, and we show the test MAE results in Table~\ref{tab:ett_results_mae} in Appendix~\ref{sec:appdx_application_details}. The results suggest that the benefit of HL-Gauss for time-series prediction tasks is architecture-dependent. Recurrent architectures (LSTM and GRU) consistently benefit from HL-Gauss, achieving significantly lower test MSE on most datasets. In contrast, the linear architecture did not benefit from having HL-Gauss. One possible explanation for the architecture dependence is that, for architectures with complex optimization dynamics such as LSTM and GRU, HL-Gauss helps stabilize the gradient dynamics (Section ~\ref{opt_theory}) and, as a result, improves their performance. On the other hand, simple models such as linear models are well-suited for direct $\ell_2$ optimization and thus do not benefit from HL-Gauss.

\renewcommand*{\thefootnote}{\arabic{footnote}}

\subsection{State Value Prediction}

A central problem in reinforcement learning (RL) is state-value prediction. An RL agent interacts with an environment modeled as a Markov decision process, and each action taken by the agent results in a transition, reward, and discounting or possible termination. The goal of state-value prediction is to predict the expected sum of discounted future rewards from a state.

\begin{figure}[!htbp]
    \centering
     \includegraphics[width=\textwidth]{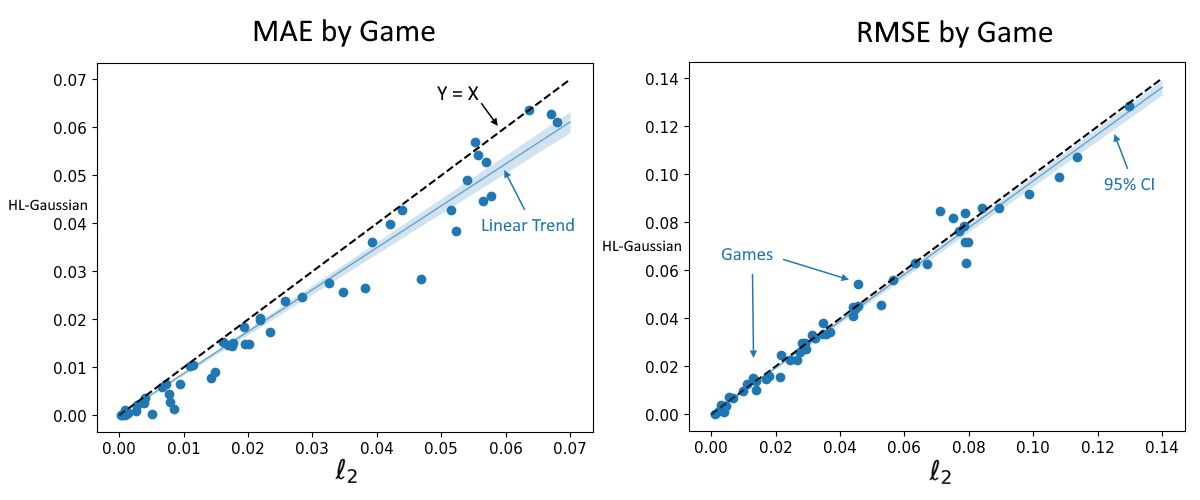}
    \caption{Atari results by game. Points below the dashed black line are games where HL-Gauss outperformed $\ell_2$. The estimated error ratio (blue line) is significant at 95\% confidence for both metrics, indicating that HL-Gauss has a multiplicative advantage across games.}
    \label{fig:atari_graphs}
\end{figure}

In our experiments, we predict state values for agents playing 58 different Atari 2600 games on the Arcade Learning Environment \citep{bellemare2013ale}. In this setting, the state is observed through frames from the screen, and the reward is based on the game score obtained. We use actions from the Atari Prediction Benchmark from \cite{javed2023atari}. This benchmark contains at least 10 million actions for each game. The actions were produced by a pre-trained rainbow DQN agent from the ChainerRL model zoo \citep{fujita2021chainerrl}. While the benchmark paper explores the partially-observable setting with a single frame per observation, we use a stack of 4 frames to make it fully-observable. For HL-Gauss, the bin width is computed using Equation \eqref{eq:hist_bins}. Again, we divide the output range to 100 bins, set $\sigma$ to twice this bin width, and then increase the bin width to add some padding as described in Appendix \ref{sec:appdx_application_details}. We use the first 5\% of the episodes as the test set and train the models for one epoch on the remaining episodes. We exclude 3 games from our analysis in which the agent received no rewards (namely Tennis, Venture, and Montezuma Revenge).

Figure \ref{fig:atari_graphs} shows the error of HL-Gauss against $\ell_2$ across the 55 remaining games. We see that HL-Gauss results in lower errors for most games. To evaluate the statistical significance, we model the error ratio and test for a consistent multiplicative advantage across games. Performing a $t$-test on the log-transformed error ratios, we find moderate ($p=0.0294$) and strong evidence ($p < 10^{-5}$) that HL-Gauss outperforms $\ell_2$ across Atari games in Test RMSE and Test MAE, respectively.

\section{Conclusion and Future Work}  \label{sec:conclusion}

We analyzed a recent approach to regression, the histogram loss (HL) \citep{imani2018improving}, which consists of the KL-divergence between a flexible histogram prediction and target distributions. Experiments on a variety of data sets and networks showed that this loss function often outperforms the oft-used $\ell_2$ loss with a fixed set of recommended hyperparameter values and without the need for careful hyperparameter sweeps. The recommended rule of thumb is using 100 evenly sized bins, setting $\sigma$ to twice the bin width, and then extending the range by three times $\sigma$ on each side padding.

Besides providing a solution for regression, HL offers a testbed for analyzing some of the open questions in previous work. A question that we explored in depth was one of the hypotheses put forward by \citet{bellemare2017distributional}: whether the performance gap between a model that learns the full distribution and one that only estimates the mean can be attributed to their representations, in a manner analogous to multi-task learning. Our experiments suggest that this is not the case in regression. Instead, the bound on the gradient norm of HL as well as its stability during training hint at a better-behaved loss surface as an important factor in effect. An additional factor in HL's performance is the softmax nonlinearity that models the histogram distribution, as we found that minimizing the $\ell_2$ loss between the target and the mean of a histogram distribution could yield an error rate close to that of HL.

Another question was the impact of the type of target distribution on performance. The method for constructing the target distribution has an effect on bias of the loss, and a Gaussian target distribution with a reasonable bandwidth can result in a lower bias than one-hot targets. However, it is also important to note that this is not the only benefit of a Gaussian target distribution, since even HL with an unbiased target distribution underperformed HL-Gauss.

Our results on the behavior of different models during optimization are still preliminary, and a more in-depth investigation is needed for a crisp conclusion. There is a growing literature on understanding loss surfaces of neural networks. Theoretical analyses are often insightful but require strong assumptions on the data or model \citep{soltanolkotabi2018theoretical,choromanska2015loss,jacot2018neural,arora2018optimization}. An alternative line of research compares loss surfaces empirically by techniques like projection on lower dimensions or slightly perturbing the parameters in random directions, often to compare the flatness of the solutions \citep{hochreiter1997flat,keskar2016large,li2018visualizing}. There are three important challenges in performing such comparison. First, intuitions about optimization in low-dimensional surfaces do not necessarily transfer to higher dimensions. Second, sometimes (like when comparing HL and $\ell_2$ loss) the surfaces compared are of different dimensionalities, and this difference can be a confounding factor. Third, a causal connection between the performance of the network and some measures of flatness is disputed, as solutions with similar performance but varying levels of flatness can be easily constructed \citep{dinh2017sharp}.

Despite the challenges above, investigating possible connections between performance gains in distribution learning and flatness is promising. Training regimes and optimizers that implicitly or explicitly enforce flatness have shown better final performance, an initial phase of fast learning, higher robustness to some degree of label noise, and less sensitivity to input perturbations \citep{keskar2016large,hoffer2017train,foret2020sharpness,jiang2019fantastic,novak2018sensitivity}. In our empirical study, we found HL-Gauss had these exact properties.

The experiment settings and baselines in this paper were focused on singling out and studying the points of difference between HL and the $\ell_2$ loss. Further experiments can be designed to evaluate intersections of these factors. As a simple example, one could switch the $\ell_2$ loss to the $\ell_1$ loss in our $\ell_2$+softmax baseline. The new model can benefit from robustness to outliers and the nonlinearity in the softmax output layer.

In terms of applications, a future direction would be finding problem-specific or even adaptive discretization methods. Breaking down the target range to even bins might not be reasonable in some cases. For example, consider the task of predicting the number of views of an online video. The presence of viral videos that heavily stretch the range of targets in the data set will face a model designer with a trilemma: (1) choosing a wide support that covers the targets and finely discretizing it which results in a network with an enormous last layer, (2) lowering the number of bins and turning to a coarse discretization which loses information about small differences between the targets, or (3) truncating the support which requires drawing a line between informative data points and outliers and may also excessively limit the range of values the network can predict. An investigation into best practices or even automatic discretization methods could make HL easier to use in such problems. 
\acks{We would like to thank the Natural Sciences and Engineering Research Council of Canada (NSERC), the Canada CIFAR AI Chair Program, and Amii for research funding and the Digital Research Alliance of Canada for the computational resources. K.L. is supported by the Konrad Zuse School of Excellence in
Learning and Intelligent Systems (\href{https://eliza.school}{ELIZA}) through the DAAD programme
Konrad Zuse Schools of Excellence in Artificial Intelligence, sponsored by the
Federal Ministry of Education and Research. We also thank Nick Janßen for pointing out an issue in a previous version of the time-series experiments.}

\clearpage
\appendix

\section{Proofs} \label{sec:appdx_proofs}

\proplipschitz*

\begin{proof}
Let $b_i = \phivec_\thetavec(\xvec)^\top \wvec_i$ and $e_i = \exp(b_i)$. Then, since $\hist_j(\xvec) =  \frac{e_j}{\sum_{l=1}^\nbins e_l}$, for $j \neq i$
\begin{align*}
\frac{\partial}{\partial b_i} \hist_j(\xvec) 
&= \frac{\partial}{\partial b_i} \frac{e_j}{\sum_{l=1}^\nbins e_l} 
= -\frac{e_j}{\left(\sum_{l=1}^\nbins e_l\right)^2} e_i   
= - \hist_j(\xvec) \hist_i(\xvec) 
\end{align*}
For $j = i$, we get
\begin{align*}
\frac{\partial}{\partial b_i} \hist_j(\xvec) 
&= \frac{e_i}{\sum_{l=1}^\nbins e_l} - \frac{e_i}{\left(\sum_{l=1}^\nbins e_l\right)^2} e_i
= \hist_i(\xvec)[1 - \hist_i(\xvec)]  
\end{align*}
Consider now the gradient of HL, with respect to $b_i$
\begin{align*}
\frac{\partial}{\partial b_i} \sum_{j=1}^\nbins \distweight_j \log \hist_j(\xvec) 
&= \sum_{j=1}^\nbins \distweight_j \frac{1}{\hist_j(\xvec)} \hist_j(\xvec) (1_{i = j} - \hist_i(\xvec))  \\
&= \sum_{j=1}^\nbins \distweight_j (1_{i = j} - \hist_i(\xvec) )
=  \distweight_i - \hist_i(\xvec) \sum_{i=1}^\nbins \distweight_i 
=  \distweight_i - \hist_i(\xvec)
\end{align*}
Then
\begin{align*}
\frac{\partial}{\partial \wvec_i} \sum_{j=1}^\nbins \distweight_i \log \hist_i(\xvec) 
&=  \left(\distweight_i - \hist_i(\xvec) \right) \phivec_\thetavec(\xvec)\\
\frac{\partial}{\partial \thetavec} \sum_{j=1}^\nbins \distweight_i \log \hist_i(\xvec) 
&=  \sum_{i=1}^\nbins  \left(\distweight_i - \hist_i(\xvec) \right) \nabla \wvec_i^\top \phivec_\thetavec(\xvec).
\end{align*}

The norm of the gradient of HL in Equation $\eqref{eq_hl}$, with respect to $\wvec$ which is composed of all the weights $\wvec_i \in \RR^\nbins$ is
\begin{align*}
 \Big\| \frac{\partial}{\partial \wvec} \sum_{j=1}^\nbins \distweight_j \log \hist_j(\xvec) \Big\|
&\le  \sum_{i=1}^\nbins \Big\| \frac{\partial}{\partial \wvec_i} \sum_{j=1}^\nbins \distweight_j \log \hist_j(\xvec) \Big\|\\
&= \sum_{i=1}^\nbins  \left\| (\distweight_i -\hist_i(\xvec) ) \phivec_\thetavec(\xvec) \right\|
\le  \sum_{i=1}^\nbins | \distweight_i -\hist_i(\xvec) | \| \phivec_\thetavec(\xvec) \| 
\end{align*}
Similarly, the norm of the gradient with respect to $\thetavec$ is
\begin{align*}
\Big\| \frac{\partial}{\partial \thetavec} \sum_{j=1}^\nbins \distweight_j \log \hist_j(\xvec)\Big\| 
&= \Big\| \sum_{i=1}^\nbins (\distweight_i -\hist_i(\xvec) ) \nabla_\thetavec \wvec_i^\top\phivec_\thetavec(\xvec) \Big\|\\
&\le \sum_{i=1}^\nbins \left\| (\distweight_i -\hist_i(\xvec) ) \nabla_\thetavec \wvec_i^\top\phivec_\thetavec(\xvec) \right\|
\le \sum_{i=1}^\nbins | \distweight_i -\hist_i(\xvec) | \lip 
\end{align*}
Together, these bound the norm $\|  \nabla_{\thetavec,\wvec} HL(\td, \pd)  \|$.
\end{proof}

\propbias*

\begin{proof}
The integrals in this proof are over the whole real line unless stated otherwise. We use Fubini's theorem when we change the orders of integrals. 

The function $s$ in Statement 1 is a proper pdf since it is positive at each point and sums to one:
\begin{align*}
    \int s(z) \diff z &= \int \int \rtd(y) \tdy_y(z) \diff y \diff z = \int \rtd(y) (\int \tdy_y(z) \diff z) \diff y = \int \rtd(y) \diff y = 1
\end{align*}
Recall that HL is the cross-entropy between the predicted distribution and the target distribution. We denote the cross-entropy between $\tdy_y$ and $\pdnox$ as $\ent(\tdy_y,\pdnox)$. We now find the expected value of the cross-entropy under $\rtd$, and the prediction distribution that minimizes it:
\begin{align*}
    \EX_{y\sim \rtd}[\ent(\tdy_y,\pdnox)] &= \int \rtd(y) \int \tdy_y(z) \log \pdnox(z) \diff z \diff y  = \int (\int \rtd(y) \tdy_y(z) \diff y) \log \pdnox(z) \diff z \\
    & = \int s(z) \log \pdnox(z) \diff z = \ent(s, \pdnox)
\end{align*}
So, as long as $s$ is in our function class for the prediction distribution, the prediction distribution that minimizes the average cross-entropy is not $\rtd$, but $s$ \textbf{(Statement 1)}. Formally
\begin{align*}
    \argmin_{\pdnox \in \mathcal{P}({\RR})} \EX_{y\sim \rtd} [D_{KL}(\tdy_y||\pdnox)] = \argmin_{\pdnox \in \mathcal{P}({\RR})} \EX_{y\sim \rtd}[\ent(\tdy_y,\pdnox)] = \argmin_{\pdnox \in \mathcal{P}({\RR})} \ent(s, \pdnox) = s
\end{align*}

In the case of the Dirac delta target distribution, $s$ is the same as $\rtd$ and for a Gaussian target distribution, $s$ is the result of Gaussian kernel smoothing on $\rtd$.

If $s$ is within our function class, although a Gaussian target distribution will change the minimizer from $\rtd$ to $s$, it will not bias the predicted mean, as we show below. Suppose $g(.|\mu,\sigma^2)$ is the pdf of a Gaussian distribution with mean $\mu$ and variance $\sigma^2$. Let us write the bias of $s$: 
\begin{align*}
    \EX_{s}[z] - \EX_{\rtd}[y] &= \int z s(z) \diff z - \EX_{\rtd}[y] 
    = \int z \int \rtd(y) \tdy_y(z) \diff y \diff z - \EX_{\rtd}[y]\\
    &= \int \rtd(y) \int z \tdy_y(z) \diff z \diff y - \EX_{\rtd}[y] 
    = \EX_{y\sim\rtd}[\EX_{z\sim\tdy_y}[z] - y] \\
    &= \int \rtd(y) \int z g(z|y, \sigma^2) \diff z \diff y - \EX_{\rtd}[y]
    = \int \rtd(y) y \diff y - \EX_{\rtd}[y]
    = 0
\end{align*}
The second line above proves \textbf{Statement 2} and the third line shows the claim for Gaussian target distribution.

The prediction function $\pdnox$ is restricted to a histogram in HL-Gauss. This restriction can bias the predictions in at least two ways. The first source of bias is due to the fact that $s$ is truncated to match the support of $\pdnox$. We do not explore truncation here, and assume that the support of $\pdnox$ is $(-\infty,\infty)$. The second source of bias is a result of using discrete bins for prediction which is present even in the case of infinite support. We call this bias \textit{discretization bias} and quantify it.

We consider the case where all bins have equal width $w$. The center of bin $i$ and the probability of a histogram distribution $\pdnox$ in bin $i$ are denoted by $\cen_i$ and $\pdnox_i$ respectively and $\mathcal{S}_i \coloneqq [\cen_i - \frac{w}{2}, \cen_i + \frac{w}{2})$ is the range of bin $i$. 

Note that in this case the cross-entropy between $\pdnox$ and $s$ is $\sum_i (\int_{\mathcal{S}_i} s(z) \diff z) \log(\pdnox_i)$. Thus the histogram density that minimizes the cross-entropy to $s$ is one where the probability in each bin is equal to the probability of $s$ in the range of that bin (\textbf{Statement 3}). We call this density $\pdnox^*$ and find the difference between the expected values of $\pdnox^*$ and $\rtd$.

\begin{align*}
    \EX_{\pdnox^*}[z] - \EX_\rtd[y] &=\sum_i \cen_i \int_{\mathcal{S}_i} s(z) \diff z - \int y \rtd(y) \diff y \\
    &=\sum_i \cen_i \int_{\mathcal{S}_i} \int \rtd(y) \tdy_y(z) \diff y \diff z - \int y \rtd(y) \diff y \\
    &=\int \rtd(y) \sum_i \cen_i \int_{\mathcal{S}_i} \tdy_y(z) \diff z \diff y - \int y \rtd(y) \diff y \\
    &=\int \rtd(y) \biggl(\sum_i \cen_i \int_{\mathcal{S}_i} \tdy_y(z) \diff z - y\biggr) \diff y
\end{align*}
The factor inside the parantheses is the difference between $y$ and the expected value of the histogram density with the lowest KL-divergence from $\tdy_y$. We will find this difference for the case of a Gaussian target distribution. This error depends on $y$ and, once this error is known, the highest discretization bias can be found by choosing a distribution $\rtd$ that has all its probability on the value of $y$ that results in the highest error. Also note that, if $\tdy_y$ is chosen so that the closest histogram density to it has the expected value of $y$, this error is zero and the whole discretization bias becomes zero. The proof below shows that discretization bias of HL-Gauss is bounded by $\pm \frac{w}{2}$ and depends on the choice of the parameter $\sigma$. 

We will denote with $\cen_0$ the center of the bin that contains $y$. Other bin centers will be denoted by $\cen_i$ where $\cen_i \coloneqq \cen_0 + iw$. Also, $\delta \coloneqq (y-\cen_0)$. Note that $\delta \in [-\frac{w}{2}, \frac{w}{2})$. Finally, CDF of a Gaussian with mean $\mu$ and variance $\sigma^2$ is denoted by $\Phi_{\mu}$ and $F_{\mu}(a,b) \coloneqq \Phi_{\mu}(b) - \Phi_{\mu}(a)$. (Subscripts to show the dependence of $\Phi$ and $F$ on $\sigma$ are dropped for convenience.)
\begin{align}
    &\sum_{i=-\infty}^{\infty} \cen_i \int_{\mathcal{S}_i} g(z|y,\sigma^2) \diff z - y 
    =\sum_{i=-\infty}^{\infty} (\cen_0 + iw) \int_{\mathcal{S}_i} g(z|y,\sigma^2) \diff z - y \nonumber \\
    &=\cen_0 \sum_{i=-\infty}^{\infty} \int_{\mathcal{S}_i} g(z|y,\sigma^2) \diff z + w\sum_{i=-\infty}^{\infty} i \int_{\mathcal{S}_i} g(z|y,\sigma^2) \diff z - y \label{eq:sum_s_i}
\end{align}
Recall that intervals $\mathcal{S}_i$ are disjoint and $\bigcup\limits_{i=-\infty}^{\infty}\mathcal{S}_i = (-\infty, \infty)$. So the series in the first term of \eqref{eq:sum_s_i} becomes one. Therefore
\begin{align}
    \eqref{eq:sum_s_i} =&(\cen_0 - y) + w\biggl(\sum_{i=-\infty}^{-1} i \int_{\mathcal{S}_i} g(z|y,\sigma^2) \diff z + \sum_{i=1}^{\infty} i \int_{\mathcal{S}_i} g(z|y,\sigma^2) \diff z\biggr) \nonumber \\
    =&-\delta + w\biggl(\sum_{i=-\infty}^{-1} i F_{y}(\cen_0 + iw -\frac{w}{2},\cen_0 + iw +\frac{w}{2}) \nonumber \\
    &+ \sum_{i=1}^{\infty} i F_{y}(\cen_0 + iw -\frac{w}{2},\cen_0 + iw +\frac{w}{2})\biggr) \nonumber \\
    =&-\delta + w\biggl(\sum_{i=1}^{\infty} (-i) F_{y}(\cen_0 - iw -\frac{w}{2},\cen_0 - iw +\frac{w}{2}) \nonumber \\
    &+ \sum_{i=1}^{\infty} i F_{y}(\cen_0 + iw -\frac{w}{2},\cen_0 + iw +\frac{w}{2})\biggr) \nonumber \\
    =&-\delta + w \biggl(\sum_{i=1}^{\infty} i (F_{y}(\cen_0 + iw -\frac{w}{2},\cen_0 + iw +\frac{w}{2}) \label{eq:series} \\ 
    &-F_{y}(\cen_0 - iw -\frac{w}{2},\cen_0 - iw +\frac{w}{2}))\biggr) \nonumber
\end{align}
Due to the symmetry of Gaussian distribution, $F_{\mu}(a,b) = F_{\mu}(2\mu-b, 2\mu-a)$. We define $a_i\coloneqq iw - \frac{w}{2}$ and $b_i\coloneqq iw + \frac{w}{2}$ and the series in \eqref{eq:series} becomes
\begin{align}
    &\sum_{i=1}^{\infty} i (F_{y}(\cen_0 + a_i,\cen_0 + b_i) -F_{y}(\cen_0 - b_i,\cen_0 - a_i)) \nonumber \\
    =&\sum_{i=1}^{\infty} i (F_{y}(\cen_0 + a_i,\cen_0 + b_i) - F_{y}(2y - \cen_0 + a_i, 2y - \cen_0 + b_i))\nonumber \\
    \stackrel{\delta := y - \cen_0}{=}&\sum_{i=1}^{\infty} i (F_{y}(y - \delta + a_i,y - \delta + b_i) - F_{y}(y + \delta + a_i, y + \delta + b_i))\nonumber \\
    =&\sum_{i=1}^{\infty} i (\Phi_{y}(y - \delta + b_i) - \Phi_{y}(y - \delta + a_i)   + \Phi_{y}(y + \delta + a_i) - \Phi_{y}(y + \delta + b_i))\nonumber \\
    =&\sum_{i=1}^{\infty} i (-F_{y}(y + b_i- \delta ,y + b_i + \delta) + F_{y}(y + a_i - \delta , y + a_i + \delta))\nonumber \\
    =&-\sum_{i=1}^{\infty} i F_{y}(y + b_i - \delta ,y + b_i + \delta) + \sum_{i=1}^{\infty} i F_{y}(y + a_i - \delta , y + a_i + \delta) \label{eq:before_index}
\end{align}
Since $iw + \frac{w}{2} = (i+1)w - \frac{w}{2}$, we can replace $b_i$ by $a_{i+1}$ and have
\begin{align}
    \eqref{eq:before_index} =&-\sum_{i=1}^{\infty} i F_{y}(y + a_{i+1}- \delta ,y + a_{i+1} + \delta) + \sum_{i=1}^{\infty} i F_{y}(y + a_i - \delta , y + a_i + \delta) \nonumber \\
    =&-\sum_{i=2}^{\infty} (i-1) F_{y}(y + a_i- \delta ,y + a_i + \delta) + \sum_{i=1}^{\infty} i F_{y}(y + a_i - \delta , y + a_i + \delta) \nonumber \\
    =&\sum_{i=2}^{\infty} F_{y}(y + a_i- \delta ,y + a_i + \delta) + F_{y}(y + a_1 - \delta , y + a_1 + \delta) \label{eq:telescopic}\\
    =&\sum_{i=1}^{\infty} F_{y}(y + a_i- \delta ,y + a_i + \delta)
    =\sum_{i=1}^{\infty} F_{0}(a_i- \delta , a_i + \delta)\nonumber\\
    =&\sum_{i=1}^{\infty} F_{0}(iw -\frac{w}{2}- \delta ,iw -\frac{w}{2} + \delta) \label{eq:series_final}
\end{align}
where we used the method of differences to obtain \eqref{eq:telescopic}. We can now plug \eqref{eq:series_final} into \eqref{eq:series} to have
\begin{align}
    \eqref{eq:series} =&-\delta + w \biggl(\sum_{i=1}^{\infty} F_{0}(iw -\frac{w}{2}- \delta ,iw -\frac{w}{2} + \delta)\biggr) \label{eq:final}
\end{align}
The sum in \eqref{eq:final} consists of the area under a Gaussian pdf in some intervals. Since $\delta \in [-\frac{w}{2}, \frac{w}{2}]$ the sum in the second term can be at most $0.5$ (when $\delta=\frac{w}{2}$ and the intervals connect to each other to form the positive half of a Gaussian pdf) and at least $-0.5$ (when $\delta=-\frac{w}{2}$ and the interval bounds are reversed). So, given the bounds on $\delta$ and the sum, we can see the overall error cannot exceed $[-w, w]$.

However, as long as $\delta > 0$, each term in the sum is positive and the sum is positive (because it is a part of a Gaussian pdf) and, when $\delta < 0$ the terms in the sum become negative (because the integral bounds are flipped). So the two terms in the sum will have opposite signs and will not add to each other, and the error is bounded by $[-\frac{w}{2},\frac{w}{2}]$. This is a tight bound since for example when $\sigma \rightarrow 0$ (and the loss becomes HL-OneBin) and $\delta = \pm \frac{w}{2}$ then $\pdnox^*$ will have all its density on the bin that contains $y$ and the error is $\cen_0 - y = -\delta = \mp \frac{w}{2}$ \textbf{(Statement 4)}.

\end{proof}

\proppredbound*

\begin{proof} We drop the subscript $y$ for brevity.
\begin{align}
    (\EX_{\td}[z] - \EX_{\pd}[z])^2 =& (\int_a^b (\td(z) - \pd(z)) z \diff z)^2 \leq \bigl(\int_a^b |\td(z) - \pd(z)| \diff z \max(|a|,|b|)\bigr)^2 \label{eq:predbound_appdx}
\end{align}
The inequality is tight when $a=-1, b=1, \td=\delta(-1), \pd=\delta(1)$. The integral on the right-hand side is twice the Total-Variation (TV) distance \citep[p.~437]{zhang2023mathematical}. Pinsker's inequality and Bretagnolle–Huber inequality relate TV distance and KL-divergence in these two ways:
\begin{align*}
    \frac{1}{2} \int_a^b |\td(z) - \pd(z)| \diff z \leq \sqrt{\frac{1}{2} D_{KL}(\td||\pd)}, \quad \frac{1}{2} \int_a^b |\td(z) - \pd(z)| \diff z \leq \sqrt{1 - e^{-D_{KL}(\td||\pd)}}
\end{align*}
\cite[p.~192]{lattimore2020bandit} and \cite{canonne2022short} show proofs for these bounds and discuss when each bound is preferred. Using these bounds we have
\begin{align*}
    \eqref{eq:predbound_appdx} &= 4\bigl(\frac{1}{2}\int_a^b |\td(z) - \pd(z)| \diff z\bigr)^2 \bigl(\max(|a|,|b|)\bigr)^2\\
    &\leq 4 \max(|a|,|b|)^2 \min\biggl(\frac{1}{2} D_{KL}(\td||\pd), 1 - e^{-D_{KL}(\td||\pd)} \biggr)
\end{align*}
\end{proof} 
\section{Prediction Bounds}\label{sec:appdx_bounds}

The two quantities in the minimum are obtained using Pinsker's inequality and Bretagnolle–Huber (BH) inequality and yield tighter bounds for small and large values of KL-divergence respectively \citep{lattimore2020bandit,canonne2022short}. To better understand these terms and the overall bound, consider the example where $a = -1, b = 1, \td_y = \delta(-1), \pd = \epsilon\delta(-1) + (1-\epsilon)\delta(1)$ for $\epsilon \in (0, 1)$. As $\epsilon$ rises, the two distributions become more similar to each other and both the squared error between their means (the left-hand side in Equation \eqref{eq:predbound}) and the KL-divergence between the distributions decreases. Figure \ref{fig:lower} shows the bounds obtained with Pinsker's inequality and BH inequality (the first and the second term in the minimum) as well as the left-hand side. Both bounds decrease to zero as the two distributions approach each other. Pinsker's inequality is preferred when the distributions are closer to each other. For small values of $\epsilon$, the bound using Pinsker's inequality exceeds a trivial bound of $(b-a)^2 = 4$, while the bound with BH inequality never exceeds this value.

\section{Bias Simulation} \label{sec:bias_simulation}

\begin{figure}[t]
    \centering
    \includegraphics[width=0.45\textwidth]{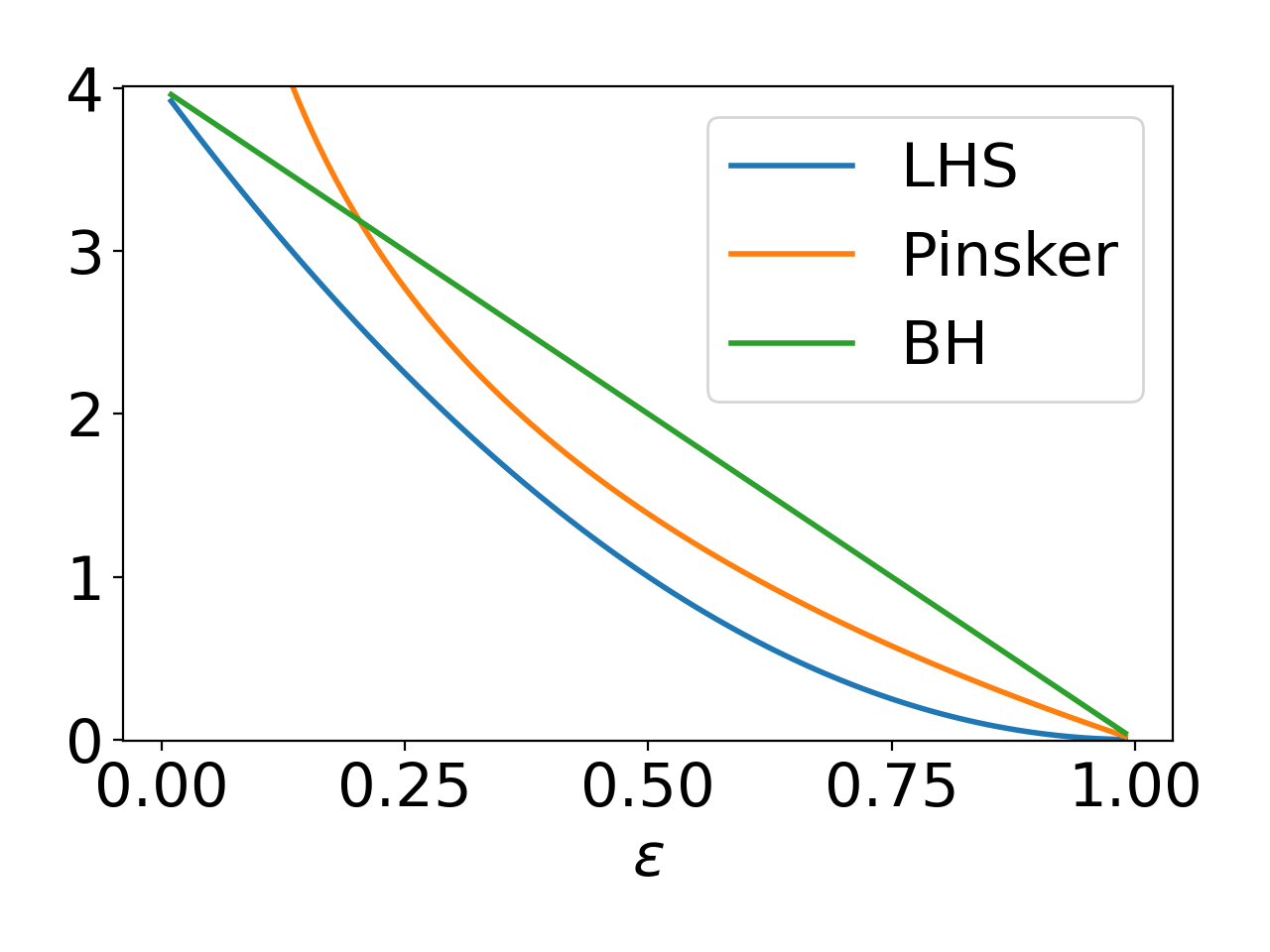}
    \caption{The behavior of the bound in Equation \eqref{eq:predbound} on an example with Dirac delta functions. LHS shows the left-hand side, and the other two curves show the bounds obtained with Pinsker's and BH inequality.}
    \label{fig:lower}
\end{figure}

When creating a discrete truncated Gaussian from a target y, bias is introduced as the mean of this distribution is typically not equal to y. The true mean of the histogram distribution, $\hat{y}$, can be found using the following equation:
\begin{equation}\label{hist_bias}
    \hat{y}=\sum_{i=1}^{k} c_i\left(\frac{\text{erf}\left(\frac{c_i+\frac{w}{2}-y}{\sqrt{2}\sigma}\right)-\text{erf}\left(\frac{c_i-\frac{w}{2}-y}{\sqrt{2}\sigma}\right)}{\text{erf}\left(\frac{b-y}{\sqrt{2}\sigma}\right)-\text{erf}\left(\frac{a-y}{\sqrt{2}\sigma}\right)}\right)
\end{equation}
where the truncated normal distribution is contained within the interval $(a, b)$, $k$ is the number of bins, $c_i$ is the value of the center of the $i$-th bin, $w$ is the width of all bins and $\sigma$ is the smoothness parameter of the truncated Gaussian. 

We used simulation to approximate the bias as a function of $\sigma$ and padding. We did this by generating simulated data, transforming it to a discretized truncated Gaussian distribution, and comparing the mean of this distribution to the original value. We first attempted to isolate the discretization bias by computing the total bias and using large padding, since the truncation bias vanishes as the support for the distribution grows to $(-\infty, \infty)$. By generating $10^5$ equally spaced data points in the range $[0, 1]$ divided into $k=100$ bins of equal width and varying $\sigma$ we can plot the discretization bias as a function of $\sigma_w$, defined as $\sigma/(1/k)$, and the relative position within a bin. We used padding of $\psi = 100$. Define $\psi_\sigma$ as $\psi/\sigma$. The bin width is then adjusted as in the following equation:
\begin{equation}\label{eq:hist_bins}
    w = \frac{y_{max} - y_{min}}{k - 2\sigma_w \psi_\sigma}
\end{equation}
Figure \ref{fig:bias_simulation} (left) displays the results. We define $\delta$ as the difference between the sample and the center of the bin and the offset as $\frac{\delta}{w}$. The X-axis ($\sigma_w$) has been log-transformed. As expected by the symmetry of the Gaussian distribution, we have zero bias for the center of the bin. The relationship with the bin offset converges to linear as $\sigma_w \to 0$, which corresponds to convergence towards HL-Onebin. The error also decreases to zero as $\sigma_w \to \infty$. Taking the mean absolute bias over all samples, we obtain Figure \ref{fig:bias_simulation} (middle). The bias reaches a minimum at around $\sigma_w \approx 1.35$, where it becomes dominated by 64-bit floating point precision error.

We can also observe the total error for different values of $\psi_\sigma$ in Figure \ref{fig:bias_simulation} (right). Fixing $\sigma_w = 2$, the discretization error is negligible. Again, we see squared-exponential decay as the padding increases until $\psi_\sigma = 8.5$, after which the bias is dominated by floating point precision error. Overall, for $\sigma_w > 0.8$ and $\psi_\sigma > 6$, we have bias less than $10^{-7}$ which is negligible when using 32-bit floating point numbers. Empirically, the effect of these parameters on model learning has a greater impact on performance, but these graphs can be used to approximate the bias after scaling by the bin width.

\begin{figure*}[!ht]
\centering
 \begin{subfigure}[b]{\figwidththree}
         \includegraphics[width=\textwidth]{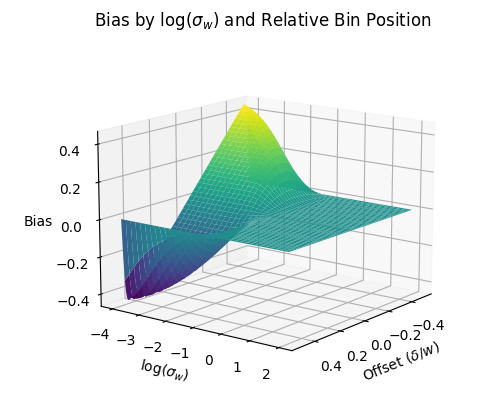}
\end{subfigure} 
 \begin{subfigure}[b]{\figwidththree}
         \includegraphics[width=\textwidth]{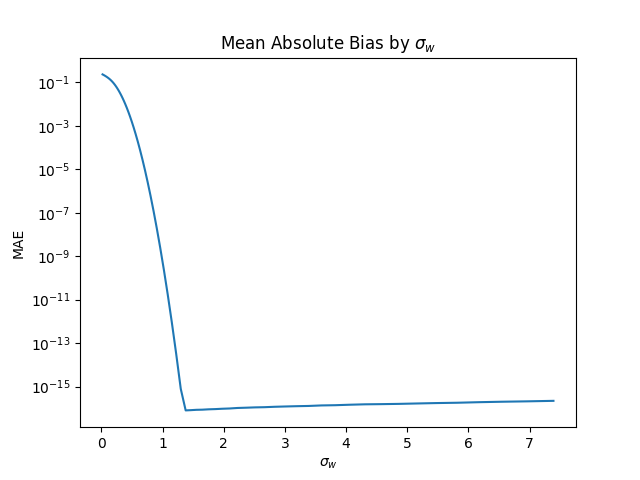}
\end{subfigure} 
  \begin{subfigure}[b]{\figwidththree}
         \includegraphics[width=\textwidth]{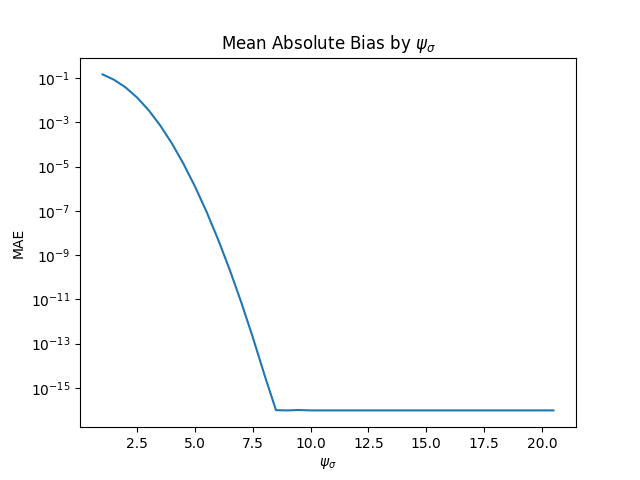}
\end{subfigure} 
\caption{Bias simulation results. (Left) Discretization bias, (Middle) Mean Absolute Discretization Bias, (Right) Mean Absolute Truncation Bias.}
\label{fig:bias_simulation}
\end{figure*} 
\section{Empirical Study - Details} \label{sec:appdx_exp_details}

\subsection{Data Set details}

Table \ref{tab:Dataset Table} and Figure \ref{fig:histograms} show the details of the four regression data sets.

\begin{table}[ht]
\begin{center}
\begin{tabular}{|c c c c c|}
 \hline
 Data set & \# train & \# test & \# feats & $Y$ range \\ [0.5ex] 
 \hline\hline
 Song Year & 412276 & 103069 & 90 & [1922,2011] \\ 
 \hline
 CT Position & 42800 & 10700 & 385 & [0,100] \\ [0ex] 
 \hline
 Bike Sharing & 13911 & 3478 & 16 & [0,1000] \\ [0ex] 
 \hline
  Pole & 12000 & 3000 & 49 & [0,100] \\ [0ex] 
 \hline
\end{tabular}
\caption{Overview of the data sets used in the experiments.}\label{tab:Dataset Table}
\end{center}
\end{table}

\begin{figure}[ht]
    \centering
\begin{minipage}{0.4\textwidth}
        \centering
        \includegraphics[width=0.9\textwidth]{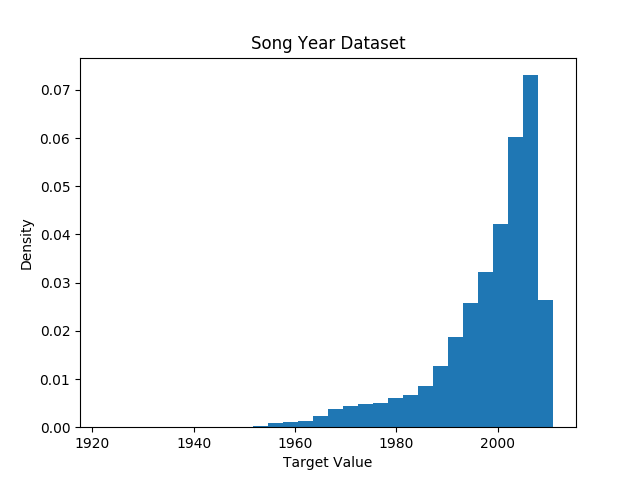}
    \end{minipage}
    \begin{minipage}{0.4\textwidth}
        \centering
        \includegraphics[width=0.9\textwidth]{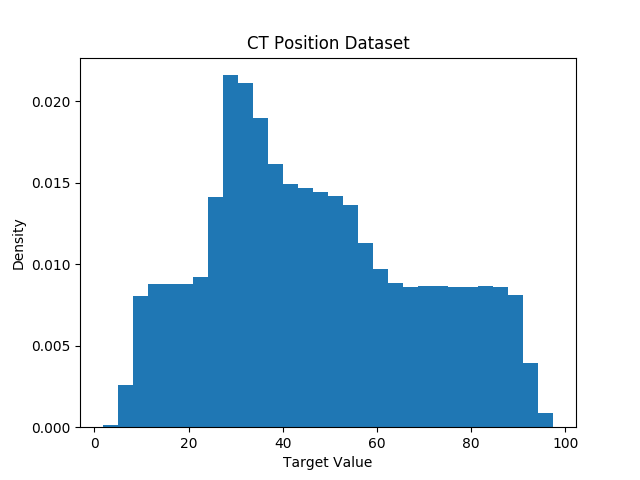}
    \end{minipage}
    \begin{minipage}{0.4\textwidth}
       	\centering
       	\includegraphics[width=0.9\textwidth]{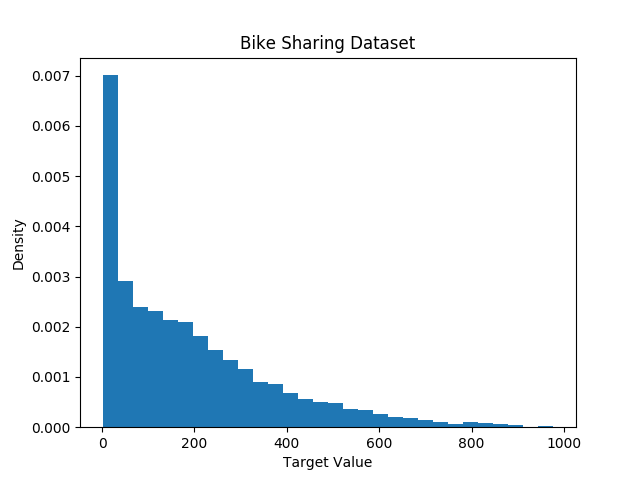}
    \end{minipage}
    \begin{minipage}{0.4\textwidth}
       	\centering
       	\includegraphics[width=0.9\textwidth]{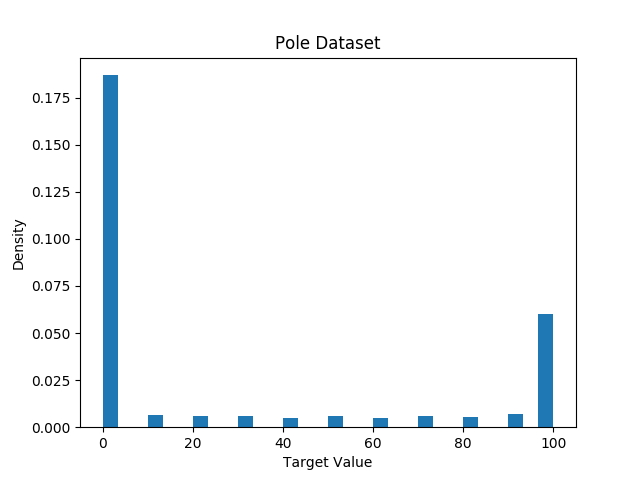}
    \end{minipage}
    \caption{Histograms of the target values for the four data sets. Each plot shows a histogram with 30 bins, where the area of a bin is the ratio of targets in the data set that fall in that bin. Note that it is the total area of bins, and not the sum of heights, that equals one.}
\label{fig:histograms}
\end{figure}

\subsection{Architectures and Hyperparameters}

The architecture for CT Position is 385-192-192-192-192-1, for Song Year is 90-45-45-45-45-1 (4 hidden layers of size 45), for Bike Sharing is 16-64-64-64-64-1, and for Pole is 49-24-24-24-1. All hidden layers employ ReLU activation. Network architectures were chosen according to best Test MAE for $\ell_2$, with depth and width varied across 7 different values.

Some of the baselines required extra hyperparameters. The range of these hyperparameters are described below:\\
\textbf{$\mathbf{\ell_2}$+Noise} The standard deviation of the noise is selected from $\{10^{-5}, 10^{-4}, 10^{-3}, 10^{-2}, 10^{-1}\}$. \\
\textbf{$\mathbf{\ell_2}$+Clipping} The threshold for clipping is selected from $\{0.01, 0.1, 1, 10\}$.\\
\textbf{HL-Uniform} $\epsilon \in \{10^{-5}, 10^{-4}, 10^{-3}, 10^{-2}, 10^{-1}\}$.\\
\textbf{MDN} The originally proposed model uses an exponential activation to model the standard deviations. However, inspired by \citet{lakshminarayanan2017simple}, we used softplus activation plus a small constant ($10^{-2}$) to avoid numerical instability. We selected the number of components from $\{2, 4, 8, 16 , 32\}$. Predictions are made by taking the mean of the mixture model given by the MDN. \\

We used Scikit-learn \citep{scikit-learn} for the implementations of Linear Regression, and Keras \citep{chollet2015keras} for the neural network models.
All neural network models are trained with mini-batch size 256 using the Adam optimizer \citep{kingma2014adam} with a learning rate 1e-3 and the parameters are initialized according to the method suggested by \citet{lecun1998efficient}. Dropout \citep{srivastava2014dropout} with rate $0.05$ is added to the input layer of all neural networks for CT Scan and Song Year tasks to avoid overfitting. We trained the networks for 1000 epochs on CT Position, 150 epochs on Song Year and 500 epochs on Bike Sharing and Pole. Dropout is disabled in the experiments in Section \ref{opt}. 
\section{Application - Details} \label{sec:appdx_application_details}

\subsection{Time Series Prediction}

In this section, we present the architectural details and hyperparameters used in the time-series prediction experiments.

The linear model extracts the target channel (oil temperature) and applies a single fully connected layer mapping the input sequence length to the prediction length, following the channel-independent design in ~\cite{Zeng2022ltsflinear}.
The LSTM and GRU models each stacks $2$ recurrent layers, each with $256$ hidden units, dropout of $0.1$, and layer normalization. The recurrent block is then followed by a linear projection mapping to the prediction length. The Transformer uses $3$ encoder blocks with model dimension $128$, $8$ attention heads, and a feed-forward dimension of $512$, followed by the same projection layer. For each architecture, the base model generates a feature vector that is then either passed to a regression head or to HL-Gauss head.

\begin{table}
\resizebox{\textwidth}{!}{
\begin{tabular}{llcccc}
\toprule
Architecture & Loss & ETTh1 & ETTh2 & ETTm1 & ETTm2 \\
\midrule
Linear & $\ell_2$ & 0.1988 $\pm$ 0.001382 & 0.3162 $\pm$ 0.002306 & 0.1365 $\pm$ 0.002068 & 0.2063 $\pm$ 0.001591 \\
 & HL-Gauss & 0.5548 $\pm$ 0.003119 & 0.4779 $\pm$ 0.003742 & 0.5303 $\pm$ 0.006077 & 0.4249 $\pm$ 0.003523 \\
\midrule
LSTM & $\ell_2$ & 0.7496 $\pm$ 0.02461 & 1.662 $\pm$ 0.06509 & 0.5056 $\pm$ 0.1031 & 0.6145 $\pm$ 0.08133 \\
 & HL-Gauss & 0.7604 $\pm$ 0.03374 & 1.664 $\pm$ 0.1179 & \textbf{0.4009 $\pm$ 0.06353} & \textbf{0.505 $\pm$ 0.1101} \\
\midrule
GRU & $\ell_2$ & 0.6165 $\pm$ 0.01295 & 0.9432 $\pm$ 0.1581 & 0.5491 $\pm$ 0.02453 & 0.4629 $\pm$ 0.04164 \\
 & HL-Gauss & 0.7948 $\pm$ 0.05213 & \textbf{0.7682 $\pm$ 0.2456} & \textbf{0.452 $\pm$ 0.02874} & \textbf{0.4432 $\pm$ 0.09381} \\
\midrule 
Transformer & $\ell_2$ & 0.6804 $\pm$ 0.01806 & 1.257 $\pm$ 0.1399 & 0.4316 $\pm$ 0.03694 & 0.3675 $\pm$ 0.03441 \\
 & HL-Gauss & 0.7532 $\pm$ 0.08686 & 1.921 $\pm$ 0.1787 & 0.4743 $\pm$ 0.08024 & 0.5415 $\pm$ 0.1324 \\

\bottomrule
\end{tabular}
}
\caption{Test MAE results on the ETD dataset. For each (architecture, loss, dataset) tuple, the table shows the mean averaged over $5$ independent seeds and $95\%$ confidence interval.  The {\bftab bold} indicates better performance for HL-Gauss. HL-Gauss outperforms $\ell_2$ for recurrent architectures (LSTM and GRU) but underperforms $\ell_2$ for linear, and mostly ties on the vanilla transformer.}
\label{tab:ett_results_mae}
\end{table}

We use the standard 12,4,4 month split (30 days per month) for training, validation, and test, following the convention in Informer~\citep{haoyi2021informer} and LTSF-Linear \citep{Zeng2022ltsflinear}. The input sequence length is 96 and the prediction horizon is 96 steps. The prediction task is univariate: all 7 input channels are used as features, and the target is the oil temperature (OT) column. All features are standardized to zero mean and unit variance using statistics computed from the training split. We train all networks with a batch size of 32 using the AdamW optimizer  with a fixed learning rate of $10^{-4}$ and a weight decay
$10^{-4}$. We set the maximum number of training epochs to 10 with early stopping based on the validation loss with patience 3, restoring the best weights.

For the HL-Gauss model, the bin width was determined as in Equation \eqref{eq:hist_bins} where $\sigma_w$ is the ratio of the $\sigma$ parameter for the truncated Gaussian distribution to the bin width, and $\psi_\sigma$ is the ratio of padding to bin width added to each side.  We used $k = 100$ bins with $\sigma_w = 2$ and $\psi_\sigma = 3$. The bin widths are calculated using the minimum and maximum observation over the training dataset.

\subsection{Value Prediction}

We first precomputed the values for each game. Then, we trained on the first 95\% of the game iterations in the actions file and tested on the remaining 5\% of the iterations. The samples were shuffled with a buffer size of 1000 within the training and testing sets. We scaled the targets to be in the range $[0, 1]$. The model backbone consists of 3 convolutional layers followed by a fully-connected layer. We used an ADAM optimizer with parameters $\alpha = 10^{-3}$, $\beta_1 = 0.9$, $\beta_2 = 0.999$, and $\epsilon = 10^{-7}$. All models were implemented using Keras (\cite{chollet2015keras}). 

For HL-Gauss We used $k = 100$ bins with $\sigma_w = 2$ and $\psi_\sigma = 4$ in Equation \eqref{eq:hist_bins}. The models were trained for 1-3 epochs, depending on the game. 
\clearpage
\section{Other Results} \label{sec:appdx_results}

This section provides the rest of the empirical results.

\subsection{Overall Results}

\begin{table*}[!ht]
\vspace{-0.4cm}
\centering
\begin{tabular}{lllbl}                                                                                                  \hline                                                                                                                   Method     & Train MAE                         & Train RMSE                        & Test MAE                             & Test RMSE                            \\                                                                      \hline                                                                                                                   Lin Reg    & $ 6.793${\scriptsize $(\pm 0.003)$} & $ 9.547${\scriptsize $(\pm 0.003)$} & $ 6.796${\scriptsize $(\pm 0.007)$} & $ 9.555${\scriptsize $(\pm 0.014)$} \\                                                                       $\ell_2$ & $ 5.758${\scriptsize $(\pm 0.026)$} & $ 8.137${\scriptsize $(\pm 0.014)$} & $ 6.016${\scriptsize $(\pm 0.022)$} & $ 8.690${\scriptsize $(\pm 0.015)$} \\                                                                       $\ell_2$+Noise  & $ 5.697${\scriptsize $(\pm 0.023)$} & $ 8.124${\scriptsize $(\pm 0.006)$} & $ 5.959${\scriptsize $(\pm 0.015)$} & $ 8.682${\scriptsize $(\pm 0.022)$} \\                                                                       $\ell_2$+Clip   & $ 5.749${\scriptsize $(\pm 0.032)$} & $ 8.130${\scriptsize $(\pm 0.012)$} & $ 6.009${\scriptsize $(\pm 0.032)$} & $ 8.687${\scriptsize $(\pm 0.015)$} \\                                                                       $\ell_1$         & $ 5.423${\scriptsize $(\pm 0.010)$} & $ 8.640${\scriptsize $(\pm 0.016)$} & $ 5.673${\scriptsize $(\pm 0.006)$} & $ 8.987${\scriptsize $(\pm 0.025)$} \\                                                                       MDN        & $ 5.858${\scriptsize$(\pm0.020)$} & $ 8.530${\scriptsize$(\pm0.008)$} & $ 5.935${\scriptsize$(\pm0.025)$} & $ 8.640${\scriptsize$(\pm0.020)$} \\  $\ell_2$+Softmax & $ 5.651${\scriptsize $(\pm 0.015)$} & $ 8.067${\scriptsize $(\pm 0.009)$} & $ 5.947${\scriptsize $(\pm 0.012)$} & $ 8.685${\scriptsize $(\pm 0.012)$} \\
$\ell_2$+Mean & $5.863${\scriptsize $(\pm 0.001)$} & $8.434${\scriptsize $(\pm 0.004)$} & $5.995${\scriptsize $(\pm 0.027)$} & $8.724${\scriptsize $(\pm 0.022)$}\\
HL-OneBin  & $ 5.810${\scriptsize $(\pm 0.010)$} & $ 8.487${\scriptsize $(\pm 0.002)$} & $ 5.906${\scriptsize $(\pm 0.020)$} & $ 8.636${\scriptsize $(\pm 0.020)$} \\                                                                       HL-Gauss.  & $ 5.789${\scriptsize $(\pm 0.004)$} & $ 8.440${\scriptsize $(\pm 0.004)$} & $ 5.903${\scriptsize $(\pm 0.010)$} & $ 8.621${\scriptsize $(\pm 0.016)$} \\                                                                      \hline                                                                                                                  \end{tabular}
\vspace{1pt}

\begin{tabular}{lllbl}                                                                                                  \hline                                                                                                                   Method     & Train MAE                           & Train RMSE                          & Test MAE                               & Test RMSE                              \\                                                              \hline                                                                                                                   Lin Reg    & $ 106.047${\scriptsize $(\pm 0.215)$} & $ 141.854${\scriptsize $(\pm 0.192)$} & $ 105.788${\scriptsize $(\pm 0.738)$} & $ 141.617${\scriptsize $(\pm 0.757)$} \\                                                               $\ell_2$ & $ 14.453${\scriptsize $(\pm 0.258)$}  & $ 19.487${\scriptsize $(\pm 0.116)$}  & $ 31.029${\scriptsize $(\pm 0.258)$}  & $ 48.205${\scriptsize $(\pm 0.545)$}  \\                                                               $\ell_2$+Noise  & $ 20.577${\scriptsize $(\pm 0.270)$}  & $ 28.604${\scriptsize $(\pm 0.325)$}  & $ 28.486${\scriptsize $(\pm 0.278)$}  & $ 44.777${\scriptsize $(\pm 0.747)$}  \\                                                               $\ell_2$+Clip   & $ 13.221${\scriptsize $(\pm 0.676)$}  & $ 18.011${\scriptsize $(\pm 0.783)$}  & $ 29.404${\scriptsize $(\pm 0.188)$}  & $ 46.309${\scriptsize $(\pm 0.403)$}  \\                                                               $\ell_1$         & $ 12.764${\scriptsize $(\pm 0.250)$}  & $ 21.456${\scriptsize $(\pm 0.356)$}  & $ 27.550${\scriptsize $(\pm 0.258)$}  & $ 44.717${\scriptsize $(\pm 0.674)$}  \\                                                               MDN        & $ 15.406${\scriptsize$(\pm0.192)$}  & $ 27.593${\scriptsize$(\pm0.348)$}  & $ 26.268${\scriptsize$(\pm0.342)$}  & $ 43.679${\scriptsize$(\pm0.605)$}  \\  $\ell_2$+Softmax & $ 11.991${\scriptsize $(\pm 0.277)$}  & $ 16.799${\scriptsize $(\pm 0.445)$}  & $ 27.827${\scriptsize $(\pm 0.253)$}  & $ 45.825${\scriptsize $(\pm 0.431)$}  \\
$\ell_2$+Mean & $14.522${\scriptsize $(\pm 0.353)$} & $20.499${\scriptsize $(\pm 0.518)$} & $29.474${\scriptsize $(\pm 0.527)$} & $47.651${\scriptsize $(\pm 0.990)$}\\
HL-OneBin  & $ 15.822${\scriptsize $(\pm 0.198)$}  & $ 26.065${\scriptsize $(\pm 0.335)$}  & $ 26.689${\scriptsize $(\pm 0.280)$}  & $ 45.150${\scriptsize $(\pm 0.857)$}  \\                                                               HL-Gauss.  & $ 14.335${\scriptsize $(\pm 0.152)$}  & $ 23.178${\scriptsize $(\pm 0.309)$}  & $ 25.525${\scriptsize $(\pm 0.331)$}  & $ 43.671${\scriptsize $(\pm 0.796)$}  \\                                                              \hline                                                                                                                  \end{tabular}

\vspace{1pt}

\begin{tabular}{lllbl}                                                                                                  \hline                                                                                                                   Method     & Train MAE                          & Train RMSE                         & Test MAE                              & Test RMSE                             \\                                                                  \hline                                                                                                                   Lin Reg    & $ 26.523${\scriptsize $(\pm 0.018)$} & $ 30.424${\scriptsize $(\pm 0.010)$} & $ 26.662${\scriptsize $(\pm 0.066)$} & $ 30.567${\scriptsize $(\pm 0.038)$} \\                                                                   $\ell_2$ & $ 0.767${\scriptsize $(\pm 0.016)$}  & $ 1.441${\scriptsize $(\pm 0.026)$}  & $ 1.131${\scriptsize $(\pm 0.024)$}  & $ 2.579${\scriptsize $(\pm 0.073)$}  \\                                                                   $\ell_2$+Noise  & $ 0.700${\scriptsize $(\pm 0.031)$}  & $ 1.406${\scriptsize $(\pm 0.039)$}  & $ 1.062${\scriptsize $(\pm 0.027)$}  & $ 2.529${\scriptsize $(\pm 0.054)$}  \\                                                                   $\ell_2$+Clip   & $ 0.714${\scriptsize $(\pm 0.018)$}  & $ 1.424${\scriptsize $(\pm 0.024)$}  & $ 1.069${\scriptsize $(\pm 0.017)$}  & $ 2.551${\scriptsize $(\pm 0.103)$}  \\                                                                   $\ell_1$         & $ 0.637${\scriptsize $(\pm 0.013)$}  & $ 1.661${\scriptsize $(\pm 0.029)$}  & $ 0.938${\scriptsize $(\pm 0.029)$}  & $ 2.584${\scriptsize $(\pm 0.034)$}  \\                                                                   MDN        & $ 0.648${\scriptsize$(\pm0.026)$}  & $ 1.712${\scriptsize$(\pm0.078)$}  & $ 0.895${\scriptsize$(\pm0.013)$}  & $ 2.480${\scriptsize$(\pm0.095)$}  \\ $\ell_2$+Softmax & $ 0.634${\scriptsize $(\pm 0.017)$}  & $ 1.445${\scriptsize $(\pm 0.019)$}  & $ 0.950${\scriptsize $(\pm 0.016)$}  & $ 2.481${\scriptsize $(\pm 0.055)$}  \\
$\ell_2$+Mean & $0.613${\scriptsize $(\pm 0.008)$} & $1.452${\scriptsize $(\pm 0.010)$} & $0.986${\scriptsize $(\pm 0.010)$} & $2.773${\scriptsize $(\pm 0.056)$}\\
HL-OneBin  & $ 0.899${\scriptsize $(\pm 0.015)$}  & $ 1.442${\scriptsize $(\pm 0.033)$}  & $ 1.264${\scriptsize $(\pm 0.019)$}  & $ 2.760${\scriptsize $(\pm 0.116)$}  \\                                                                   HL-Gauss.  & $ 0.347${\scriptsize $(\pm 0.018)$}  & $ 1.299${\scriptsize $(\pm 0.049)$}  & $ 0.714${\scriptsize $(\pm 0.024)$}  & $ 2.673${\scriptsize $(\pm 0.141)$}  \\                                                                  \hline                                                                                                                  \end{tabular}

\caption{Overall results on the other data sets. The tables show Song Year, Bike Sharing, and Pole from top to bottom. \textbf{Song Year:} $\ell_1$ outperformed HL-Gauss in terms of Test MAE, there was not a substantial difference between the performance of HL-OneBin and HL-Gauss, and the rest of the methods had higher error rates than HL-Gauss. The differences in error rates were small on this data set and even Linear Regression worked well. \textbf{Bike Sharing:} HL-Gauss had the lowest Test MAE on this data set and $\ell_2$ performed the worst among neural network methods. \textbf{Pole:} HL-Gauss achieved the lowest Test MAE. The other methods did not achieve a Test MAE close to that of HL-Gauss, and there was a noticeable gap between the performance of HL-OneBin and HL-Gauss.}
\label{tab:overall_yearpred}
\end{table*}

\begin{figure*}[!ht]
\centering
 \begin{subfigure}[b]{\figwidthtwo}
         \includegraphics[width=\textwidth]{figures/uniform,ctscan,mae.png}
\end{subfigure}
  \begin{subfigure}[b]{\figwidthtwo}
         \includegraphics[width=\textwidth]{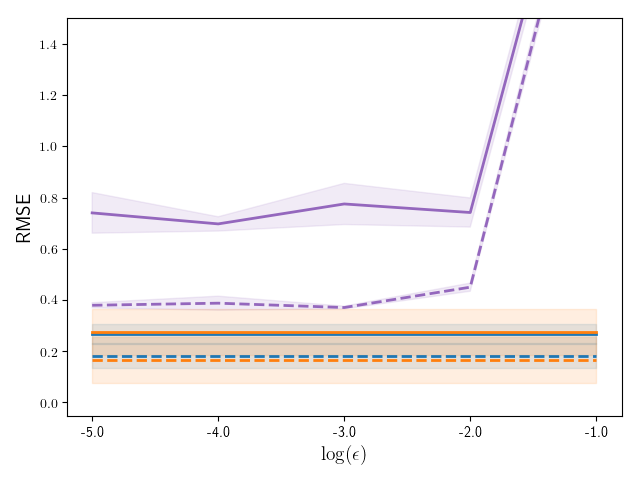}
\end{subfigure}
 
 \begin{subfigure}[b]{\figwidthtwo}
         \includegraphics[width=\textwidth]{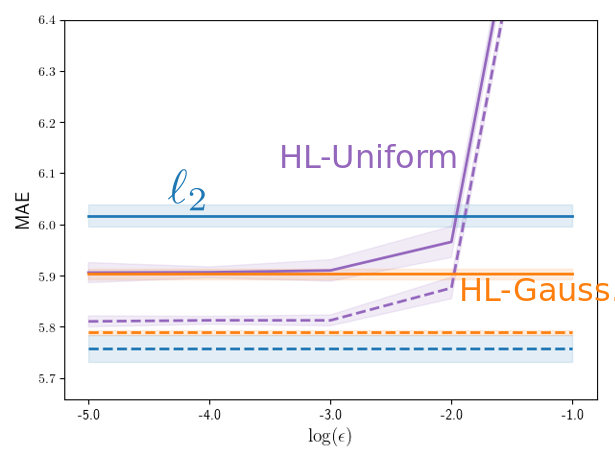}
\end{subfigure}
  \begin{subfigure}[b]{\figwidthtwo}
         \includegraphics[width=\textwidth]{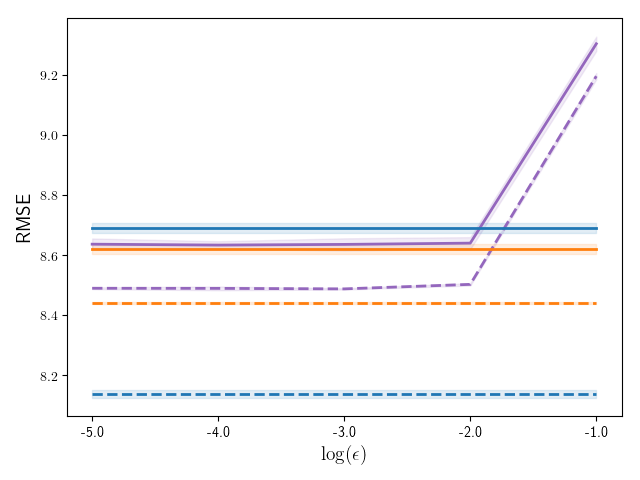}
\end{subfigure}
 
 \begin{subfigure}[b]{\figwidthtwo}
         \includegraphics[width=\textwidth]{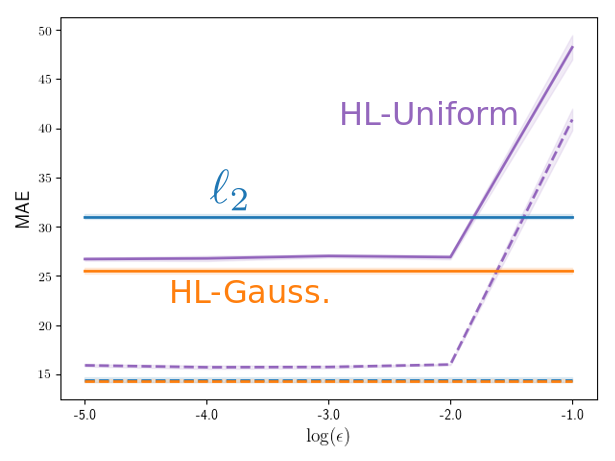}
\end{subfigure}
  \begin{subfigure}[b]{\figwidthtwo}
         \includegraphics[width=\textwidth]{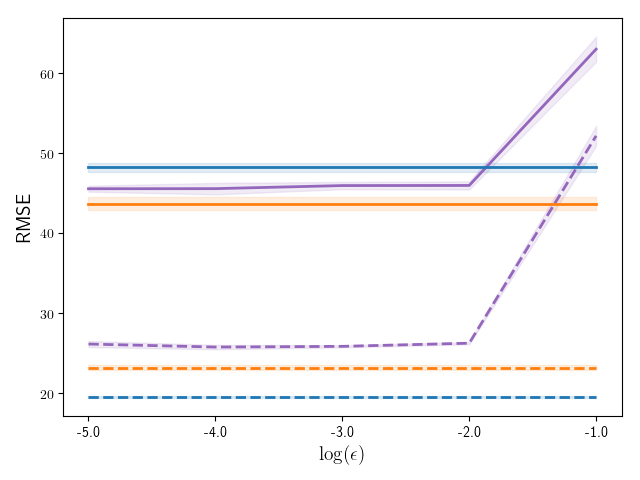}
\end{subfigure}
 
 \begin{subfigure}[b]{\figwidthtwo}
         \includegraphics[width=\textwidth]{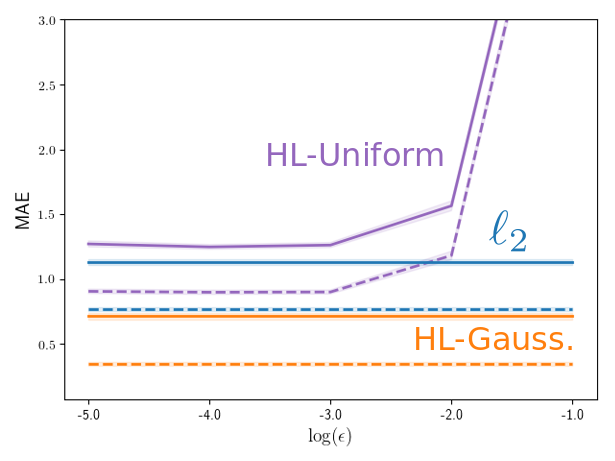}
\end{subfigure}
 \begin{subfigure}[b]{\figwidthtwo}
         \includegraphics[width=\textwidth]{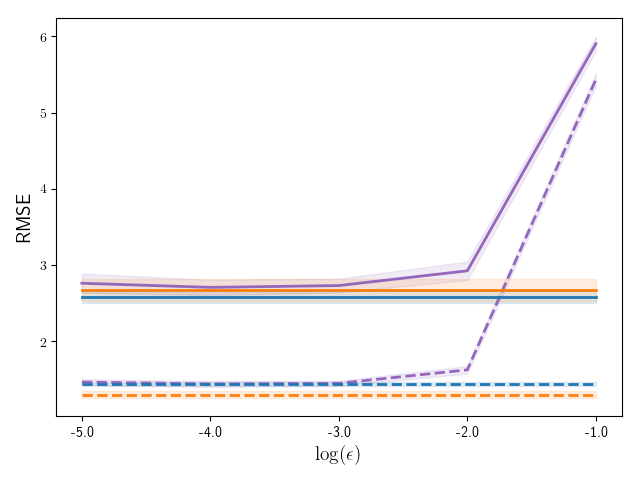}
\end{subfigure}
 
\caption{HL-Uniform results. The rows show CT Scan, Song Year, Bike, and Pole respectively. Dotted and solid lines show train and test errors respectively. The parameter $\epsilon$ is the weighting on the uniform distribution and raising it only impaired performance.}
\label{fig:uniform_ctscan_appdx}
\end{figure*}

\clearpage
\subsection{Bias and the Choice of Target Distribution}

\begin{table}[!ht]
    \centering
    \begin{tabular}{lllbl}                                                                                                  \hline                                                                                                                   Method       & Train MAE                         & Train RMSE                        & Test MAE                             & Test RMSE                            \\                                                                    \hline                                                                                                                   HL-OneBin    & $ 0.309${\scriptsize $(\pm 0.000)$} & $ 0.364${\scriptsize $(\pm 0.006)$} & $ 0.335${\scriptsize $(\pm 0.004)$} & $ 0.660${\scriptsize $(\pm 0.099)$} \\                                                                     HL-Projected & $ 0.053${\scriptsize $(\pm 0.001)$} & $ 0.074${\scriptsize $(\pm 0.002)$} & $ 0.103${\scriptsize $(\pm 0.003)$} & $ 0.462${\scriptsize $(\pm 0.065)$} \\                                                       HL-Gauss.    & $ 0.061${\scriptsize $(\pm 0.006)$} & $ 0.164${\scriptsize $(\pm 0.090)$} & $ 0.098${\scriptsize $(\pm 0.005)$} & $ 0.274${\scriptsize $(\pm 0.090)$} \\                                                                                  \hline                                                                                                                  \end{tabular}
    \vspace{1pt}
    \begin{tabular}{lllbl}                                                                                                  \hline                                                                                                                   Method       & Train MAE                         & Train RMSE                        & Test MAE                             & Test RMSE                            \\                                                                    \hline                                                                                                                   HL-OneBin    & $ 5.810${\scriptsize $(\pm 0.010)$} & $ 8.487${\scriptsize $(\pm 0.002)$} & $ 5.906${\scriptsize $(\pm 0.020)$} & $ 8.636${\scriptsize $(\pm 0.020)$} \\                                                                     HL-Projected & $ 5.815${\scriptsize $(\pm 0.012)$} & $ 8.477${\scriptsize $(\pm 0.003)$} & $ 5.917${\scriptsize $(\pm 0.019)$} & $ 8.629${\scriptsize $(\pm 0.016)$} \\                                                                     HL-Gauss.    & $ 5.789${\scriptsize $(\pm 0.004)$} & $ 8.440${\scriptsize $(\pm 0.004)$} & $ 5.903${\scriptsize $(\pm 0.010)$} & $ 8.621${\scriptsize $(\pm 0.016)$} \\                                                                    \hline                                                                                                                  \end{tabular}
    \vspace{1pt}
    \begin{tabular}{lllbl}                                                                                                  \hline                                                                                                                   Method       & Train MAE                          & Train RMSE                         & Test MAE                              & Test RMSE                             \\                                                                \hline                                                                                                                   HL-OneBin    & $ 15.822${\scriptsize $(\pm 0.198)$} & $ 26.065${\scriptsize $(\pm 0.335)$} & $ 26.689${\scriptsize $(\pm 0.280)$} & $ 45.150${\scriptsize $(\pm 0.857)$} \\                                                                 HL-Projected & $ 14.885${\scriptsize $(\pm 0.169)$} & $ 24.329${\scriptsize $(\pm 0.329)$} & $ 26.180${\scriptsize $(\pm 0.348)$} & $ 44.982${\scriptsize $(\pm 0.845)$} \\
    HL-Gauss.    & $ 14.335${\scriptsize $(\pm 0.152)$} & $ 23.178${\scriptsize $(\pm 0.309)$} & $ 25.525${\scriptsize $(\pm 0.331)$} & $ 43.671${\scriptsize $(\pm 0.796)$} \\                                                                 \hline                                                                                                                  \end{tabular}
    \vspace{1pt}
    \begin{tabular}{lllbl}                                                                                                  \hline                                                                                                                   Method       & Train MAE                         & Train RMSE                        & Test MAE                             & Test RMSE                            \\                                                                    \hline                                                                                                                   HL-OneBin    & $ 0.899${\scriptsize $(\pm 0.015)$} & $ 1.442${\scriptsize $(\pm 0.033)$} & $ 1.264${\scriptsize $(\pm 0.019)$} & $ 2.760${\scriptsize $(\pm 0.116)$} \\                                                                     HL-Projected & $ 0.364${\scriptsize $(\pm 0.020)$} & $ 1.340${\scriptsize $(\pm 0.055)$} & $ 0.741${\scriptsize $(\pm 0.018)$} & $ 2.753${\scriptsize $(\pm 0.068)$} \\
   HL-Gauss.    & $ 0.347${\scriptsize $(\pm 0.018)$} & $ 1.299${\scriptsize $(\pm 0.049)$} & $ 0.714${\scriptsize $(\pm 0.024)$} & $ 2.673${\scriptsize $(\pm 0.141)$} \\                                                                                                                                         \hline                                                                                                                  \end{tabular}
   
    \caption{Discretization bias experiment. The tables show CT-Scan, Song Year, Bike Sharing, and Pole from top to bottom. Overall, HL-Projected achieved a Test MAE close to that of HL-Gauss, and performed noticeably better than HL-OneBin.}
    \label{tab:proj_ctscan_appdx}

\end{table}

\begin{figure}[!ht]
    \centering
\begin{minipage}{\figwidththree}
        \centering
        \includegraphics[width=\textwidth]{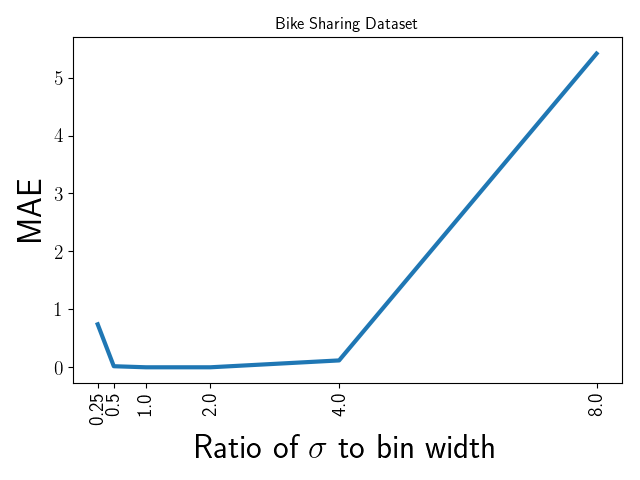}
    \end{minipage}
    \begin{minipage}{\figwidththree}
       	\centering
       	\includegraphics[width=\textwidth]{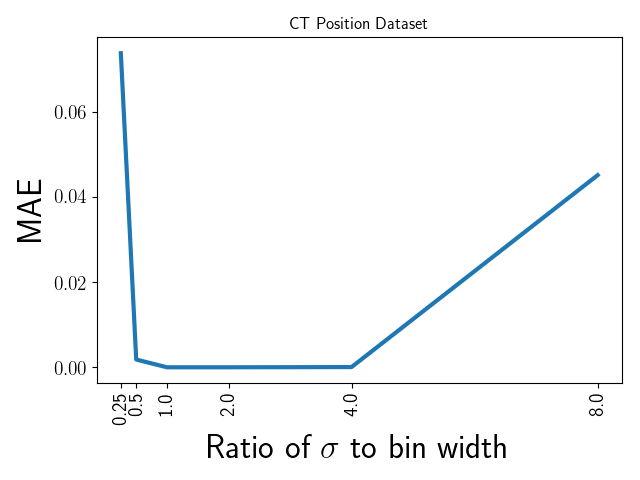}
    \end{minipage}
    \begin{minipage}{\figwidththree}
       	\centering
       	\includegraphics[width=\textwidth]{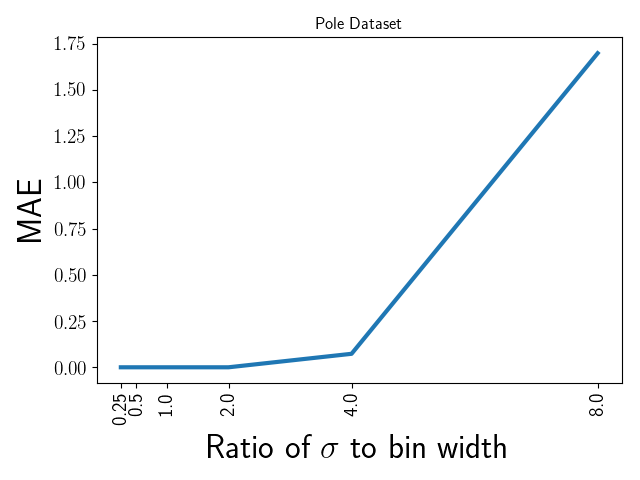}
    \end{minipage}
    \caption{MAE between the means of HL targets and the original labels. It can be seen that extreme values of $\sigma$ on either side biased the mean of the target distributions.}
\label{fig:label_bias_others}
\end{figure}

\begin{figure*}[!ht]
\centering
 \begin{subfigure}[b]{\figwidthtwo}
         \includegraphics[width=\textwidth]{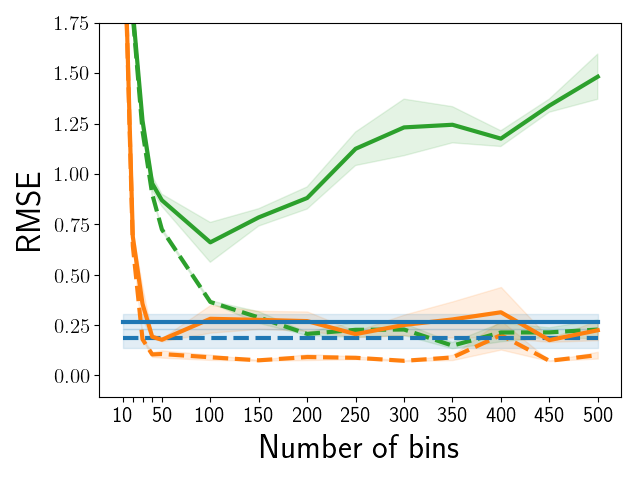}
\end{subfigure}
 \begin{subfigure}[b]{\figwidthtwo}
         \includegraphics[width=\textwidth]{figures/bias_var,ctscan,mae.png}
\end{subfigure}

  \begin{subfigure}[b]{\figwidthtwo}
         \includegraphics[width=\textwidth]{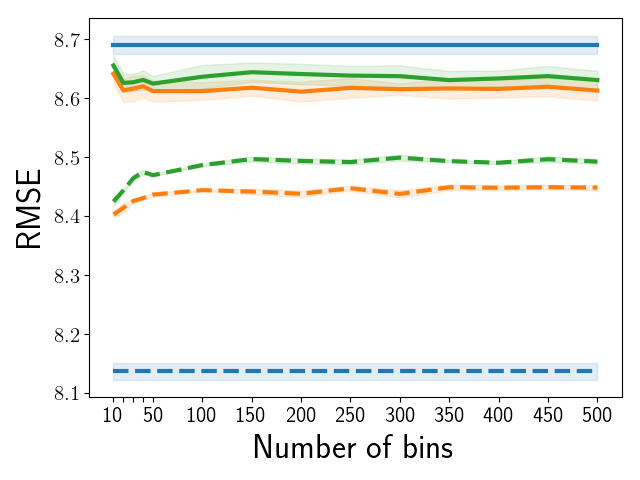}
\end{subfigure}
 \begin{subfigure}[b]{\figwidthtwo}
         \includegraphics[width=\textwidth]{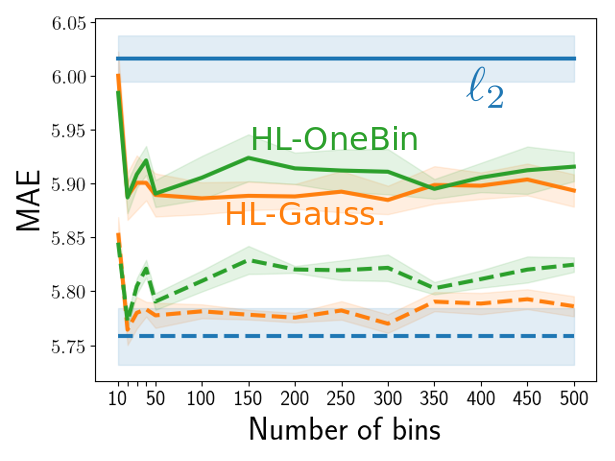}
\end{subfigure}

  \begin{subfigure}[b]{\figwidthtwo}
         \includegraphics[width=\textwidth]{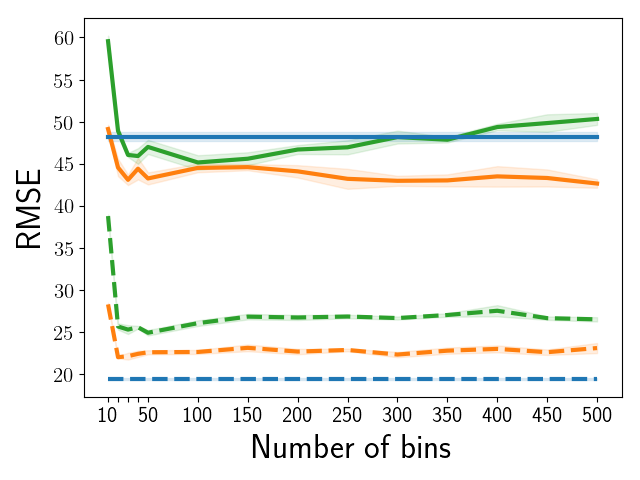}
\end{subfigure}
 \begin{subfigure}[b]{\figwidthtwo}
         \includegraphics[width=\textwidth]{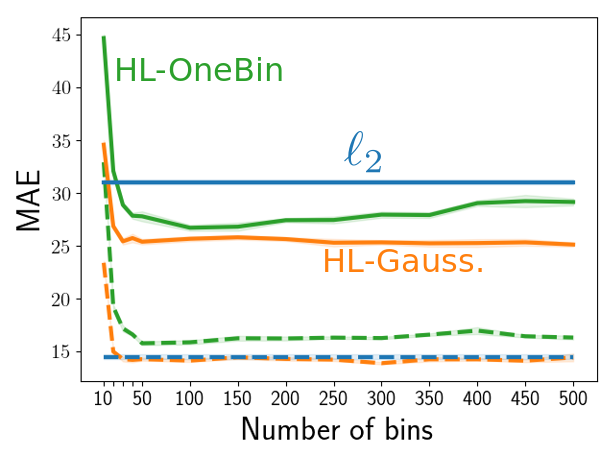}
\end{subfigure} 

  \begin{subfigure}[b]{\figwidthtwo}
         \includegraphics[width=\textwidth]{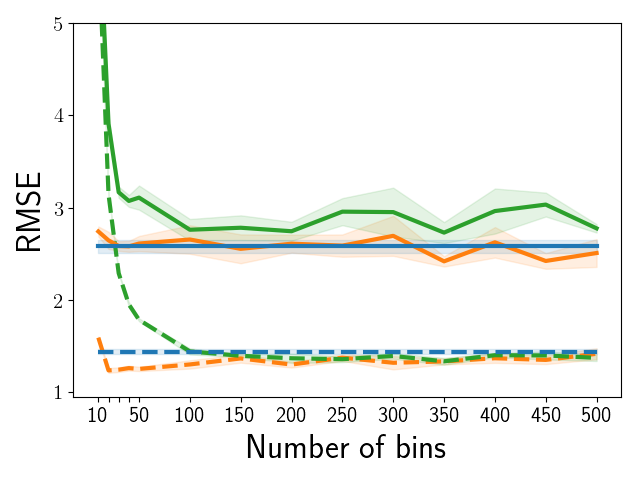}
\end{subfigure}
 \begin{subfigure}[b]{\figwidthtwo}
         \includegraphics[width=\textwidth]{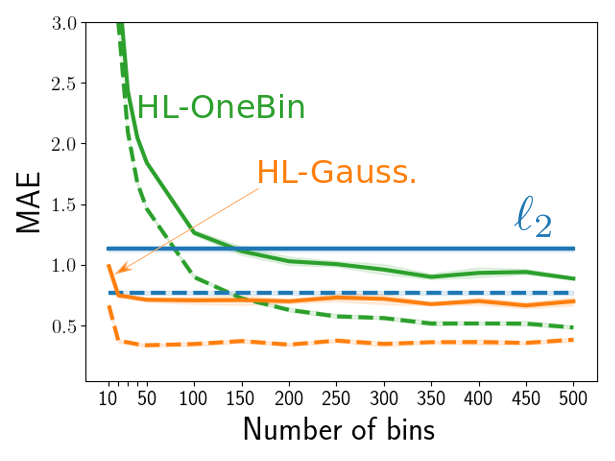}
\end{subfigure} 
\caption{Changing the number of bins. The rows show CT Scan, Song Year, Bike Sharing, and Pole respectively. Dotted and solid lines show train and test errors respectively. A small number of bins resulted in high train and test errors, indicating high bias. A higher number of bins generally did not result in a rise in test error, with the exception of HL-OneBin's test RMSE on CT Scan.}
\label{fig:bias_var_ctscan_appdx}
\end{figure*}

\begin{figure*}[!ht]
\centering
 \begin{subfigure}[b]{\figwidthtwo}
         \includegraphics[width=\textwidth]{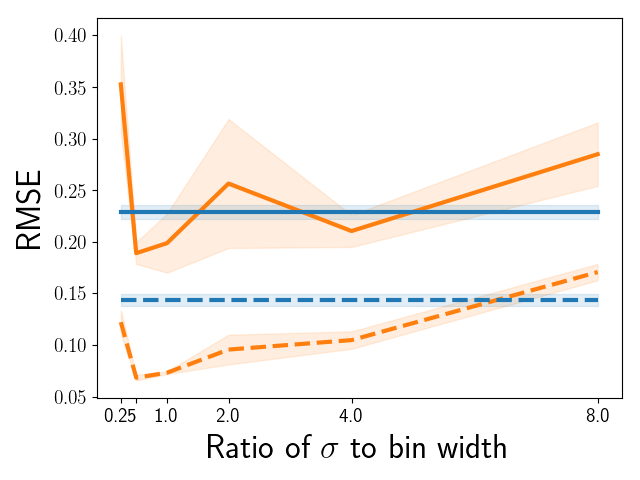}
\end{subfigure}
 \begin{subfigure}[b]{\figwidthtwo}
         \includegraphics[width=\textwidth]{figures/sigma,ctscan,mae.png}
\end{subfigure}

 \begin{subfigure}[b]{\figwidthtwo}
         \includegraphics[width=\textwidth]{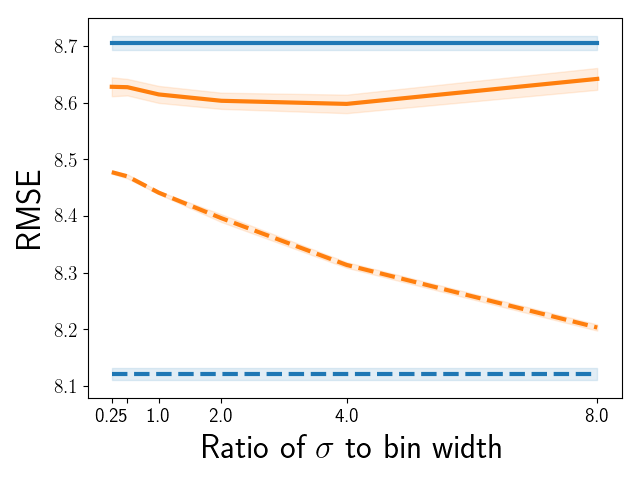}
\end{subfigure}
 \begin{subfigure}[b]{\figwidthtwo}
         \includegraphics[width=\textwidth]{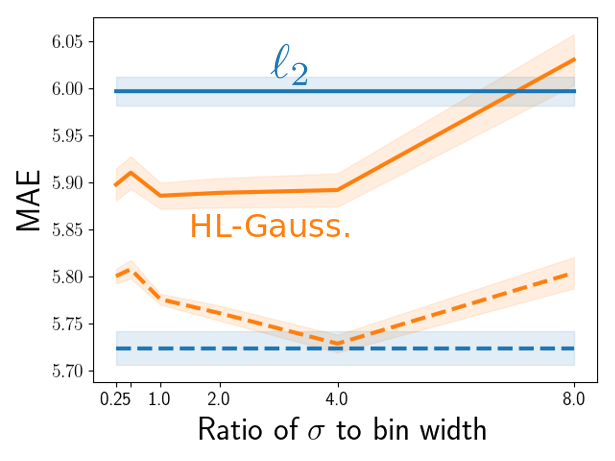}
\end{subfigure}

 \begin{subfigure}[b]{\figwidthtwo}
         \includegraphics[width=\textwidth]{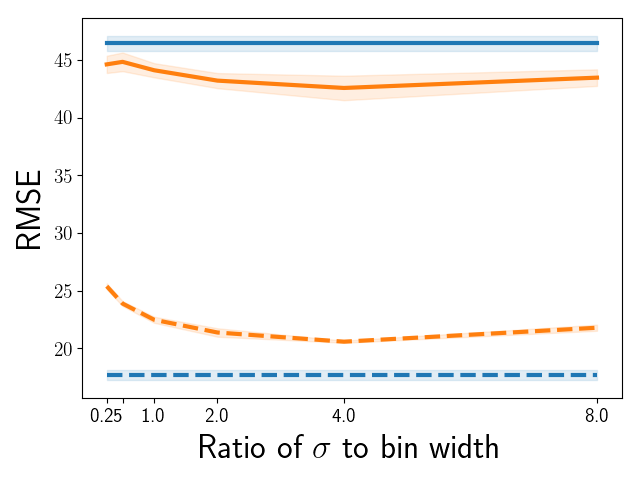}
\end{subfigure}
 \begin{subfigure}[b]{\figwidthtwo}
         \includegraphics[width=\textwidth]{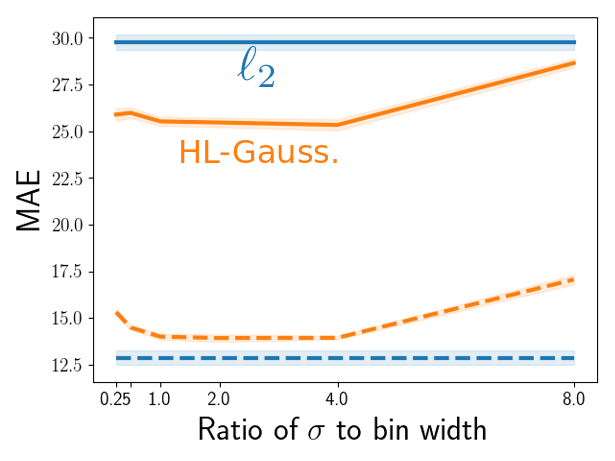}
\end{subfigure} 

 \begin{subfigure}[b]{\figwidthtwo}
         \includegraphics[width=\textwidth]{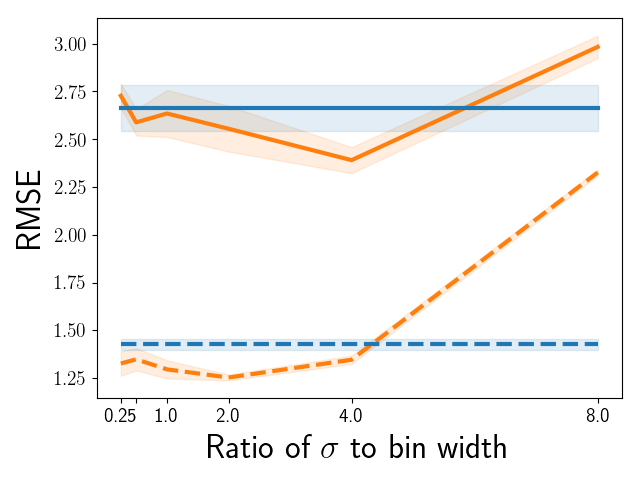}
\end{subfigure}
 \begin{subfigure}[b]{\figwidthtwo}
         \includegraphics[width=\textwidth]{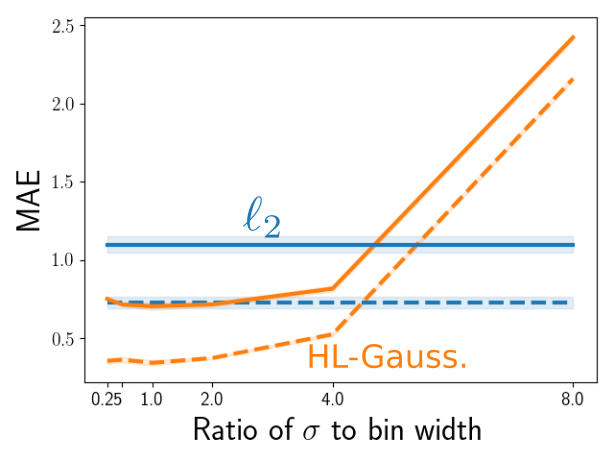}
\end{subfigure}
\caption{Changing the parameter $\sigma$. The rows show CT Scan, Song Year, Bike Sharing, and Pole respectively. Dotted and solid lines show train and test errors respectively. Generally, extreme values of $\sigma$, especially to the right, resulted in bad performance. The error rates for train and test followed each other when changing this parameter.}
\label{fig:sigma_ctscan_appdx}
\end{figure*}

\clearpage
\subsection{Representation}

\begin{table*}[!ht]
\centering
\begin{tabular}{llllll}                                                                                                 \hline                                                                                                                              & Loss               & Default                              & Fixed                                & Initialized                          & Random                                \\                                                \hline                                                                                                                   Train MAE  & $\ell_2$         & $ 5.758${\scriptsize $(\pm 0.026)$} & $ 6.268${\scriptsize $(\pm 0.055)$} & $ 5.743${\scriptsize $(\pm 0.015)$} & $ 8.060${\scriptsize $(\pm 0.024)$}  \\                                                 Train MAE  & HL-Gauss. & $ 5.789${\scriptsize $(\pm 0.004)$} & $ 5.682${\scriptsize $(\pm 0.014)$} & $ 5.781${\scriptsize $(\pm 0.010)$} & $ 7.942${\scriptsize $(\pm 0.021)$}  \\                                                 Train RMSE & $\ell_2$         & $ 8.137${\scriptsize $(\pm 0.014)$} & $ 8.797${\scriptsize $(\pm 0.030)$} & $ 8.131${\scriptsize $(\pm 0.003)$} & $ 10.730${\scriptsize $(\pm 0.016)$} \\                                                 Train RMSE & HL-Gauss. & $ 8.440${\scriptsize $(\pm 0.004)$} & $ 8.130${\scriptsize $(\pm 0.010)$} & $ 8.396${\scriptsize $(\pm 0.007)$} & $ 10.691${\scriptsize $(\pm 0.026)$} \\                                                 \rowcolor{aliceblue} Test MAE   & $\ell_2$         & $ 6.016${\scriptsize $(\pm 0.022)$} & $ 6.338${\scriptsize $(\pm 0.063)$} & $ 5.997${\scriptsize $(\pm 0.028)$} & $ 8.069${\scriptsize $(\pm 0.029)$}  \\                             \rowcolor{aliceblue}                    Test MAE   & HL-Gauss. & $ 5.903${\scriptsize $(\pm 0.010)$} & $ 5.954${\scriptsize $(\pm 0.011)$} & $ 5.900${\scriptsize $(\pm 0.016)$} & $ 7.952${\scriptsize $(\pm 0.017)$}  \\                                                 Test RMSE  & $\ell_2$         & $ 8.690${\scriptsize $(\pm 0.015)$} & $ 8.901${\scriptsize $(\pm 0.046)$} & $ 8.666${\scriptsize $(\pm 0.021)$} & $ 10.748${\scriptsize $(\pm 0.014)$} \\                                                 Test RMSE  & HL-Gauss. & $ 8.621${\scriptsize $(\pm 0.016)$} & $ 8.723${\scriptsize $(\pm 0.013)$} & $ 8.603${\scriptsize $(\pm 0.013)$} & $ 10.712${\scriptsize $(\pm 0.010)$} \\                                                \hline                                                                                                                  \end{tabular}

\vspace{1pt}

\begin{tabular}{llllll}\hline                                                                                                                              & Loss               & Default                               & Fixed                                 & Initialized                           & Random                                 \\                                            \hline                                                                                                                   Train MAE  & $\ell_2$         & $ 14.453${\scriptsize $(\pm 0.258)$} & $ 39.101${\scriptsize $(\pm 1.054)$} & $ 18.214${\scriptsize $(\pm 1.154)$} & $ 106.376${\scriptsize $(\pm 0.569)$} \\                                             Train MAE  & HL-Gauss. & $ 14.335${\scriptsize $(\pm 0.152)$} & $ 30.171${\scriptsize $(\pm 0.469)$} & $ 12.571${\scriptsize $(\pm 0.250)$} & $ 99.251${\scriptsize $(\pm 1.069)$}  \\                                             Train RMSE & $\ell_2$         & $ 19.487${\scriptsize $(\pm 0.116)$} & $ 53.545${\scriptsize $(\pm 1.264)$} & $ 23.833${\scriptsize $(\pm 1.137)$} & $ 142.523${\scriptsize $(\pm 1.052)$} \\                                             Train RMSE & HL-Gauss. & $ 23.178${\scriptsize $(\pm 0.309)$} & $ 42.140${\scriptsize $(\pm 0.664)$} & $ 20.320${\scriptsize $(\pm 0.518)$} & $ 136.065${\scriptsize $(\pm 1.283)$} \\                                             \rowcolor{aliceblue} Test MAE   & $\ell_2$         & $ 31.029${\scriptsize $(\pm 0.258)$} & $ 42.834${\scriptsize $(\pm 0.983)$} & $ 31.119${\scriptsize $(\pm 0.906)$} & $ 106.429${\scriptsize $(\pm 0.982)$} \\                        \rowcolor{aliceblue}                     Test MAE   & HL-Gauss. & $ 25.525${\scriptsize $(\pm 0.331)$} & $ 37.696${\scriptsize $(\pm 0.360)$} & $ 25.470${\scriptsize $(\pm 0.202)$} & $ 99.311${\scriptsize $(\pm 1.486)$}  \\                                             Test RMSE  & $\ell_2$         & $ 48.205${\scriptsize $(\pm 0.545)$} & $ 59.890${\scriptsize $(\pm 0.674)$} & $ 46.735${\scriptsize $(\pm 0.387)$} & $ 143.052${\scriptsize $(\pm 1.472)$} \\                                             Test RMSE  & HL-Gauss. & $ 43.671${\scriptsize $(\pm 0.796)$} & $ 56.350${\scriptsize $(\pm 0.525)$} & $ 43.809${\scriptsize $(\pm 0.593)$} & $ 136.941${\scriptsize $(\pm 1.639)$} \\                                            \hline                                                                                                                  \end{tabular}

\vspace{1pt}

\begin{tabular}{llllll}                                                                                                 \hline                                                                                                                              & Loss               & Default                              & Fixed                                & Initialized                          & Random                                \\                                                \hline                                                                                                                   Train MAE  & $\ell_2$         & $ 0.767${\scriptsize $(\pm 0.016)$} & $ 4.218${\scriptsize $(\pm 0.325)$} & $ 1.109${\scriptsize $(\pm 0.024)$} & $ 28.936${\scriptsize $(\pm 0.176)$} \\                                                 Train MAE  & HL-Gauss. & $ 0.347${\scriptsize $(\pm 0.018)$} & $ 1.218${\scriptsize $(\pm 0.052)$} & $ 0.312${\scriptsize $(\pm 0.010)$} & $ 18.428${\scriptsize $(\pm 0.969)$} \\                                                 Train RMSE & $\ell_2$         & $ 1.441${\scriptsize $(\pm 0.026)$} & $ 6.184${\scriptsize $(\pm 0.412)$} & $ 1.836${\scriptsize $(\pm 0.042)$} & $ 33.642${\scriptsize $(\pm 0.231)$} \\                                                 Train RMSE & HL-Gauss. & $ 1.299${\scriptsize $(\pm 0.049)$} & $ 2.899${\scriptsize $(\pm 0.143)$} & $ 1.212${\scriptsize $(\pm 0.027)$} & $ 26.795${\scriptsize $(\pm 0.959)$} \\                                                 \rowcolor{aliceblue} Test MAE   & $\ell_2$         & $ 1.131${\scriptsize $(\pm 0.024)$} & $ 4.396${\scriptsize $(\pm 0.329)$} & $ 1.437${\scriptsize $(\pm 0.053)$} & $ 29.221${\scriptsize $(\pm 0.254)$} \\                                     \rowcolor{aliceblue}            Test MAE   & HL-Gauss. & $ 0.714${\scriptsize $(\pm 0.024)$} & $ 1.398${\scriptsize $(\pm 0.021)$} & $ 0.645${\scriptsize $(\pm 0.009)$} & $ 18.285${\scriptsize $(\pm 1.062)$} \\                                                 Test RMSE  & $\ell_2$         & $ 2.579${\scriptsize $(\pm 0.073)$} & $ 6.423${\scriptsize $(\pm 0.415)$} & $ 2.882${\scriptsize $(\pm 0.109)$} & $ 33.944${\scriptsize $(\pm 0.299)$} \\                                                 Test RMSE  & HL-Gauss. & $ 2.673${\scriptsize $(\pm 0.141)$} & $ 3.441${\scriptsize $(\pm 0.093)$} & $ 2.373${\scriptsize $(\pm 0.055)$} & $ 26.621${\scriptsize $(\pm 1.043)$} \\                                                \hline                                                                                                                  \end{tabular}

\caption{Representation results on the other data sets. The tables show Song Year, Bike Sharing, and Pole from top to bottom. We tested (a) swapping the representations and re-learning on the last layer (\textbf{Fixed}), (b) initializing with the other's representation (\textbf{Initialized}), (c) and using the same fixed random representation for both (\textbf{Random}) and only learning the last layer. In general, $\ell_2$ underperformed HL-Gauss in all the three settings. Using HL-Gauss's fixed representation only made the performance of $\ell_2$ worse. As an initialization, it did not result in a considerable improvement.}
\label{tab:representation_yearpred}
\end{table*}

\begin{figure*}[!ht]
 \vspace{-0.1cm}
\centering
 \begin{subfigure}[b]{\figwidthtwo}
         \includegraphics[width=\textwidth]{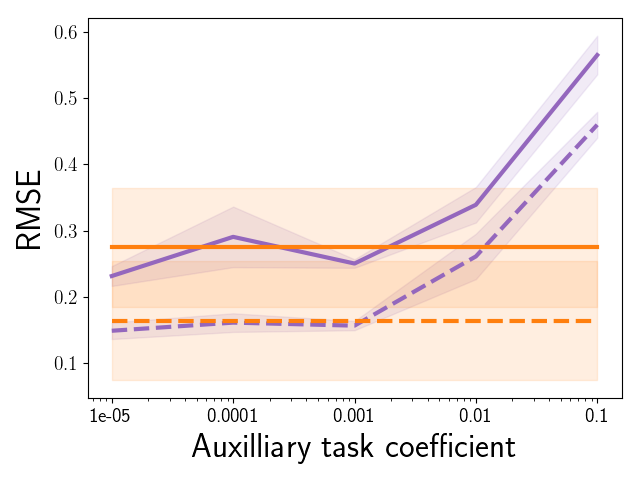}
\end{subfigure}
 \begin{subfigure}[b]{\figwidthtwo}
         \includegraphics[width=\textwidth]{figures/polyc,ctscan,mae.png}
\end{subfigure}

 \begin{subfigure}[b]{\figwidthtwo}
         \includegraphics[width=\textwidth]{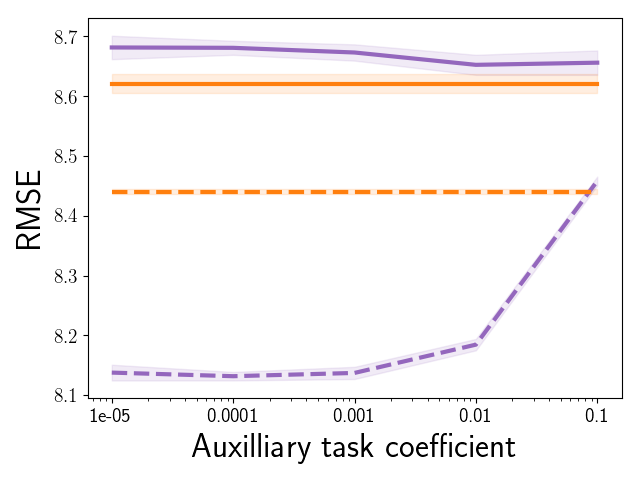}
\end{subfigure}
 \begin{subfigure}[b]{\figwidthtwo}
         \includegraphics[width=\textwidth]{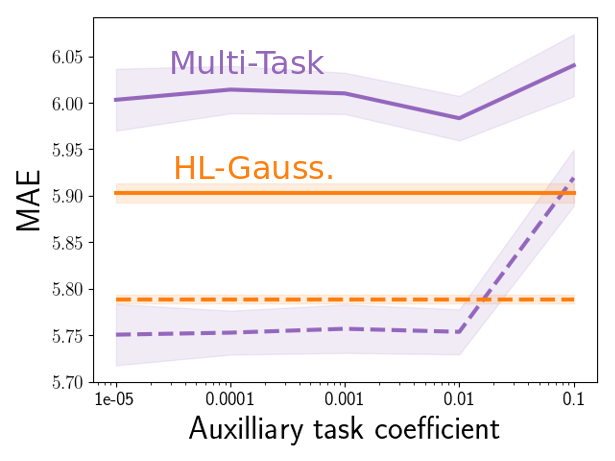}
\end{subfigure} 

 \begin{subfigure}[b]{\figwidthtwo}
         \includegraphics[width=\textwidth]{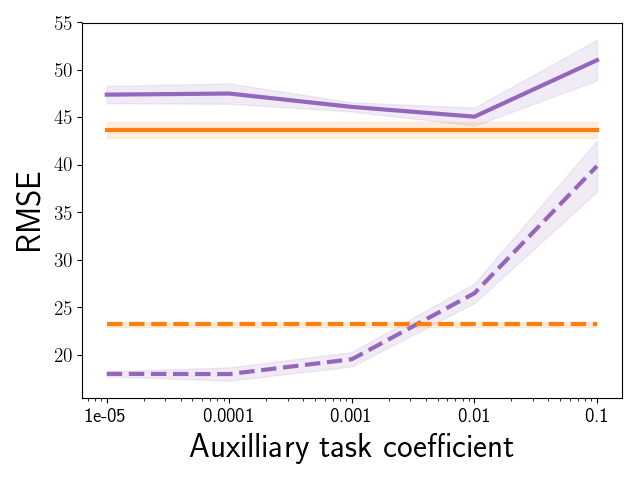}
\end{subfigure}
 \begin{subfigure}[b]{\figwidthtwo}
         \includegraphics[width=\textwidth]{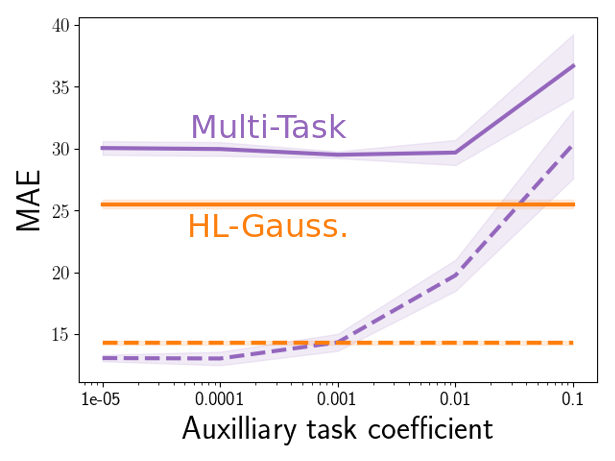}
\end{subfigure} 

 \begin{subfigure}[b]{\figwidthtwo}
         \includegraphics[width=\textwidth]{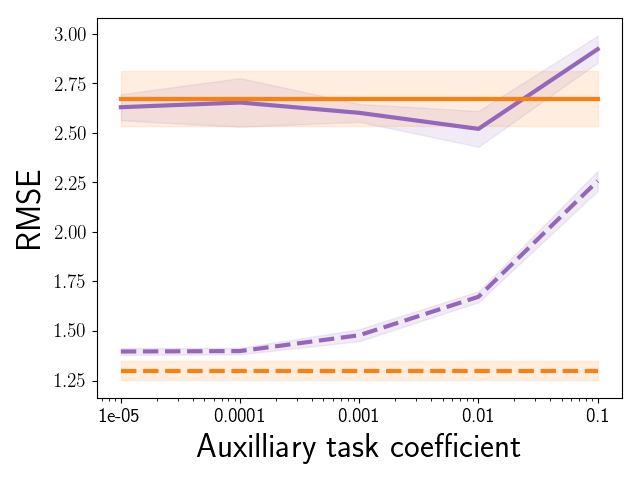}
\end{subfigure}
 \begin{subfigure}[b]{\figwidthtwo}
         \includegraphics[width=\textwidth]{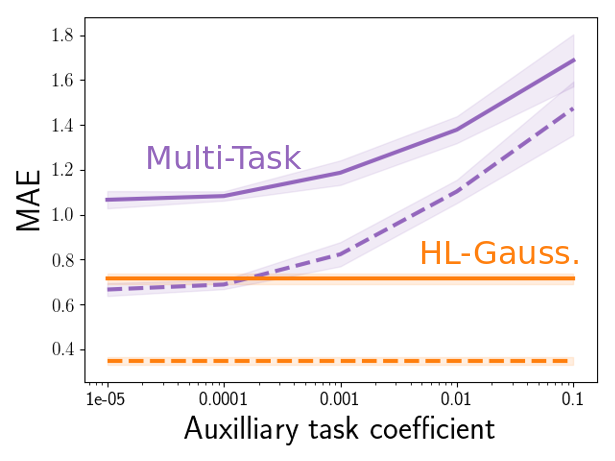}
\end{subfigure} 
\caption{Multi-Task Network results. The rows show CT Scan, Song Year, Bike Sharing, and Pole respectively. The loss function is the mean prediction's squared error plus the distribution prediction's KL-divergence multiplied by a coefficient. The horizontal axis shows the coefficient for KL-divergence. Dotted and solid lines show train and test errors respectively. There was not a substantial drop in the error rate when increasing the coefficient in the loss and the performance only became worse.}
\label{fig:multitask_ctscan_appdx}
\end{figure*}

\begin{figure*}[!ht]
\centering
 \begin{subfigure}[b]{\figwidthtwo}
         \includegraphics[width=\textwidth]{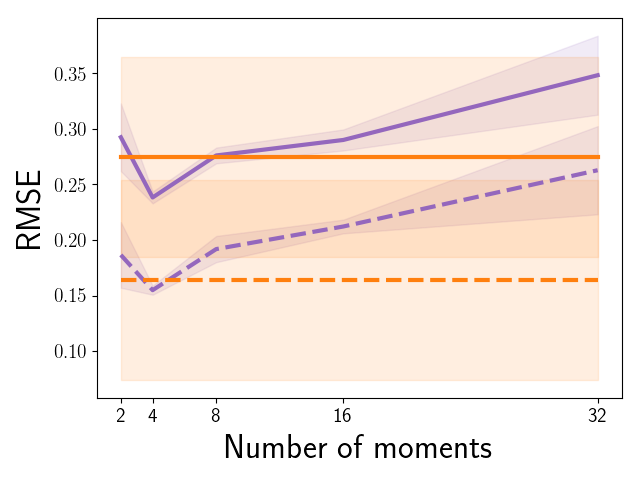}
\end{subfigure}
 \begin{subfigure}[b]{\figwidthtwo}
         \includegraphics[width=\textwidth]{figures/moments,ctscan,mae.png}
\end{subfigure}

  \begin{subfigure}[b]{\figwidthtwo}
         \includegraphics[width=\textwidth]{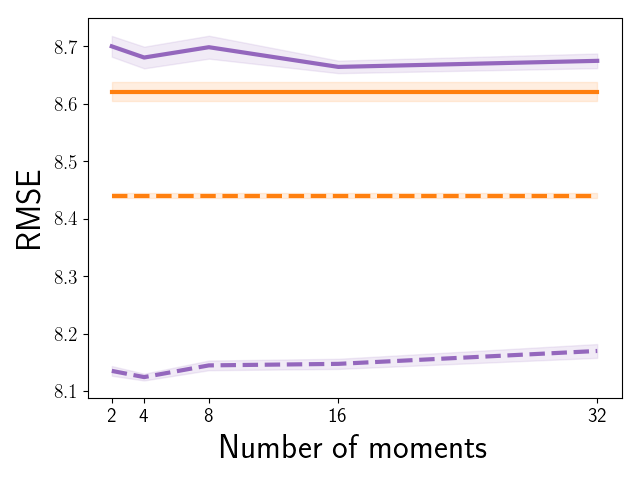}
\end{subfigure}
 \begin{subfigure}[b]{\figwidthtwo}
         \includegraphics[width=\textwidth]{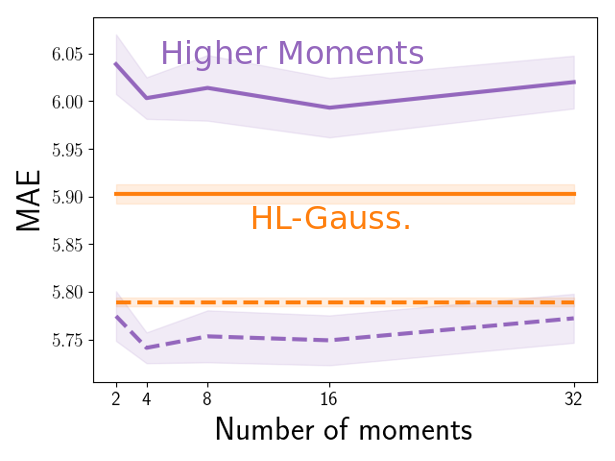}
\end{subfigure} 

  \begin{subfigure}[b]{\figwidthtwo}
         \includegraphics[width=\textwidth]{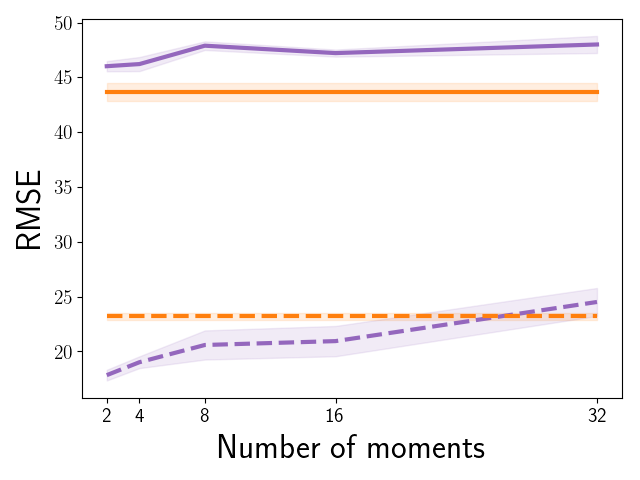}
\end{subfigure}
 \begin{subfigure}[b]{\figwidthtwo}
         \includegraphics[width=\textwidth]{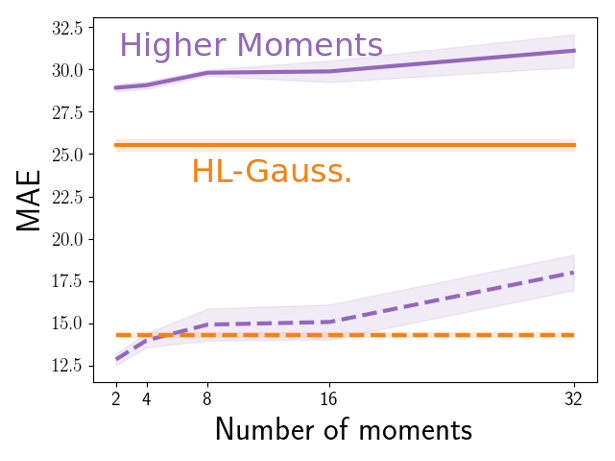}
\end{subfigure}

 \begin{subfigure}[b]{\figwidthtwo}
         \includegraphics[width=\textwidth]{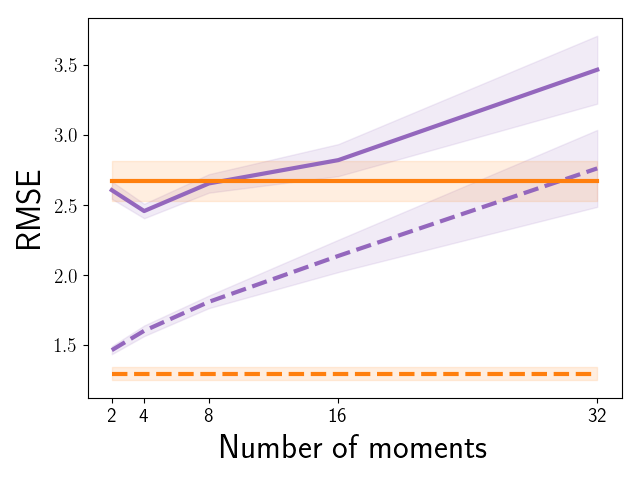}
\end{subfigure}
 \begin{subfigure}[b]{\figwidthtwo}
         \includegraphics[width=\textwidth]{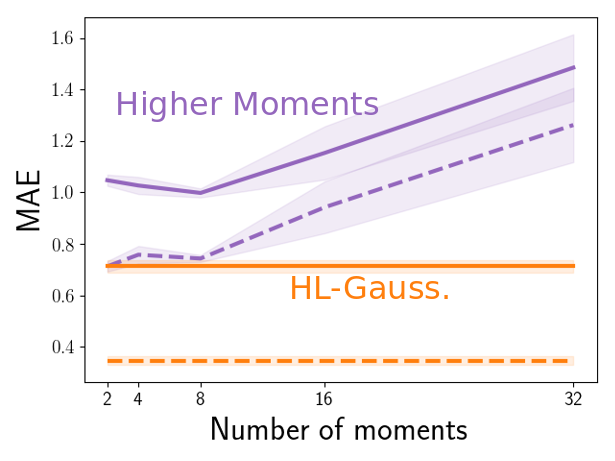}
\end{subfigure}
\caption{Higher Moments Network results. The rows show CT Scan, Song Year, Bike Sharing, and Pole respectively. The X-axis shows the number of moments (including the mean). Dotted and solid lines show train and test errors respectively. There was no substantial decrease in error when predicting higher moments. In some cases there was a slight reduction in error but the new model still performed worse than HL-Gauss in terms of Test MAE.}
\label{fig:moments_ctscan_appdx}
\end{figure*}

\clearpage
\subsection{Optimization Properties}

\begin{figure*}[!ht]
\centering
 \begin{subfigure}[b]{\figwidththree}
         \includegraphics[width=\textwidth]{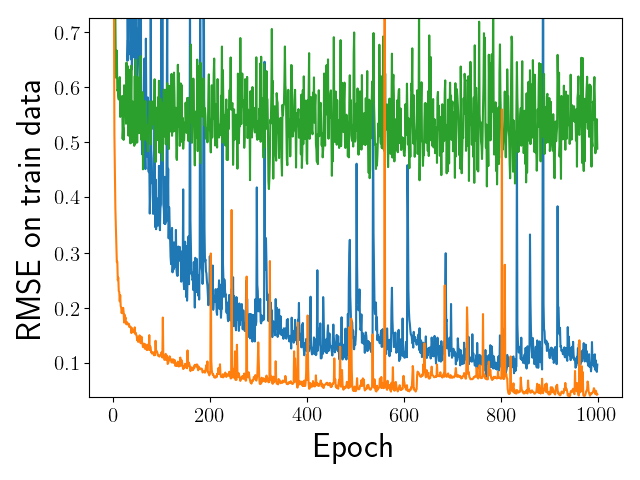}
\end{subfigure}
 \begin{subfigure}[b]{\figwidththree}
         \includegraphics[width=\textwidth]{figures/conv,ctscan,train_errors,mae.png}
\end{subfigure} 
 \begin{subfigure}[b]{\figwidththree}
         \includegraphics[width=\textwidth]{figures/conv,ctscan,gradients.png}
\end{subfigure}

 \begin{subfigure}[b]{\figwidththree}
         \includegraphics[width=\textwidth]{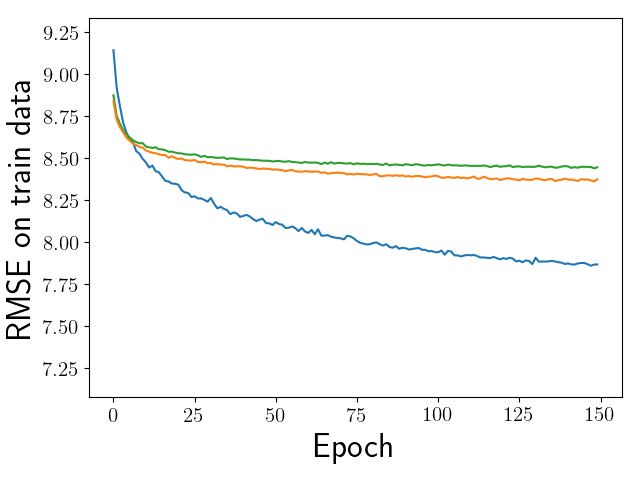}
\end{subfigure}
 \begin{subfigure}[b]{\figwidththree}
         \includegraphics[width=\textwidth]{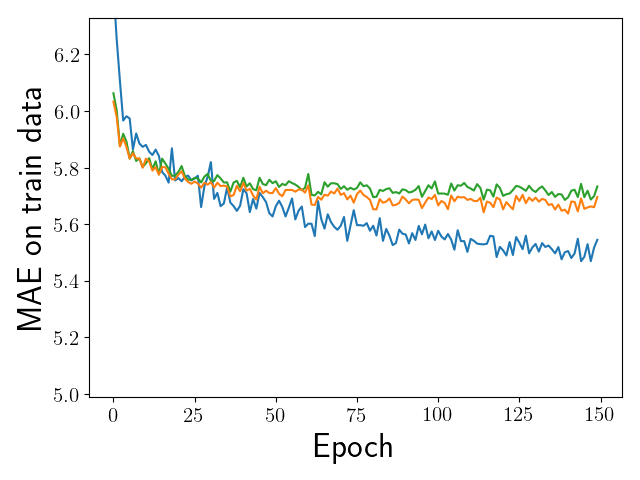}
\end{subfigure} 
 \begin{subfigure}[b]{\figwidththree}
         \includegraphics[width=\textwidth]{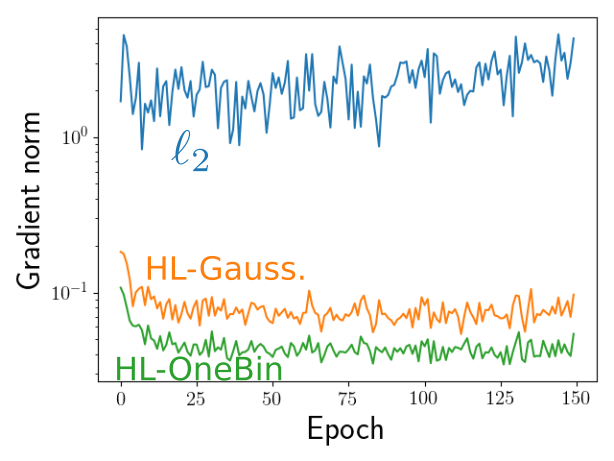}
\end{subfigure} 

 \begin{subfigure}[b]{\figwidththree}
         \includegraphics[width=\textwidth]{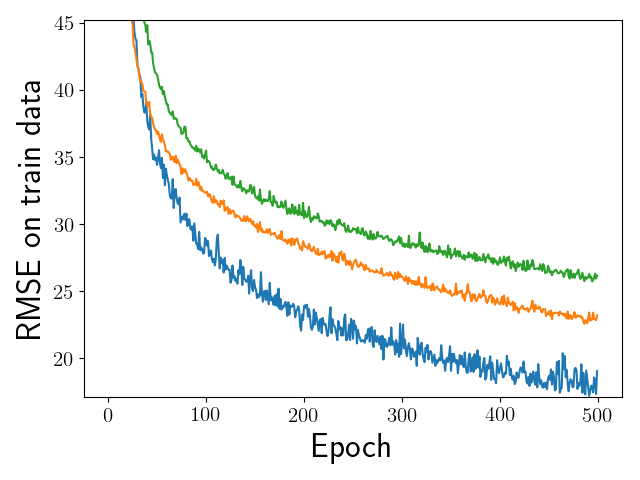}
\end{subfigure}
 \begin{subfigure}[b]{\figwidththree}
         \includegraphics[width=\textwidth]{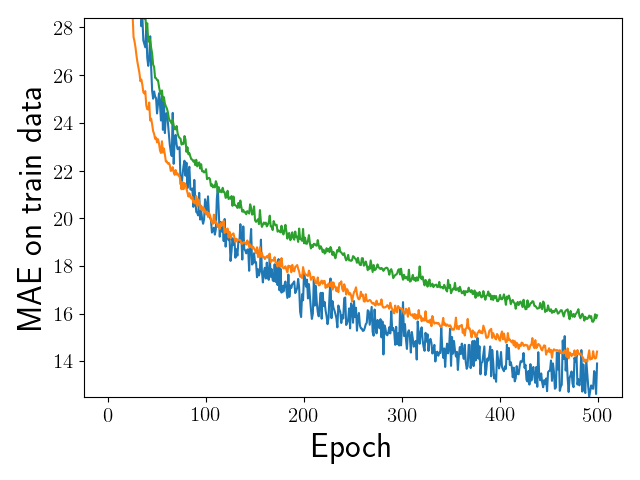}
\end{subfigure} 
 \begin{subfigure}[b]{\figwidththree}
         \includegraphics[width=\textwidth]{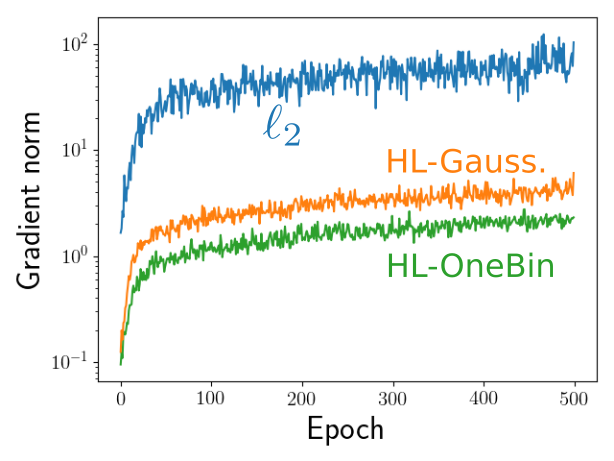}
\end{subfigure} 

 \begin{subfigure}[b]{\figwidththree}
         \includegraphics[width=\textwidth]{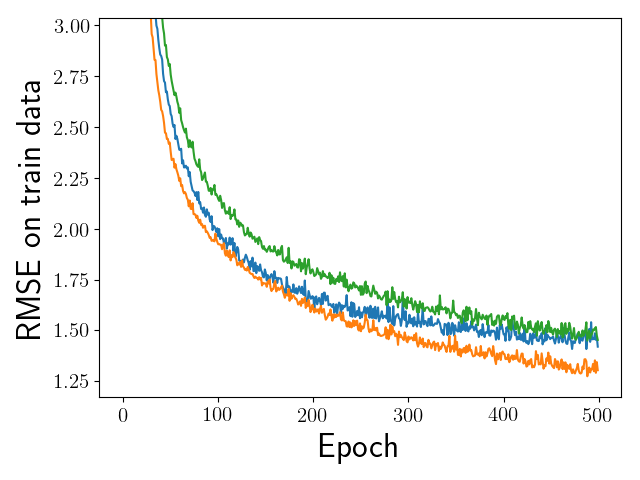}
\end{subfigure}
 \begin{subfigure}[b]{\figwidththree}
         \includegraphics[width=\textwidth]{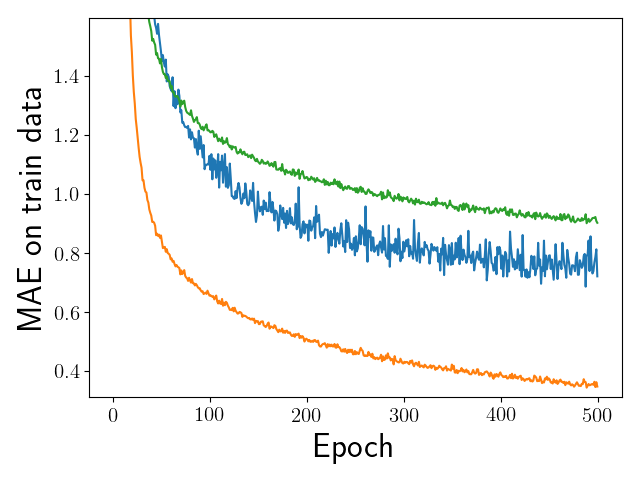}
\end{subfigure} 
 \begin{subfigure}[b]{\figwidththree}
         \includegraphics[width=\textwidth]{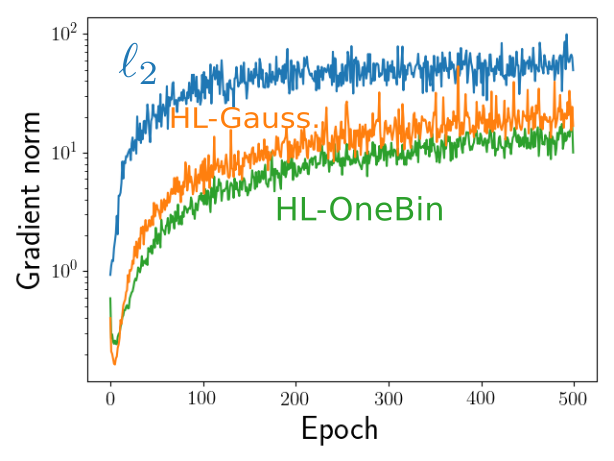}
\end{subfigure} 
\caption{Training process. The rows show CT Scan, Song Year, Bike Sharing, and Pole respectively. A logarithmic scale is used in the Y-axis of the rightmost plot. In all cases, training with HL-Gauss resulted in initial fast reduction in train MAE compared to $\ell_2$. The gradient norm of $\ell_2$ was highly varying through the training.}
\label{fig:conv_ctscan_appdx}
\end{figure*}

\begin{figure*}[!ht]
\centering
 \begin{subfigure}[b]{\figwidthtwo}
         \includegraphics[width=\textwidth]{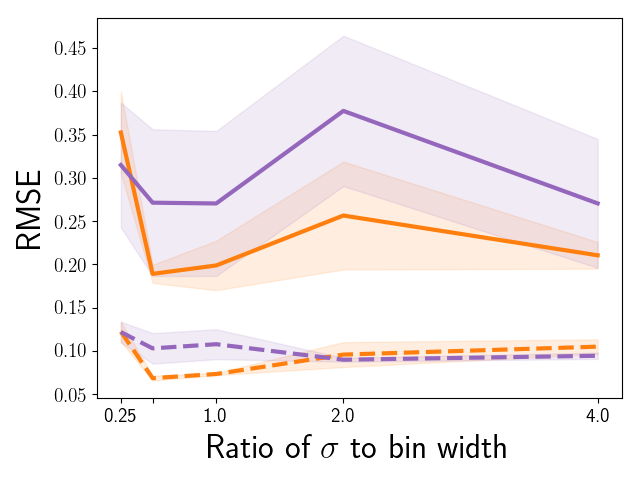}
\end{subfigure}
 \begin{subfigure}[b]{\figwidthtwo}
         \includegraphics[width=\textwidth]{figures/annealing,ctscan,mae.png}
\end{subfigure}

 \begin{subfigure}[b]{\figwidthtwo}
         \includegraphics[width=\textwidth]{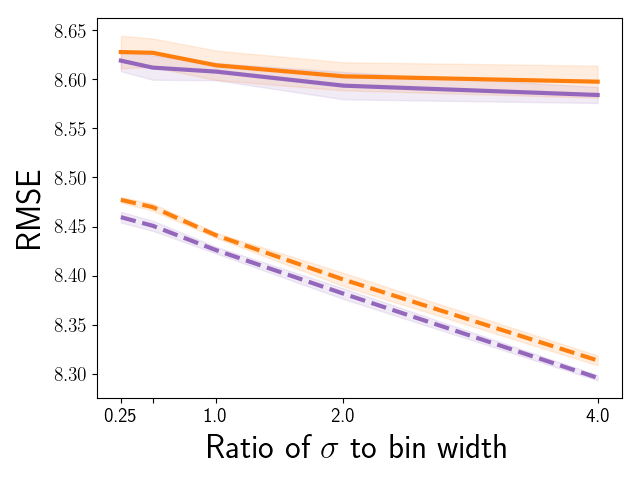}
\end{subfigure}
 \begin{subfigure}[b]{\figwidthtwo}
         \includegraphics[width=\textwidth]{figures/annealing,yearpred,mae.png}
\end{subfigure}

 \begin{subfigure}[b]{\figwidthtwo}
         \includegraphics[width=\textwidth]{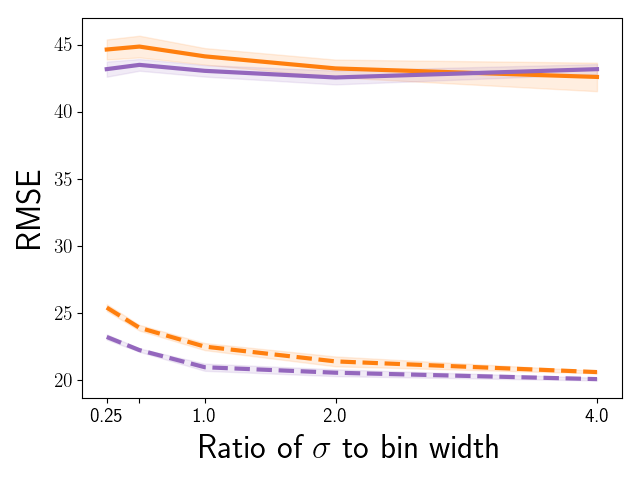}
\end{subfigure}
 \begin{subfigure}[b]{\figwidthtwo}
         \includegraphics[width=\textwidth]{figures/annealing,bike,mae.png}
\end{subfigure}

 \begin{subfigure}[b]{\figwidthtwo}
         \includegraphics[width=\textwidth]{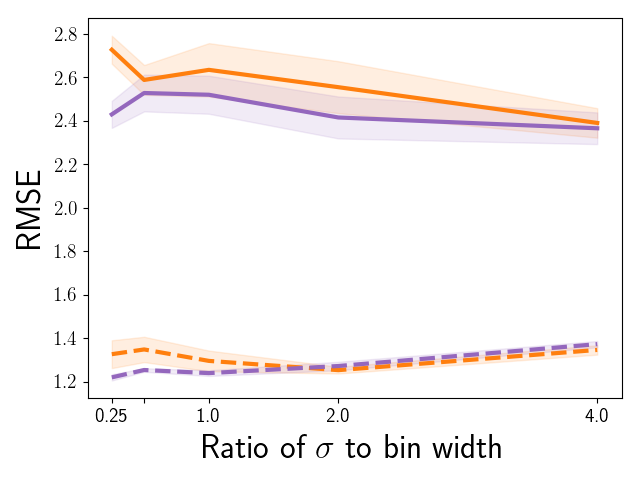}
\end{subfigure}
 \begin{subfigure}[b]{\figwidthtwo}
         \includegraphics[width=\textwidth]{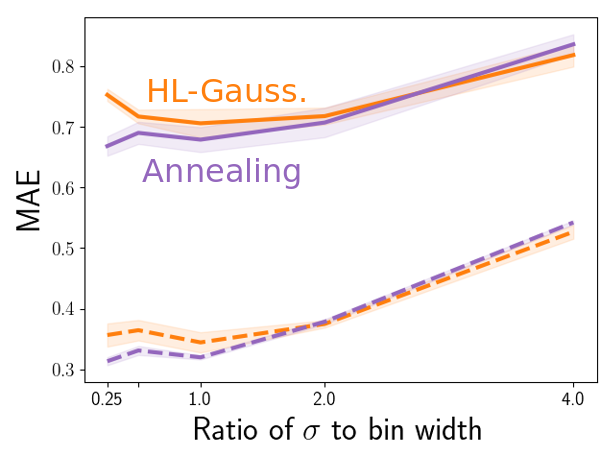}
\end{subfigure} 
\caption{Annealing results. The rows show CT Scan, Song Year, Bike Sharing, and Pole respectively. Dotted and solid lines show train and test errors. On three data sets, there was a small benefit in starting with a higher $\sigma$ and reducing it through the training.}
\label{fig:anneal_ctscan_appdx}
\end{figure*}

\clearpage
\subsection{Robustness to Corrupted Targets}

\begin{figure*}[!ht]
\centering
 \begin{subfigure}[b]{\figwidthtwo}
         \includegraphics[width=\textwidth]{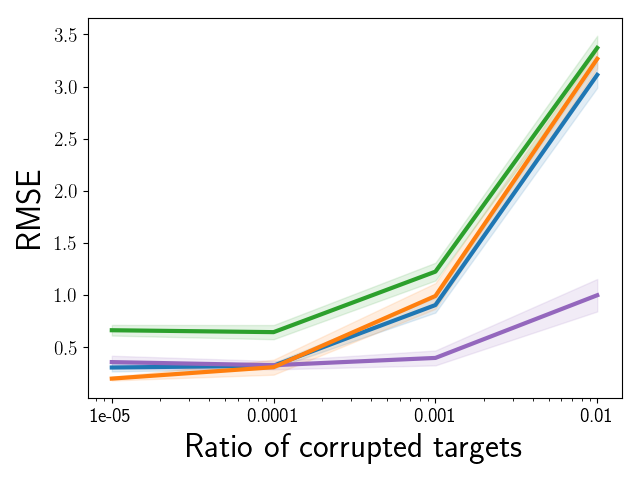}
\end{subfigure}
 \begin{subfigure}[b]{\figwidthtwo}
         \includegraphics[width=\textwidth]{figures/outliers,ctscan,mae.png}
\end{subfigure}

 \begin{subfigure}[b]{\figwidthtwo}
         \includegraphics[width=\textwidth]{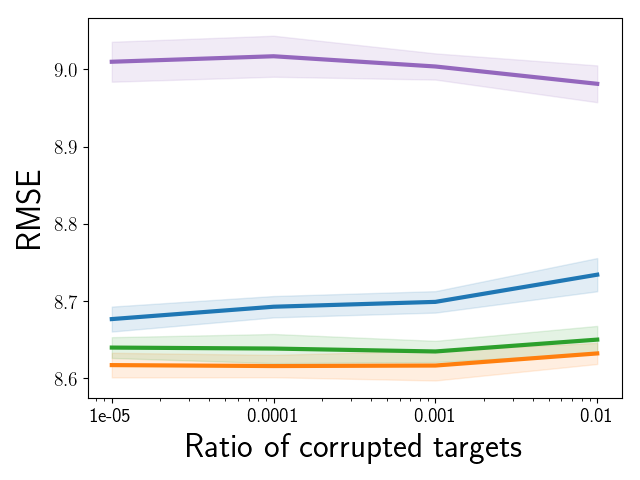}
\end{subfigure}
 \begin{subfigure}[b]{\figwidthtwo}
         \includegraphics[width=\textwidth]{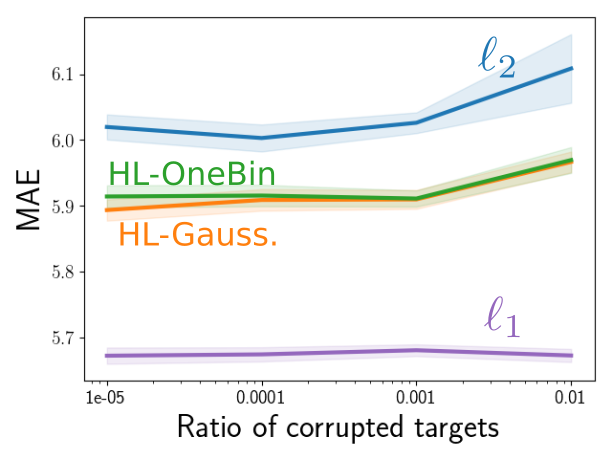}
\end{subfigure} 

 \begin{subfigure}[b]{\figwidthtwo}
         \includegraphics[width=\textwidth]{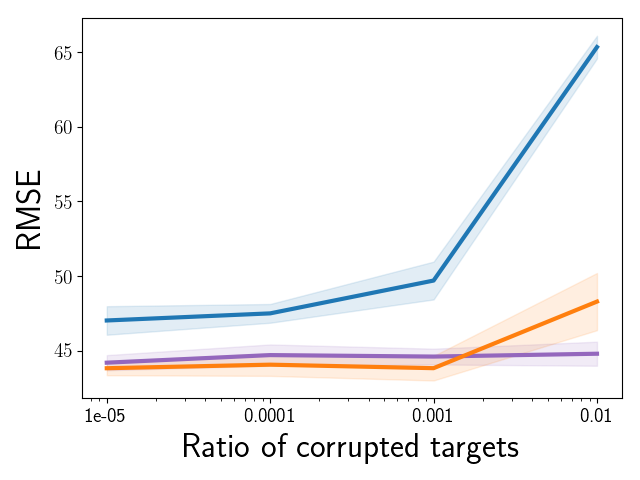}
\end{subfigure}
 \begin{subfigure}[b]{\figwidthtwo}
         \includegraphics[width=\textwidth]{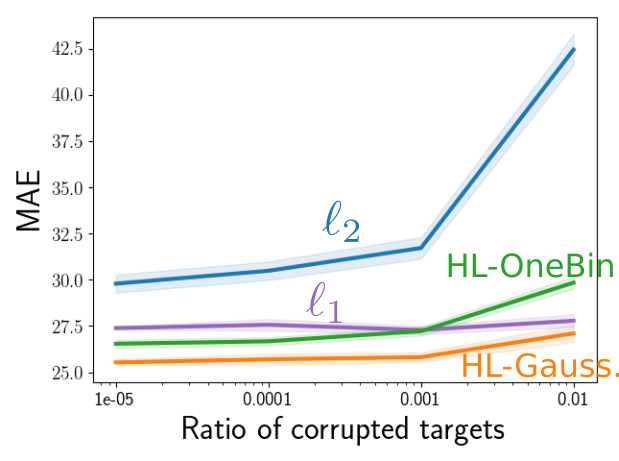}
\end{subfigure} 

 \begin{subfigure}[b]{\figwidthtwo}
         \includegraphics[width=\textwidth]{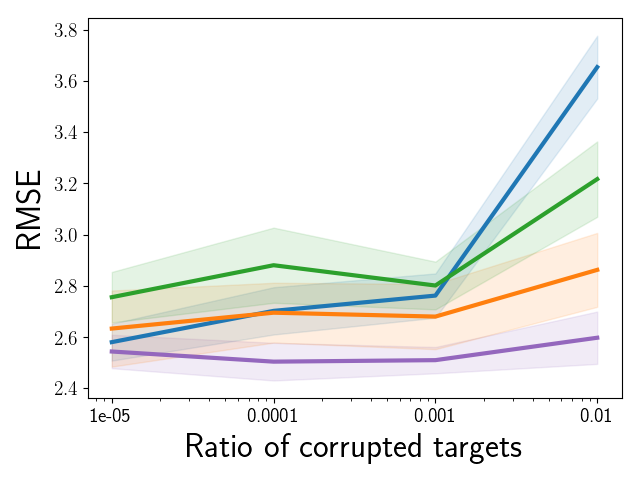}
\end{subfigure}
 \begin{subfigure}[b]{\figwidthtwo}
         \includegraphics[width=\textwidth]{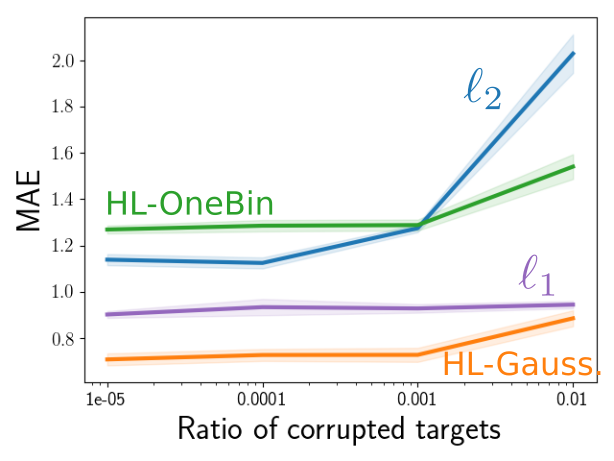}
\end{subfigure} 
\caption{Corrupted targets results. The rows show CT Scan, Song Year, Bike Sharing, and Pole respectively. The lines show test errors. The $\ell_1$ loss was robust to corrupted targets, and the $\ell_2$ loss was affected the most.}
\label{fig:outliers_ctscan_appdx}
\end{figure*}

\clearpage
\subsection{Sensitivity to Input Perturbations}

\begin{figure*}[!ht]
\centering
 \begin{subfigure}[b]{\figwidthtwo}
         \includegraphics[width=\textwidth]{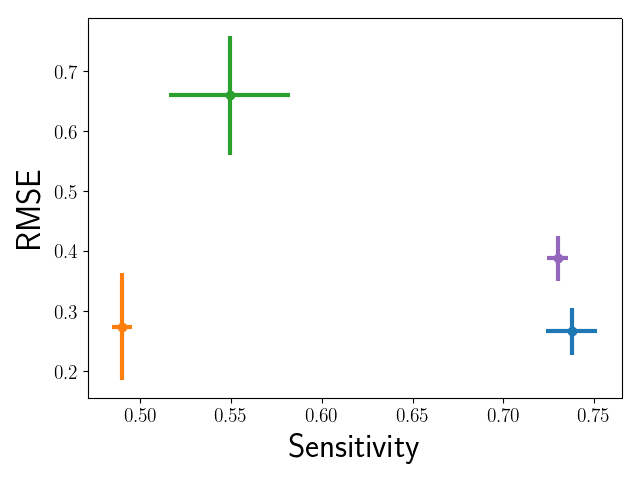}
\end{subfigure}
 \begin{subfigure}[b]{\figwidthtwo}
         \includegraphics[width=\textwidth]{figures/sensitivity,ctscan,mae.png}
\end{subfigure}

 \begin{subfigure}[b]{\figwidthtwo}
         \includegraphics[width=\textwidth]{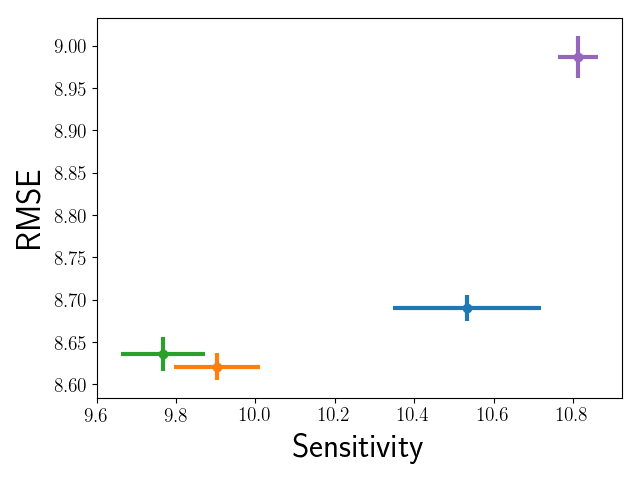}
\end{subfigure}
 \begin{subfigure}[b]{\figwidthtwo}
         \includegraphics[width=\textwidth]{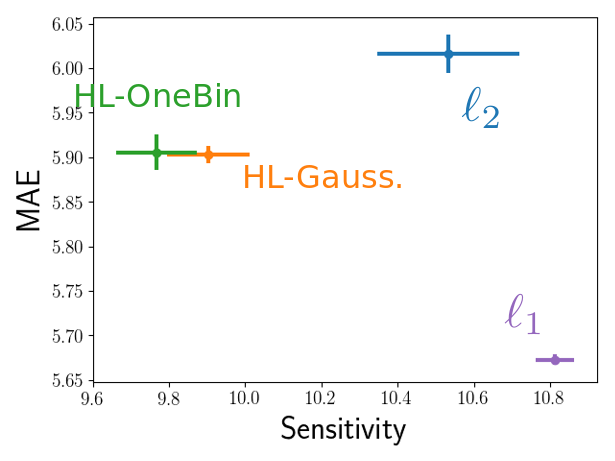}
\end{subfigure} 

 \begin{subfigure}[b]{\figwidthtwo}
         \includegraphics[width=\textwidth]{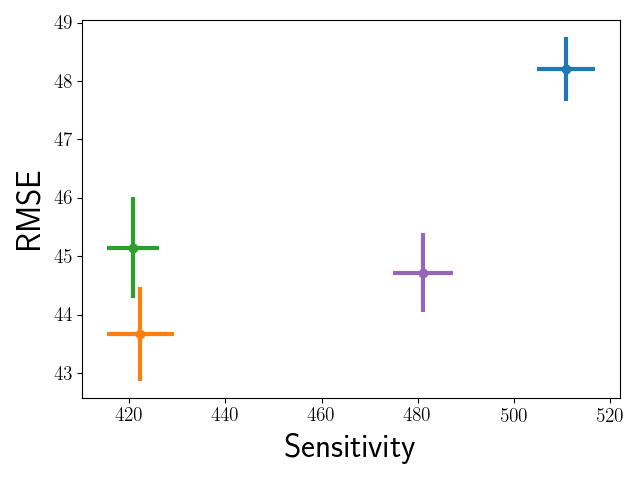}
\end{subfigure}
 \begin{subfigure}[b]{\figwidthtwo}
         \includegraphics[width=\textwidth]{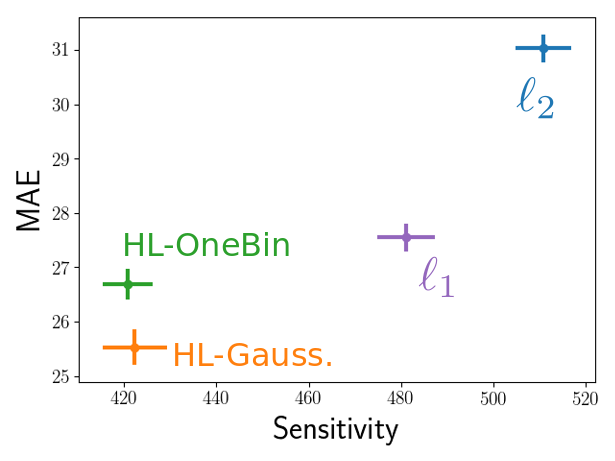}
\end{subfigure} 

 \begin{subfigure}[b]{\figwidthtwo}
         \includegraphics[width=\textwidth]{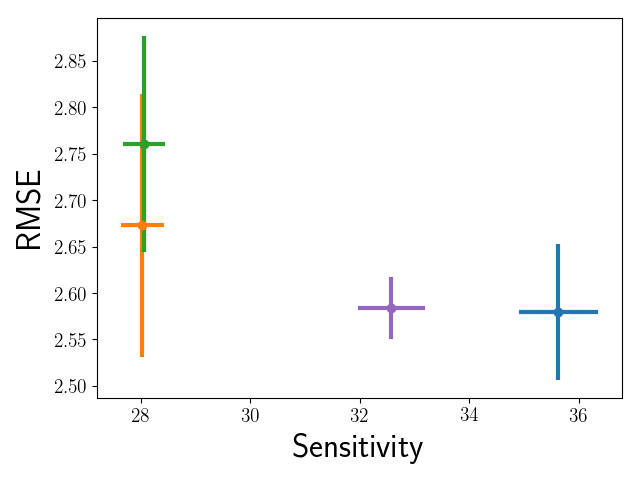}
\end{subfigure}
 \begin{subfigure}[b]{\figwidthtwo}
         \includegraphics[width=\textwidth]{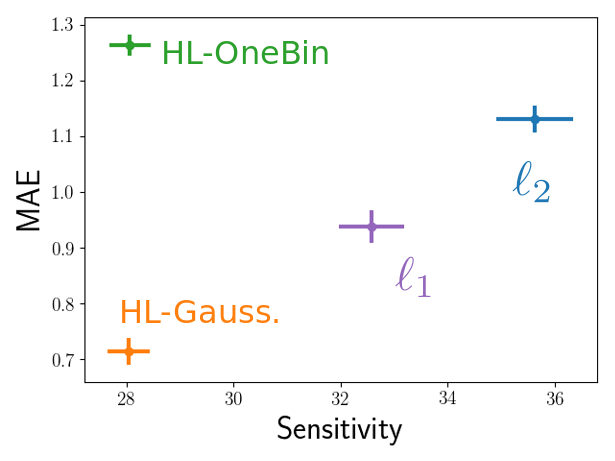}
\end{subfigure} 
\caption{Sensitivity results. The rows show CT Scan, Song Year, Bike Sharing, and Pole respectively. The horizontal axis shows the sensitivity of the model's output to input perturbations (left means less sensitive) and the vertical axis shows the test error (lower is better). HL-Gauss showed less sensitivity and lower test errors than $\ell_1$ and $\ell_2$.}
\label{fig:sensitivity_ctscan_appdx}
\end{figure*}
 
\vskip 0.2in

\end{document}